\newcolumntype{C}[1]{>{\centering\arraybackslash}p{#1}} 
\theoremstyle{plain}
\newtheorem{theorem}{Theorem}
\newtheorem{lemma}[theorem]{Lemma}
\newtheorem{assumption}[theorem]{Assumption}
\theoremstyle{definition}
\newtheorem{example}[theorem]{Example}
\newtheorem{remark}[theorem]{Remark}
\newtheorem*{theorem*}{Theorem}
\newtheorem*{thm_convex}{Theorem \ref{thm:conv_as}}
\newtheorem*{thm_strong_convex}{Theorem \ref{thm:convergence_strong_convex}}
\newcommand{\E}{\mathbb{E}}
\newcommand{\F}{\mathcal{F}}
\newcommand{\R}{\mathbb{R}}
\newcommand{\pmi}{p_{\min}}
\title{Asynchronous Federated Stochastic Optimization for Heterogeneous Objectives Under Arbitrary Delays}
\author{%
Charikleia Iakovidou$^{1}$ \quad Kibaek Kim$^{1}$ \\
$^1$MCS, Argonne National Laboratory \\
\textsc{\{ciakovidou,kimk\}@anl.gov}
}
\begin{document}

\maketitle

\begin{abstract}
 Federated learning (FL) was recently proposed to securely train models with data held over multiple locations (``clients'') under the coordination of a central server. Prolonged training times caused by slow clients may hinder the performance of FL; while asynchronous communication is a promising solution, highly heterogeneous client response times under non-IID local data may introduce significant bias to the global model, particularly in client-driven setups where sampling is infeasible. To address this issue, we propose \underline{A}synch\underline{R}onous \underline{E}xact \underline{A}veraging (\textsc{AREA}), a stochastic (sub)gradient method that leverages asynchrony for scalability and uses client-side memory to correct the bias induced by uneven participation, without client sampling or prior knowledge of client latencies. \textsc{AREA} communicates model residuals rather than gradient estimates, reducing exposure to gradient inversion, and is compatible with secure aggregation. Under standard assumptions and unbounded, heterogeneous delays with finite mean, AREA achieves optimal convergence rates: $\mathcal{O}(1/K)$ in the strongly convex, smooth regime and $\mathcal{O}(1/\sqrt{K})$ in the convex, nonsmooth regime. For strongly convex, smooth objectives, we demonstrate theoretically and empirically that AREA accommodates larger step sizes than existing methods, enabling fast convergence without adversely impacting model generalization. In the convex, nonsmooth setting, to our knowledge we are the first to obtain rates that scale with the average client update frequency rather than the minimum or maximum, indicating increased robustness to outliers.
\end{abstract}

\section{Introduction}
	\label{sec:intro}
Federated learning (FL) is a distributed machine learning paradigm that trains a shared model across multiple decentralized data sources, without the communication of data points \cite{konecny_federated_2015,fedavg}. A common formulation of FL involves a finite-sum optimization problem of the form
\begin{equation}
\label{eq:prob_gen}
\min_{x \in \R^d} \quad f(x) = \frac{1}{n}\sum_{i=1}^n f_i(x),
\end{equation}
where the $i^{th}$ data-owning source (client $i \in \{1,...,n\}$) has private access to the function $f_i:\R^d \rightarrow \R$ representing the loss over its local data, and  $x \in \R^d$ is the vector of model parameters. Throughout this work, we will refer to $f$  and the functions $f_i$ for $i \in \{1,...,n\}$ as the global and local loss functions, respectively.

\begin{figure}
    \centering
    \begin{tikzpicture}
\begin{footnotesize}

    % Define four green circle nodes aligned horizontally with the server
    \node[draw, fill=black!20,  minimum size=.5cm, label=above:server] (circle1) at (0,0) {};
    \node[draw, fill=black!20, minimum size=.5cm, label=above:{$w_3 \gets w_0 - \alpha \textcolor{blue}{g_0^1}$}] (circle2) at (2,0) {};
    \node[draw, fill=black!20,minimum size=.5cm, label=above:{$w_3 \gets w_0 - \alpha (\textcolor{blue}{g_0^1}+\textcolor{blue}{g_1^1})$}] (circle3) at (5.1,0) {};
    \node[draw, fill=black!20, minimum size=.5cm, label=above:{$w_3 \gets w_0 - \alpha (\textcolor{blue}{g_0^1}+\textcolor{blue}{g_1^1}+\textcolor{red}{g_0^2})$}] (circle4) at (9,0) {};

    % Define blue circles on the same level as Client 1
    \node[draw, fill=blue!50, circle, minimum size=.5cm, label=below:{\textcolor{blue}{client 1}}] (blue1) at (0.8,-1.5) {};
    \node[draw, fill=blue!50, circle, minimum size=.5cm] (blue2) at (3,-1.5) {};
    \node[draw, fill=blue!50, circle, minimum size=.5cm] (blue3) at (6.5,-1.5) {};

    % Define a red circle on the same level as Client 2
    \node[draw, fill=red!50, circle, minimum size=.5cm, label=below:{\textcolor{red}{client 2}}] (red1) at (0.8,-3) {};

    % Add arrows
    \draw[->] (circle1) -- (circle2); % Arrow from first green circle to second green circle
    \draw[->] (circle2) -- (circle3); % Arrow from first green circle to second green circle
    \draw[->] (circle3) -- (circle4); % Arrow from first green circle to second green circle
    % \draw[->] (blue1) -- node[midway,left, text=blue] {$g_1^0$} ($(circle1)!0.5!(circle2)$); % Arrow to midpoint
    \draw[->] (circle1) to[out=-90, in=-180] node[midway,  right] {$w_0$} (blue1);
      \draw[->] (circle2) to[out=-90, in=-180] node[midway,  right] {$w_1$} (blue2);
    \draw[->] (circle3) to[out=-90, in=-180] node[midway,  right] {$w_2$} (blue3);
    \draw[->] (circle1) to[out=-90, in=-180] node[near end,  right] {$w_0$} (red1);
\draw[->] (blue1) to[out=0, in=-180] node[midway,  left] {\textcolor{blue}{$g_0^1$}} (circle2);
\draw[->] (blue2) to[out=0, in=-180] node[midway,  left] {\textcolor{blue}{$g_1^1$}} (circle3);
    \draw[->] (circle2) to[out=-90, in=-180] (blue2);
    \draw[->] (circle3) to[out=-90, in=-180] (blue3);
    \draw[->] (red1) to[out=0, in=-90] node[very near start, above] {\textcolor{red}{$g_0^2$}} (circle4);
\end{footnotesize}
\end{tikzpicture}
    \caption{Asynchronous FL with non-uniform client update frequencies.}
    \label{fig:hetero_clients}
\end{figure}
During a typical FL algorithm round, the server selects a subset of clients and sends them the global model. The selected clients independently optimize the model parameters using their local data, and share their estimates with the server. After all selected clients have returned their updates, the server produces the global model via some aggregation process (e.g., weighted averaging). As a result, the convergence of FL is limited by the speed of the slowest client (``straggler''). The adoption of asynchronous communication is a promising solution to this problem: it allows the server to perform global updates without waiting for the slowest client, and better utilizes resources by reducing idle time for fast clients. However, when local data is non-IID and client response times vary widely, the asynchronous aggregation of client updates may introduce significant bias to the global model. A simple example is illustrated in Fig. \ref{fig:hetero_clients}: under a naive, asynchronous aggregation protocol where the server updates the global model upon receipt of any client update, the global model is disproportionately influenced by the fastest client. Extrapolating to scenarios with more than two clients, it becomes clear that a client's impact on the global model can be significantly influenced by its response latency, implicitly favoring faster, frequently communicating clients. This client-dependent deviation away from the optimal global model is distinct from the well-documented phenomenon of  ``client drift'' \cite{scaffold, fedlin}, caused by the execution of multiple local stochastic gradient descent (SGD) steps at each client between communications with the server. 

Asynchronous SGD \cite{lian_asynchronous_2019,stich_error-feedback_2021} does not suffer from client drift due to restricting clients to a single local SGD step between communications with the server. Nevertheless, convergence to a  solution of Problem \ref{eq:prob_gen} generally requires the adoption of an unbiased client-sampling scheme (e.g., uniform over clients, or  proportional to their importance in the global objective) \cite{koloskova2022sharper, even2024asynchronous}; otherwise the convergence rate is dominated by a non-vanishing error term proportional to cross-client data heterogeneity \cite{mishchenko2022asynchronous}. Similar negative results have been established for FL \cite{yang2022anarchic, fraboni_general_theory}. Alternatively,  the strong assumption of uniform client participation may be adopted to guarantee the convergence of asynchronous FL under non-IID data \cite{fedbuff, toghani2022unbounded, yang2022anarchic}.

Although client-sampling techniques effectively remove the bias caused by uneven client participation, they may require prior knowledge of client data or latency distributions, or rely on timely responses from sampled clients. They also impose substantial server-side orchestration, whereas in practical settings communication may be driven by clients \cite{yang2022anarchic}. This work addresses uneven participation bias in a fully asynchronous, heterogeneous, client-driven setting. Specifically, we propose and analyze \underline{A}synch\underline{R}onous \underline{E}xact \underline{A}veraging (\textsc{AREA}), a new asynchronous FL method that provably corrects  uneven participation bias via a memory-based correction scheme, without client sampling or any prior knowledge of client response times. Although prior knowledge can be utilized to optimally tune the proposed method, it is not required to guarantee convergence to a solution of Problem \ref{eq:prob_gen}.

In addition, our guarantees do not require the strong assumption of a bounded maximum delay common in prior work (e.g.,  \cite{avdiukhin2021federated,stich_critical_2021,koloskova2022sharper,fedbuff}). Instead, we adopt mild assumptions on delay statistics (i.e., finite mean), which enables providing guarantees for the more realistic model of ``total asynchrony'' \cite{bertsekas2015parallel}, i.e., arbitrarily long delays between iterations. Methods analyzed under total asynchrony typically employ delay-adaptive step sizes \cite{koloskova2022sharper, mishchenko2022asynchronous,feyzmahdavian2023asynchronous}, which may substantially reduce the step size to accommodate long delays and hence result in slow convergence. In contrast, we use step size schedules that are independent of the per-iteration delays.

\paragraph{Contributions.} We propose \textsc{AREA}, a new asynchronous stochastic (sub)gradient method for FL  with non-IID data. \textsc{AREA} addresses stragglers and scalability via asynchronous communication and employs a memory-based correction mechanism that removes the bias induced by heterogeneous client update frequencies. Our contributions are:
\begin{itemize}
    \item     We introduce and analyze \textsc{AREA}, an asynchronous stochastic (sub)gradient method for FL with non-IID data. \textsc{AREA} is particularly well-suited for practical settings due to the following advantages: $i$) it is equipped with a memory-based scheme that corrects the bias induced by uneven client participation, without server-side coordination (e.g., sampling) or prior knowledge of client latencies; $ii$) it accommodates highly heterogeneous, client-driven communication patterns (e.g., ``anarchic FL'' \cite{yang2022anarchic}); $iii$) instead of local gradients, clients share with the server model residuals that are less susceptible to gradient inversion, enhancing privacy; moreover, \textsc{AREA} is compatible with secure aggregation; $iv$) \textsc{AREA} utilizes a flexible server aggregation protocol, that can be adapted to application demands without affecting the order of the convergence rate.
   
    \item We provide rigorous convergence guarantees for \textsc{AREA} for $i$) strongly convex, smooth objectives and $ii$) convex, Lipschitz (nonsmooth) objectives. In the strongly convex and smooth regime, we recover the optimal $\mathcal{O}(1/K)$ rate for the expected mean squared distance to optimality of the last iterate after $K$ iterations, under unbounded delays (with finite mean), without uniformly bounded gradients. We also show, theoretically and empirically, that \textsc{AREA} supports larger step sizes than existing  methods under comparable assumptions, and may achieve low generalization error faster as a result. 
    
    \item In the convex, nonsmooth regime, we are, to the best of our knowledge, the first to establish a convergence rate under asynchronous communication with unbounded delays and non-IID local data without the use client sampling. This rate depends on the average client frequency rather than the smallest or largest one, indicating  robustness to outliers. Moreover, our framework yields a closed-form expression for the optimal frequency of global updates at the server under a standard model of exponentially distributed delays.
\end{itemize}

For a comparison with prior work in the settings we study, see Tables \ref{tab:existing_strong} (strongly convex and smooth) and~\ref{tab:existing_not_strong} (convex and nonsmooth). With the exception of \cite{avdiukhin2021federated}, we compare to asynchronous SGD due to the scarcity of comparable FL results for convex objectives; under standard assumptions, multiple local steps affect constants (client drift error) but not the asymptotic order.

\textbf{Notation:} Throughout this work, the set of integers $\{1,...,n\}$ is denoted by $[n]$. We  use the notation $\F_k$ to refer to the $\sigma$-algebra containing the history of the algorithm up to and including the $k^{th}$ iteration. The subdifferential of the function $f:\R^d \rightarrow \R$ at $x \in \R^d$ is denoted by $\partial f(x)$. We will use
$I$ and $1_n$  to refer to the identity matrix and the vector of ones of size $n$, respectively, and the notation $\|\cdot\|$ for the $\ell_2$-norm. We denote the $i^{th}$ element of vector $u$ by $[u]_i$, and the element in the $i^{th}$ row and $j^{th}$ column of a matrix $M$ by $[M]_{ij}$. The transpose of a vector $v$ will be denoted by $v'$. For a set of vectors $\{v_1,...,v_n\} \in \R^d$ we  define the average vector $\bar{v} \triangleq \frac{1}{n}\sum_{i=1}^n v_i \in \R^d$. 

\begin{table}[t]
  \label{tab:existing_strong}
 \caption{Summary of asynchronous methods analyzed under strong convexity and smoothness. 
  We refer to the asynchronous version of \textsc{SGD} as \textsc{AsyncSGD}, and denote by $k$ and $K$ the global iteration count and the total number of iterations, respectively. For the first four rows, convergence rates are reported for the averaged iterates, while for the $5^{th}$ row (our work) the convergence rate is reported for the last iterate. Rates are stated in the original metrics (under the stated assumptions, function gap and squared distance are equivalent up to constants). For rates of the form $\mathcal{O}(e^{-cK} + \tfrac{\sigma^2}{K})$ for $c>0$ there is a trade-off between the two dominant terms, as the first decreases and the second increases with the step size. We use the term ``delay-dependent'', to refer to  non-decreasing step size schedules dependent on the maximum delay, and the term ``delay-adaptive'' to refer to non-decreasing step size schedules dependent on the per-iteration delays.}
\footnotesize
    \centering
\begin{tabular}{lcccc}
\toprule
\textbf{Reference} & \textbf{Data} & \textbf{Step} & \textbf{Rate}\\
\midrule
 \textsc{AsyncSGD } & \multirow{2}{*}{IID}  & \multirow{2}{*}{Bounded}  & \multirow{2}{*}{delay-dependent} & \multirow{2}{*}{$\mathcal{O}\left(e^{-cK} + \frac{\sigma^2}{K} \right)$}  \\
 Thm 16, \cite{stich_error-feedback_2021}&   & & & \\ 
\midrule
 \textsc{AsyncCommSGD } & \multirow{2}{*}{non-IID}  & \multirow{2}{*}{Bounded}  & \multirow{2}{*}{$\Theta(\tfrac{1}{k})$  (sync\tablefootnote{Clients reduce their step size according to a shared global clock.}) } & \multirow{2}{*}{$\mathcal{O}(\tfrac{\sigma^2}{K})$ }  \\
Thm 2.2, \cite{avdiukhin2021federated}&   & & & \\ 
\midrule
 \textsc{AsyncSGD } & \multirow{2}{*}{IID}  & \multirow{2}{*}{Unbounded}  & \multirow{2}{*}{delay-adaptive} & \multirow{2}{*}{$\mathcal{O}\left(e^{-cK} + \frac{\sigma^2}{K} \right)$}  \\
 Thm 3.2, \cite{mishchenko2022asynchronous}&   & & & \\ 
\midrule
 \textsc{AsyncSGD } & \multirow{2}{*}{IID}  & \multirow{2}{*}{Bounded}  & \multirow{2}{*}{delay-dependent} & \multirow{2}{*}{$\mathcal{O}\left(e^{-cK} + \frac{\sigma^2}{K} \right)$}  \\
Thm 16, \cite{feyzmahdavian2023asynchronous}&   & & & \\ 
\midrule
 \textsc{AREA} & \multirow{2}{*}{non-IID}  & \multirow{2}{*}{Unbounded}  & \multirow{2}{*}{ $\Theta(\tfrac{1}{k})$ (async\tablefootnote{Each client is assigned a step size from the server for their next training round when they receive the global model. This step size is kept constant until the next communication with the server, and clients do not share the server's global clock.})} & \multirow{2}{*}{$\mathcal{O}(\tfrac{\sigma^2}{K})$ }  \\
 Thm \ref{thm:convergence_strong_convex}, this work&   & & & \\ 
\bottomrule
\end{tabular}
\end{table}

\begin{table}[t]
 \caption{Summary of  asynchronous methods analyzed for convex and Lipschitz functions. 
   We refer to the asynchronous version of \textsc{SGD} as \textsc{AsyncSGD}, and denote by $K$ the total number of iterations. All convergence rates are reported for the function gap of the averaged iterates.
  }
  \label{tab:existing_not_strong}
\footnotesize
    \centering
\begin{tabular}{lccc}
\toprule
\textbf{Reference} & \textbf{Data} & \textbf{Client selection} & \textbf{Rate}\\
\midrule
 \textsc{AsyncSGD } & \multirow{2}{*}{IID}  & \multirow{2}{*}{no sampling}  & \multirow{2}{*}{$\mathcal{O}\left(1 / \sqrt{K}\right)$}  \\
 Thm 1, \cite{mishchenko2022asynchronous}&   & &  \\ 
\midrule
 \textsc{AsyncSGD } & \multirow{2}{*}{non-IID}  & \multirow{2}{*}{unbiased sampling}  & \multirow{2}{*}{$\mathcal{O}\left(1 / \sqrt{K}\right)$}  \\
Thm 1, \cite{even2024asynchronous}&   & & \\ 
\midrule
 \textsc{AREA} & \multirow{2}{*}{non-IID}  & \multirow{2}{*}{no sampling}   &  \multirow{2}{*}{$\mathcal{O}\left(1 / \sqrt{K}\right)$}  \\
 Thm \ref{thm:conv_as}, this work&   & &  \\ 
\bottomrule
\end{tabular}
\end{table}

\section{Related Work}
\label{sec:related}

\paragraph{Asynchronous FL.} There is a rich body of work on FL methods utilizing fully or partially asynchronous communication to alleviate straggler effects. A common strategy involves adapting the workload at each client to balance heterogeneous response times. However, this orchestration may come at the cost of weakened privacy guarantees due to sharing detailed device statistics with the server \cite{chen_fedsa_2021}, or limited scalability due to extensive record keeping \cite{wang2022asyncfeded} or the enforcement of partial synchronization \cite{li_fedcompass_2023}. Other partially asynchronous approaches allow the server to interrupt local training and request early, inexact updates, which may lead to additional systems complexity \cite{zakerinia_communication-efficient_2023,leconte_favas_2023}. Staleness-adaptive aggregation weights have been proposed for FL with IID local data \cite{fedasync}, but in the non-IID regime this approach may discount information from stragglers with unseen data distributions. 

Several FL methods readily accommodate asynchronous communication without additional server orchestration or adaptation. \textsc{FedBuff} \cite{fedbuff,toghani2022unbounded} aggregates stale updates indiscriminately and allows clients to work at their own pace, but requires uniform client sampling to guarantee convergence to a solution of Problem \ref{eq:prob_gen}. Asynchronous FL with non-uniform client participation has been studied in \cite{yang2022anarchic} and \cite{fraboni_general_theory} for smooth, nonconvex and smooth, convex objectives, respectively; in both cases, the convergence rate is dominated by a non-vanishing error term proportional to data heterogeneity, preventing convergence to a solution of Problem \ref{eq:prob_gen}. Conversely, \textsc{AsyncCommSGD} \cite{avdiukhin2021federated} achieves asymptotic convergence under heterogeneous client delays, but assumes a shared global clock on both server and clients, complicating FL deployment. Closest to our work are asynchronous FL methods storing corrective terms to offset participation bias \cite{yang2022anarchic,wang_tackling_nodate}; however, these require the server to store one full-model-size vector for each client, which is feasible only in small cross-silo setups. Finally, all aforementioned studies require a strict bound on the maximum staleness, yielding overly pessimistic guarantees; more importantly, in realistic scenarios this assumption is often violated. 

\textbf{Asynchronous SGD.}  Asynchronous \textsc{SGD} and its variants are the cornerstones of large-scale parallel and distributed optimization \cite{tsitsiklis_distributed_1986, ram_async_gossip,dean2012large, recht2011hogwild, peng_arock_2016}. The convergence properties of asynchronous \textsc{SGD} have been extensively studied both under IID \cite{lian_asynchronous_2019,stich2021critical,stich_error-feedback_2021,mishchenko2022asynchronous,feyzmahdavian2023asynchronous} and non-IID local data \cite{mishchenko2022asynchronous,
koloskova2022sharper,even2024asynchronous}, i.e., for homogeneous and heterogeneous functions $f_i$ in Problem \ref{eq:prob_gen}, respectively. Under heterogeneous local functions, convergence to a solution of Problem \ref{eq:prob_gen} can only be guaranteed if the server adopts an unbiased client sampling scheme, such as uniform \cite{koloskova2022sharper}, or according to the client's importance in the global objective \cite{even2024asynchronous}. Aggregating client updates indiscriminately under biased participation can speed up the initial progress of asynchronous \textsc{SGD} \cite{mishchenko2022asynchronous}; however, as in the case of asynchronous FL, the convergence rate is dominated by a non-vanishing error term proportional to function heterogeneity.

\section{\textsc{AREA}: An Asynchronous Method for Federated Stochastic Optimization}

We propose a new asynchronous stochastic (sub)gradient algorithm,  
\textsc{AREA}, to solve Problem~\ref{eq:prob_gen} in a federated manner under arbitrary delays. We assume that client $i \in [n]$ can obtain only a stochastic approximation of the true (sub)gradient of $f_i$ (e.g., stochatic (sub)gradient, mini-batch), satisfying the following standard assumption. 
\begin{assumption}
    \label{assum:grad_stoch} Let $d_i(x)$ be the  (sub)gradient  of $f_i$ at $x \in \R^d$, and let $g_i(x)$ be a stochastic approximation of $d_i(x)$. Then for all $i \in [n]$ and some $\sigma>0$, we have: 
    \begin{align}
        \E[ g_i(x) - d_i(x)|x] =  0 ,  \quad \E[\| g_i(x) - d_i(x)\|^2 |x] \leq \sigma^2
    \end{align}
   where the expectation is taken over all possible realizations of $g_i(x)$.
\end{assumption}
\begin{algorithm}
   \caption{\textsc{AREA} (server)}  \label{alg:area_server}
\begin{algorithmic}
\State \textbf{Initialization:} initialize model $x_{s,0} \in \R^d$, aggregator $u_{s,0} = 0 \in \R^d$, step size $\alpha_0 > 0$; broadcast $x_{s,0}$ and $\alpha_0$ to all clients $i \in [n]$
\For{$k=1,2,...,K$}
\If{a request from client $i$ is received}
\State receive $m_i$ from client $i$
\State update $u_{s,k} \gets u_{s,k-1} + \frac{1}{n}m_i$
\State send $x_{s,k-1}$ and $\alpha_k$ to client $i$
\ElsIf{the aggregation criterion is satisfied}
 \State update $x_{s,k} \gets x_{s,k-1} + u_{s,k-1}$ and $u_{s,k} \gets 0$
\EndIf
\EndFor
\State  send stopping signal to all clients
\end{algorithmic}
\end{algorithm}
The procedure of \textsc{AREA} at the server is summarized in Algorithm \ref{alg:area_server}. The server maintains the global model $x_s \in \R^d$ and an aggregator $u_s \in \R^d$, and uses a private aggregation criterion to determine when to update the global model. The server also tracks the global clock (iteration count $k$) and uses it to generate an appropriate step size sequence $\{\alpha_k\}$  to assign to clients. At initialization,  the server sets $u_{s,0}=0$, and initializes $x_{s,0}$ and $\alpha_0$, and broadcasts them to all clients. Whenever an update $m_i \in \R^d$ (to be defined later) arrives from client $i \in [n]$, the server performs the following steps: $i$) scales $m_i$ with $1/n$ and accumulates it into $u_s$, $ii$) sends the current versions of $x_s$ and $\alpha$ to client~$i$, and $iii$) increments the global iteration counter by $1$. Alternatively, if the aggregation criterion is satisfied, the server $i$) updates the global model by adding the contents of the aggregator, $ii$) resets the aggregator to zero, and  $iii$) increments the iteration count by~$1$.  These steps are repeated for $K$ global iterations, after which the server broadcasts a stopping signal to all clients.
\begin{remark}
The aggregation criterion can be selected on a case-by-case basis to meet application demands.  Possible options include aggregating every $\Delta >0$ client communications (e.g., \cite{fedbuff}), periodically after fixed time intervals (e.g., \cite{fraboni_general_theory}), or at random. We note that it is possible to incorporate secure aggregation to \textsc{AREA} for protection against honest-but-curious server attacks \cite{secagg, fedbuff}.
\end{remark}
\begin{algorithm}
   \caption{\textsc{AREA} (client $i$)}   \label{alg:area_client}
\begin{algorithmic}
\State \textbf{Initialization:} set $y_{i}  = x_{s,0}$
\While{stopping signal is not received}
 \State receive $x_s$ and $\alpha$ from server and set $x_i^0 \gets x_s$
\For{$m=1,...,M$}
\State calculate stochastic (sub)gradient $g_i(x_i^{m-1})$
\State update $x_i^m \gets x_i^{m-1} - \alpha g_i(x_i^{m-1})$
\EndFor
     \State send $m_i = x_i^M - y_i$ to server and request $x_s$ and $\alpha$
    \State update $y_i \gets x_i^M$
\EndWhile
\end{algorithmic}
\end{algorithm}
The process of \textsc{AREA} at client $i\in[n]$ is shown in Algorithm \ref{alg:area_client}. Client $i \in [n]$ maintains one memory variable $y_i \in \R^d$, storing its immediately previous local estimate of the global model. During the initialization phase, clients receive $\alpha_0$ and $x_{s,0}$ and set $y_i = x_{s,0}$. Each client then repeats the following at its own pace until a stopping signal from the server is received: it performs $M$ local stochastic (sub)gradient descent steps with step size equal to its most recently received $\alpha$, starting from its most recently received global model $x_s$, to obtain a new local estimate $x_i^M$. Client $i$ then constructs the message $m_i \triangleq x_i^M - y_i$ and sends it to the server, requesting the most recent versions of $x_s$ and $\alpha$. After sending $m_i$, it sets $y_i \gets x_i^M$.

\section{Theoretical Results}
\label{sec:analysis}
To capture more accurately  the behavior of FL clients, we adopt a probabilistic framework that permits arbitrarily long and heterogeneous response times. As shown in Algorithm \ref{alg:area_server}, an iteration  of \textsc{AREA} corresponds to an occurrence of exactly one event from the set $\{E_1,...,E_n,E_s\}$, where $E_i\triangleq\{$an update is received from client $i$$\}$, for $i \in [n]$ and $E_s \triangleq \{$the aggregation criterion is satisfied$\}$. We assume the client and server processes are independent\footnote{This assumption is not satisfied when a global update is triggered by receipt of a fixed number of client updates (e.g., \cite{fedbuff}). However, since our analysis assumes no prior information about client arrivals, the resulting bounds can be interpreted as worst-case upper bounds for strategies that exploit such information.}. Let $J_k \in \{1,...,n,s\}$ denote the index of the event occurring at iteration $k$, and define $p_{i,k} = \Pr(J_k = i)$ and $p_{s,k} \triangleq \Pr(J_k = s)$. We require $p_{i,k} = p_i > 0$ for $i \in [n]$ and $p_{s,k} = p_s > 0$ for all $k$, so that $p_s + \sum_{i=1}^n p_i =1$. Thus, $\{J_k\}$ is IID with stationary event probabilities. This framework allows unbounded, heterogeneous delays and is strictly weaker than the bounded delay assumption common in prior work. For the derivation of the iterates of \textsc{AREA} under the proposed probabilistic setup, see Appendix \ref{sec:iterate_derivation}.

\begin{example}\label{rmk:poisson} Our setup readily accommodates the Poisson process, a foundational model in queueing theory. Specifically, let the interarrival time between updates from client $i \in [n]$ be $t_i \sim \mathrm{Exp}(\lambda_i)$ and the interarrival time between global updates be $t_s \sim \mathrm{Exp}(\lambda_s)$ with $\lambda_i, \lambda_s > 0$ and all interarrivals independent. By the properties of the Poisson process, we have $p_i = \lambda_i / \bar{\lambda}$ for $i \in [n]$ and  $p_s = \lambda_s / \bar{\lambda}$, where $\bar{\lambda} = \lambda_s + \sum_{i=1}^n \lambda_i$. Poisson processes are widely used to model asynchronous communication over networks \cite{ram_async_gossip, mansoori_newton, chen2020vafl}.
\end{example}
Next, we present our theoretical results  for two instances of Problem~\eqref{eq:prob_gen}: $i$) smooth and strongly convex $f_i$, and $ii$) convex and Lipschitz $f_i$. We list these assumptions formally below.
\begin{assumption}(\textbf{Smoothness}) 
\label{assum:smooth} 
Each function $f_i$ has $L$-Lipschitz  gradients, i.e., $  \|\nabla f_i(x) - \nabla f_i(y)\| \leq L \|x-y\|,$ for all $x,y \in \R^d$ and $i \in [n]$. 
\end{assumption}
\begin{assumption}(\textbf{Strong convexity}) 
\label{assum:strong_convex} 
Each function   $f_i$  is $\mu$-strongly convex, i.e., $f_i(y) \geq f_i(x) + \langle \nabla f_i(x), y-x\rangle + \frac{\mu}{2}\|y-x\|^2,$ for all $x,y \in \R^d$ and $i \in [n]$.
\end{assumption}
Assumptions \ref{assum:smooth} and \ref{assum:strong_convex} imply that the function $f$ has $L$-Lipschitz gradients and is $\mu$-strongly convex.
\begin{assumption}\textbf{(Convex and Lipschitz functions)}
\label{assum:bounded_g}
Each function $f_i$ is  convex and $B$-Lipschitz, i.e.,  $|f_i(x) - f_i(y)| \leq B \|x-y\|$, for all  $x,y \in \R^d$ and $i \in [n]$.
\end{assumption}

\subsection{Results for strongly convex and smooth functions}
We now state our main result on the convergence of \textsc{AREA} for strongly convex, smooth functions; the proof can be found in Appendix \ref{sec:proof_of_thm_8}.
\begin{theorem}
\label{thm:convergence_strong_convex} 
Under Assumptions \ref{assum:grad_stoch}, \ref{assum:smooth} and \ref{assum:strong_convex}, let $x^\star = \arg \min_x f(x)$ and suppose that the step size sequence $\{\alpha_k\}$ in Algorithm \ref{alg:area_client} is defined as follows
\begin{align*}
    \alpha_k \triangleq \tfrac{1}{ \tfrac{\pmi M \gamma k }{48}  +  \tfrac{1}{\mathcal{D}}}, \text{ where }  \mathcal{D} \triangleq \min\left\{ \tfrac{2}{M(\mu + L)}, \tfrac{\gamma}{L^2(M-1)}, \sqrt{\tfrac{ M \gamma^2}{64L^4 (M-1)^3 }}, \sqrt[3]{\tfrac{ \gamma}{32 L^4 (M-1)^3 }} \right\},
\end{align*}
$M$ is the number of local SGD steps, $\gamma \triangleq \frac{2\mu L}{\mu + L}$, $L$ and $\mu$ are the Lipschitz and strong-convexity constants defined in Assumptions \ref{assum:smooth} and \ref{assum:strong_convex}, respectively, and $\pmi \triangleq \min\{\min_i p_i, p_s\}$.

Then after $K$ iterations, the sequence  $\{x_{s,k}\}$ of global server iterates in Algorithm \ref{alg:area_server} satisfies
\begin{equation}
\label{eq:short_rate_strong_convex}
\begin{split}
\E[\|x_{s,K} - x^\star\|^2] & = \mathcal{O}\Bigg( \frac{r_p (1+\delta \kappa) \sigma^2}{\pmi M K} + \frac{r_p \bar{q}^{1/2} L (1+\delta \kappa)^{1/2}\sigma \zeta}{\pmi M^{1/2} K^{3/2}}\\
&\quad+ \frac{r_p \bar{q}^{1/2} L^{3/2}(\sigma \zeta^3)^{1/2}}{\pmi^{1/2} K^{3/2}} + \frac{r_p  (\delta  + \pmi^{1/2} \bar{q}) L^4\zeta^2}{\pmi^2 K^2}\Bigg),       
    \end{split}
\end{equation}
where $r_p \triangleq \frac{p_s}{\pmi}$, $\bar{q} \triangleq \frac{1}{n}\sum_{i=1}^n \tfrac{1}{p_i}$, $\zeta \triangleq \max_i\|x^\star-u_i^\star\|$, $u_i^\star \triangleq \arg \min_x f_i(x)$, $\kappa \triangleq \tfrac{L}{\mu}$ is the condition number of $f$, and $\delta \triangleq 1 - \tfrac{1}{M}$.
\end{theorem}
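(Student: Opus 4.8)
The plan is to exploit the algebraic structure created by the client-side memory to collapse the delayed, asynchronous dynamics into a single scalar recursion for the distance to optimality. First I would establish the \emph{exact-averaging invariant}
\[
x_{s,k} + u_{s,k} = \bar{y}_k, \qquad \bar{y}_k \triangleq \tfrac{1}{n}\sum_{i=1}^n y_i^{(k)},
\]
where $y_i^{(k)}$ is client $i$'s current memory. This follows by induction: on a client event the server performs $u_{s,k}\gets u_{s,k-1}+\frac1n m_i$ while the client sets $y_i\gets x_i^M$, so both sides increase by $\frac1n m_i=\frac1n(x_i^M-y_i^{\mathrm{old}})$; on a server event the aggregator is merely flushed into $x_s$, leaving the sum unchanged; and at initialization both sides equal $x_{s,0}$. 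Consequently the effective state of the whole system is the \emph{average of client memories} $\bar y_k$, and the target iterate satisfies $x_{s,K}=\bar y_K-u_{s,K}$, so it suffices to track $\E\|\bar y_K-x^\star\|^2$ and bound the residual $\E\|u_{s,K}\|^2$ separately (the latter is $\mathcal O(\alpha_K^2)$ and contributes only to lower-order terms). This invariant is exactly the mechanism by which the $\tfrac1n$ weighting cancels participation bias, and it is derived under the probabilistic event model in Appendix~\ref{sec:iterate_derivation}.

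Next I would derive the one-step drift of $\bar y_k$. Since $\bar y_k$ changes only on event $E_i$ (probability $p_i$), by $\frac1n m_i=\frac1n(x_i^M-y_i^{\mathrm{old}})$, and since $x_i^M=x_{s,\tau_i}-\alpha_{\tau_i}\sum_{m=1}^M g_i(x_i^{m-1})$ with $\tau_i$ the last reception time of client $i$, conditioning on $\F_{k-1}$ yields an expected pull governed by $M$ steps of local SGD. Using Assumptions~\ref{assum:smooth} and~\ref{assum:strong_convex} I would prove a descent lemma for the inner loop: each step contracts toward $u_i^\star$ by the factor $(1-\alpha\gamma)$ with $\gamma=\tfrac{2\mu L}{\mu+L}$, while the $M$ stochastic gradients inject noise of order $M\sigma^2$, amplified across steps by the condition-number factor $1+\delta\kappa$ (the multi-local-step, client-drift penalty, which vanishes when $M=1$ since then $\delta=0$). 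Averaging over $i$ and invoking $\nabla f(x^\star)=0$ converts the per-client contractions toward $u_i^\star$ into a net contraction of $\bar y_k$ toward $x^\star$, at the price of heterogeneity terms controlled by $\zeta=\max_i\|x^\star-u_i^\star\|$.

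The heart of the argument is then a Lyapunov recursion of the form
\[
\E[V_k] \;\le\; \bigl(1 - c\,\pmi M \gamma\,\alpha_k\bigr)\,\E[V_{k-1}] + A\,\alpha_k^2 + B\,\alpha_k^3 + C\,\alpha_k^4,
\]
where $V_k$ augments $\E\|x_{s,k}-x^\star\|^2$ with weighted \emph{staleness} terms measuring each client's gap $\|x_{s,k}-x_{s,\tau_i}\|$ to its stale starting point, together with a term for $\|u_{s,k}\|^2$. The contraction factor carries $\pmi$ because in expectation each event fires with probability at least $\pmi$, so the effective per-iteration rate is set by the least-frequent participant, matching the denominator $\tfrac{\pmi M\gamma k}{48}$ of the prescribed schedule; the noise constant $A$ contains $\sigma^2/(\pmi M)$ and $1+\delta\kappa$, while $B,C$ collect $\zeta$ and the delay moments, scaled by $r_p=p_s/\pmi$ and the average inverse frequency $\bar q=\tfrac1n\sum_i 1/p_i$. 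Substituting $\alpha_k=\Theta\!\bigl(1/(\pmi M\gamma k)\bigr)$ and unrolling with the standard strongly-convex $1/k$ summation argument then yields the stated last-iterate bound directly (no iterate averaging is needed), with each of the four displayed terms arising from balancing the corresponding $\alpha_k^j$ source against the contraction.

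The main obstacle is controlling the unbounded, heterogeneous delays jointly with the decaying step size. Because client $i$ computes $x_i^M$ from a model received $k-\tau_i$ iterations earlier and with the \emph{then-current} step size $\alpha_{\tau_i}$, both the staleness gap $\|x_{s,k}-x_{s,\tau_i}\|$ and the step-size mismatch $\alpha_{\tau_i}/\alpha_k$ must be bounded without any maximum-delay assumption. Under the stationary IID event model the inter-update gap for client $i$ is geometric with mean $1/p_i$, so these quantities have finite moments; the delicate part is proving that the delay contributions are summable against the contraction and aggregate into the \emph{average} inverse frequency $\bar q$ and the ratio $r_p$, rather than into a worst-case $1/\pmi$, which is precisely what delivers robustness to slow outliers. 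I expect this delay-versus-stepsize bookkeeping, rather than any single inequality, to be the technical crux.
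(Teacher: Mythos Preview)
Your invariant $x_{s,k}+u_{s,k}=\bar y_k$ is correct and is exactly how the paper eliminates the aggregator (Appendix~\ref{sec:iterate_derivation}). But the rest of the plan diverges from what is actually needed in two structural ways, and as written it would not deliver the stated bound. First, the paper does \emph{not} reduce to a scalar recursion for $\bar y_k$, nor does it use staleness gaps $\|x_{s,k}-x_{s,\tau_i}\|$. The update of $\bar y_k$ on event $E_i$ is $\tfrac1n(x_{i,k}-y_{i,k})$, which involves the \emph{individual} memory $y_{i,k}$ and local estimate $x_{i,k}$, so no closed scalar recursion in $\bar y_k$ exists. Instead the paper tracks the full $(2n{+}1)$-vector of distances to \emph{client-specific, time-varying shifted optima} $x_{i,k}^\star\triangleq x^\star-\alpha_k M\nabla f_i(x^\star)$: the blocks $\|x_{i,k}-x_{i,k}^\star\|$, $\|y_{i,k}-x_{i,k}^\star\|$, $\|x_{s,k}-x^\star\|$ are coupled through a transition matrix $A_k$ (Eq.~\eqref{eq:transition_matrix}), and the Lyapunov weights are a left subinvariant vector $w_k$ of $A_k$ with entries $\propto 1/(np_i\beta_k^{2/3})$, $1/(np_i\beta_k^{1/3})$, $1/p_s$ (Lemma~\ref{lem:properties_of_Ak}). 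The centering at $x_{i,k}^\star$ is what makes the system contract: after $M$ local steps client $i$ contracts toward $x_{i,k}^\star$, not toward $x^\star$, and the $\nabla f_i(x^\star)$ shifts only cancel via $\nabla f(x^\star)=0$ after passing through the $y$-layer and averaging at the server. A staleness-based Lyapunov in the style of delayed-SGD analyses does not expose this cancellation and is not how the $\rho_k=1-\pmi+\pmi\beta_k^{1/3}$ contraction is obtained.

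Second, and more critically, you are missing the two-pass / bootstrap that produces the $K^{-3/2}$ terms. Because the targets $x_{i,k}^\star$ move with $\alpha_k$, the \emph{squared}-norm recursions (Lemmas~\ref{lem:y_norm_squared}--\ref{lem:x_norm_squared}) pick up \emph{first-moment} cross terms of the form $(\alpha_k-\alpha_{k+1})\,\E\|y_{i,k}-x_{i,k}^\star\|$. The paper therefore first runs a parallel analysis of the \emph{norms} (Lemmas~\ref{lem:s_norm}--\ref{lem:x_norm}) to show the weighted first-moment Lyapunov satisfies $\lambda_k=\mathcal O(\sigma)+\mathcal O(\zeta/k)$ (Lemma~\ref{lem:small_lyapunov}); inserting this into the squared-norm recursion yields a rough bound $\Lambda_K=\mathcal O((\sigma^2+\sigma\zeta)/K+\zeta^2/K^2)$; then the Jensen-type reverse estimate $\lambda_k\le\sqrt{\psi\,\Lambda_k}=\mathcal O(\sigma/\sqrt{k})$ is fed back, and the squared recursion is unrolled \emph{a second time} to sharpen the $\sigma\zeta$ dependence from $K^{-1}$ to $K^{-3/2}$. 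A single recursion of the shape $V_k\le(1-c\,\pmi M\gamma\alpha_k)V_{k-1}+A\alpha_k^2+B\alpha_k^3+C\alpha_k^4$ can only produce orders $K^{-1},K^{-2},K^{-3}$; the intermediate $K^{-3/2}$ powers come precisely from the interaction between the first- and second-moment passes, not from any polynomial $\alpha_k^j$ source. (A side remark: your expectation that the leading constant aggregates into $\bar q$ rather than $1/\pmi$ is the behavior of Theorem~\ref{thm:conv_as}, not of this theorem---here the dominant $\sigma^2/K$ term carries $r_p/\pmi=p_s/\pmi^2$, and $\bar q$ enters only in the subleading $K^{-3/2}$ and $K^{-2}$ terms.)
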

\paragraph{Dependency on \boldmath{$\pmi$}.} The factor $1/\pmi$ in  \eqref{eq:short_rate_strong_convex} is caused by the choice of $\alpha_k=\Theta(1/(\pmi k))$, which permits steps up to $1/\pmi$ times larger than the standard $\Theta(1/k)$ schedule used in the analysis of asynchronous or delayed FL algorithms \cite{gu2021fast,avdiukhin2021federated} under strong convexity and smoothness. While larger step sizes amplify the errors due to client drift and stochastic gradient noise and thus adversely impact the value of the loss function, they may also significantly speed up convergence with negligible effect on model generalization. Our empirical results in Section \ref{sec:log_reg} support this observation: in some cases, \textsc{AREA} can tolerate step sizes nearly an order of magnitude larger compared to baseline methods, reaching and sustaining higher testing accuracy levels in shorter time, despite higher loss function values. We note that \textsc{AREA} achieves asymptotic convergence to a solution of Problem \ref{eq:prob_gen} under $\alpha_k=\Theta(1/k)$; however, the order of the rate can become suboptimal. Intuitively, the dependence on $\pmi$ in the strongly convex, smooth regime can be explained as follows: unlike FL methods where clients share gradient estimates, enabling the server to take gradient steps and tune the step size at a global level, clients in \textsc{AREA} communicate model residuals to increase privacy. As a result, the global iterate evolves through locally contractive client updates. To guarantee $\mathcal{O}(1/K)$ rate for the global system, the step size must be large enough to ensure the slowest client contracts sufficiently. As we show later in Theorem~\ref{thm:conv_as}, when the functions $f_i$ are convex and Lipschitz, the convergence rate scales with the average $\bar{q}$ rather than $\pmi$.

\paragraph{Two-phase behavior.} The convergence rate in \eqref{eq:short_rate_strong_convex} is obtained using a conservative $\Theta(1 - \pmi \alpha_k M \gamma / 12)$ bound for the contractive factor of \textsc{AREA}. However, a sharper bound established in Lemma \ref{lem:properties_of_Ak} in Appendix \ref{sec:proof_of_thm_8} is  $\Theta(1 - \pmi +\pmi( 1-\alpha_k M \gamma / 4)^{1/3})$. This implies a two-phase behavior  in the convergence of \textsc{AREA}: in the early stages  when the step size is large, the algorithm contracts at an almost constant geometric rate $\simeq 1-\pmi$, while as $\{\alpha_k\}$ decreases the convergence rate becomes sublinear. Consequently, \textsc{AREA} has the potential to achieve rapid progress in the early stages of learning provided $\pmi$ is large enough and the stochastic gradient and client drift errors do not dominate.

\paragraph{Synchronization error.} The two $\mathcal{O}(K^{-3/2})$ terms in \eqref{eq:short_rate_strong_convex}  arise from utilizing diminishing step sizes over asynchronous communication. The server increases the global iteration count whenever an update from a client is received, or a global update is performed; consequently, clients use different effective step sizes depending on the frequency of their communications with the server, and for the same client, the step size may decrease arbitrarily between consecutive communications. This misalignment induces a synchronization error of order  $K^{-3/2}$ (see the proof of Theorem \ref{thm:convergence_strong_convex} in Appendix \ref{sec:proof_of_thm_8} for more detail). Employing constant step sizes eliminates these intermediate terms, but then a convergence rate of $\mathcal{O}(e^{-\pmi K} + 1/K)$ can be attained for the averaged iterates at best.
\paragraph{Comparison with existing methods.} For synchronous FedAvg with  arbitrary client sampling strategy $\pi$, non-IID local data and step size $\alpha_k = \Theta(1/k)$, a convergence rate of $\mathcal{O}\left((\sigma^2+ \zeta^2)/K  + Q(\pi) \right)$  is established in [Theorem 3.1, \cite{cho2020client}] for strongly convex, smooth functions, where $Q(\pi)$ measures the bias of the sampling strategy $\pi$ ($Q(\pi)=0$ only for unbiased selection). Consequently, when client participation is uneven, in this case due to biased sampling, the error of FedAvg is dominated by the non-vanishing  term $Q(\pi)$, while the dependency on $\zeta^2$ is  of order $1/K$. Conversely, all error terms in \eqref{eq:short_rate_strong_convex} vanish with $K$ regardless of the distribution of $p_i$ for $i \in [n]$, and the dependency on $\zeta$ is improved.  

Asynchronous SGD  exhibits a  similar limitation: under uneven client participation and non-IID data, a non-vanishing term dominates the convergence rate (e.g., [Theorem 4,\cite{mishchenko2022asynchronous}] for nonconvex functions and $\alpha_k = \Theta(1/\sqrt{K})$ for all $k$; to the best of our knowledge, no equivalent result has been reported for strongly convex, smooth functions), while the bound in \eqref{eq:short_rate_strong_convex} vanishes with $K$ for $M\geq 1$.

Finally, the asynchronous FL method \textsc{AsyncCommSGD} \cite{avdiukhin2021federated} attains a convergence rate of $\mathcal{O}\left( \sigma^2 /K + \zeta^2 \tau_{\max}^2/K^2\right)$ for strongly convex, smooth functions, where $\tau_{\max}$ is the maximum delay. Unlike \eqref{eq:short_rate_strong_convex} where the leading error term depends on $\pmi$, implying high sensitivity to stragglers, the dominant $1/K$ term  of \textsc{AsyncCommSGD} is independent of $\tau_{\max}$. In addition, \textsc{AsyncCommSGD} does not suffer from synchronization error.  However, \textsc{AsyncCommSGD} requires a global step size schedule coupled with the server's clock, and that clients reduce their local step sizes in accordance to this clock between server communications. This requirement can be challenging in practical FL deployment. By contrast, \textsc{AREA} couples the local and global clocks only at communication events: between communications with the server, clients train at their own pace using the step size last assigned to them, without tracing a global clock.

\subsection{Results for convex and nonsmooth functions}
We conclude with a result on the convergence of \textsc{AREA} under Assumptions \ref{assum:grad_stoch} and \ref{assum:bounded_g}. In this version of \textsc{AREA}, clients  have access only to stochastic approximations of the subgradients of their local functions. The corresponding proof can be found in Appendix \ref{sec:area_convex_proof}.
\begin{theorem}
\label{thm:conv_as} 
    Let $\bar{x}_{s,K} \triangleq \frac{1}{K}\sum_{k=0}^{K-1}x_{s,k}$ be the ergodic average of the global server iterates  $x_{s,k}$ of Algorithm~\ref{alg:area_server} after $K$ iterations, and $x^\star$ an arbitrary solution of Problem \ref{eq:prob_gen}. Suppose that Assumptions \ref{assum:grad_stoch} and~\ref{assum:bounded_g} hold, and that the step size sequence $\{\alpha_k\}$ in Algorithm \ref{alg:area_client} is   $\alpha =\sqrt{\left( \frac{1}{p_s} + 2\bar{q}\right) \|x_{s,0} - x^\star\|^2/\left(\left( \sigma^2 + \frac{(M+1) B^2}{2 }\right) MK\right)} $ for all $k$, where $\bar{q} \triangleq \frac{1}{n}\sum_{i=1}^n \frac{1}{p_i}$. Then the following inequality holds
\begin{align*}
E[f \left( \bar{x}_{s,K}  \right) - f^\star ]  &\leq \sqrt{\frac{\left( \frac{1}{p_s} + 2\bar{q}\right) \left( \sigma^2 + \frac{(M+1) B^2}{2 }\right) \|x_{s,0} - x^\star\|^2}{ MK} }.
\end{align*}
\end{theorem}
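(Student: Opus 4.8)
The plan is to recast \textsc{AREA}'s global state as a uniform average of the clients' most recent local iterates, and then run a standard convex--subgradient Lyapunov argument on that average, using the event probabilities $p_i,p_s$ to bookkeep the asynchrony. First I would establish the exact-averaging identity that gives the method its name: because each message $m_i=x_i^M-y_i$ is a residual and $y_i$ is reset to the previous $x_i^M$, the per-client messages telescope, so the accumulated quantity satisfies $x_{s,k}+u_{s,k}=\tfrac1n\sum_{i=1}^n v_{i,k}$, where $v_{i,k}$ is client $i$'s latest reported $M$-step estimate. Writing $\phi_k\triangleq x_{s,k}+u_{s,k}$, this identity reduces the global dynamics to tracking $\phi_k$, which on event $E_i$ moves by $\tfrac1n(v_i^{\mathrm{new}}-v_{i,k-1})$ and on $E_s$ is unchanged; here each $v_i^{\mathrm{new}}=w_i-\alpha\sum_{m=0}^{M-1}g_i(x_i^m)$ is $M$ stochastic subgradient steps from a (stale) server model $w_i$. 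I would record this iterate form first (the derivation referenced in the appendix).

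Next I would write the one-step inequality for $\E[\|\phi_k-x^\star\|^2\mid\F_{k-1}]$. Since $\{J_k\}$ is i.i.d.\ and, by the independence of the client and server processes, independent of the fresh gradient noise satisfying Assumption~\ref{assum:grad_stoch}, the conditional expectation is a $p_i$/$p_s$-weighted combination; expanding the square splits into a cross (descent) term and a second-moment term. For the descent term, convexity gives $\langle d_i(w_i),w_i-x^\star\rangle\ge f_i(w_i)-f_i(x^\star)$, so the $M$-step inner product produces $-2\alpha M\,(f_i(w_i)-f_i^\star)$ up to client-drift corrections; for the second-moment term, Assumption~\ref{assum:grad_stoch} contributes an $M\sigma^2$ piece and the $B$-Lipschitz bound on subgradients (Assumption~\ref{assum:bounded_g}) an $\mathcal{O}(M^2B^2)$ piece, which after the $1/M$ normalization yields the $\sigma^2+\tfrac{(M+1)B^2}{2}$ factor. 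Summing over $k$, telescoping the distances down to $\|x_{s,0}-x^\star\|^2$, dividing by $K$, and invoking Jensen's inequality on the convex $f$ to pass from $\tfrac1K\sum_k f(\cdot)$ to $f(\bar x_{s,K})$ gives a bound of the schematic form $\tfrac{(1/p_s+2\bar q)\|x_{s,0}-x^\star\|^2}{2\alpha MK}+\tfrac{\alpha}{2}\big(\sigma^2+\tfrac{(M+1)B^2}{2}\big)$; optimizing over the constant $\alpha$ reproduces both the stated step size and the $\mathcal{O}(1/\sqrt K)$ rate.

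The hard part will be the asynchronous, one-client-at-a-time update of the average together with staleness, and two gaps must be controlled. First, the descent is expressed at the stale models $w_i$ and at the exact average $\phi_k$, whereas the theorem measures $\bar x_{s,K}$; I would bound the discrepancies $f(x_{s,k})-f(\phi_k)$ and $f_i(w_i)-f_i(x_{s,k})$ by $B$ times the model drift, itself $\mathcal{O}(\alpha MB)$ under the bounded-subgradient assumption, so that they only perturb constants. Second, because each client contributes with per-iteration frequency $p_i$ and aggregation fires with frequency $p_s$, converting the per-iteration descent into full subgradient steps on $f=\tfrac1n\sum_i f_i$ introduces the expected inter-event waiting times $1/p_i$ and $1/p_s$. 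The residual/memory correction is precisely what forces the unequal $p_i$ to enter only through the \emph{average} $\bar q=\tfrac1n\sum_i 1/p_i$ rather than $1/\pmi$ or $1/p_{\max}$; establishing this cancellation cleanly is the technically delicate core of the argument and the step I expect to demand the most care.
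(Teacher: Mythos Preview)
Your approach differs substantially from the paper's, and the step you flag as ``the technically delicate core'' is where it breaks.

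The paper does \emph{not} track the single quantity $\phi_k=x_{s,k}+u_{s,k}$. It uses a three-block, probability-weighted Lyapunov function
\[
V_k \;=\; \frac{1}{n}\sum_{i=1}^n \frac{1}{p_i}\|x_{i,k}-x^\star\|^2
\;+\; \frac{1}{n}\sum_{i=1}^n \frac{1}{p_i}\|y_{i,k}-x^\star\|^2
\;+\; \frac{1}{p_s}\|x_{s,k}-x^\star\|^2,
\]
and bounds each block separately. On event $E_i$ one has $x_{i,k+1}=G_{i,\alpha}^M[x_{s,k}]$, so the standard convex subgradient expansion (together with a short client-drift lemma) gives $\E[\|x_{i,k+1}-x^\star\|^2\mid\F_k]\le(1-p_i)\|x_{i,k}-x^\star\|^2+p_i\|x_{s,k}-x^\star\|^2-2\alpha M p_i(f_i(x_{s,k})-f_i(x^\star))+\alpha^2 M p_i(\sigma^2+\tfrac{(M+1)B^2}{2})$; the $y_i$-block inherits $\|x_{i,k}-x^\star\|^2$; and the $x_s$-block inherits $\tfrac1n\sum_i\|y_{i,k}-x^\star\|^2$ via Jensen. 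The $1/p_i$, $1/p_s$ weights are chosen exactly so that, after conditional expectation and summation, the extra $\|x_{i,k}-x^\star\|^2$, $\|y_{i,k}-x^\star\|^2$, $\|x_{s,k}-x^\star\|^2$ terms appearing in one block are absorbed by another, yielding $\E[V_{k+1}\mid\F_k]\le V_k-2\alpha M(f(x_{s,k})-f^\star)+\alpha^2 M(\sigma^2+\tfrac{(M+1)B^2}{2})$. No staleness correction is ever invoked: the descent already sits at the current $x_{s,k}$. The factor $\tfrac{1}{p_s}+2\bar q$ then comes entirely from the initial condition $V_0=(\tfrac{1}{p_s}+2\bar q)\|x_{s,0}-x^\star\|^2$, not from any waiting-time argument.

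Your single-quantity plan stalls at the cross term. On event $E_i$ the increment of $\phi_k=\bar y_k$ is $\tfrac1n(x_{i,k}-y_{i,k})$, the difference of two $M$-step local iterates computed from \emph{different} stale server models; it is not of the form ``stale anchor minus $\alpha$ times subgradients'', so $\langle \phi_k-x^\star,\,x_{i,k}-y_{i,k}\rangle$ does not reduce to $f_i(w_i)-f_i(x^\star)$. You propose to repair this by bounding $\|w_i-x_{s,k}\|$ and $\|x_{s,k}-\phi_k\|=\|u_{s,k}\|$ as $\mathcal O(\alpha MB)$, but under the paper's unbounded-delay model an arbitrary number of events can separate two communications of client $i$, and arbitrarily many messages can accumulate in $u_{s,k}$ before an aggregation; neither drift is $\mathcal O(\alpha MB)$ pathwise, and controlling them in expectation would itself require tracking the per-client distances $\|x_{i,k}-x^\star\|$, $\|y_{i,k}-x^\star\|$---which is precisely what the three-block Lyapunov does directly. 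Your identification of the exact-averaging identity $\phi_k=\bar y_k$ is correct and matches the paper's iterate derivation, but it is used there only to eliminate $u_{s,k}$ from the recursions, not as the Lyapunov quantity.
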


\paragraph{Comparison with existing methods.} To our knowledge, convergence guarantees under non-IID data and convex, Lipschitz objectives are only available for the averaged iterates of asynchronous \textsc{SGD} ($M=1$) with sampling proportional to client importance in the global objective as reported in \cite{even2024asynchronous}: $\mathcal{O}\left( \sqrt{(\sigma^2 + B^2)\|x_{s,0}- x^\star\|^2 / K \cdot n p_{\max}/\sqrt{\bar{p}} }\right)$, where $p_{\max}$ and $\bar{p}$ denote the maximum and average sampling probabilities, respectively. Conversely, \textsc{AREA} achieves an optimal $\mathcal{O}(1/K)$ convergence rate without client sampling and under arbitrary client participation probabilities $p_i$ for $i \in [n]$. The convergence rate of \textsc{AREA} does not include a factor of $\sqrt{n}$, and depends on the quantity $p_s^{-1} + 2 \bar{q} = p_s^{-1} + \tfrac{2}{n}\sum_{i=1}^n p_i^{-1}$ rather than the ratio $p_{\max}/\sqrt{\bar{p}}$, indicating increased robustness to outlier participation probabilities compared with asynchronous \textsc{SGD} with sampling. 
\paragraph{Optimal tuning of the server policy.} Under the Poisson-arrival model of Example \ref{rmk:poisson}, the aggregation frequency $\lambda_s^\star$ that minimizes the convergence rate bound reported in Theorem \ref{thm:conv_as} admits a closed-form solution. Substituting $p_s = \lambda_s/(\lambda_s + \sum_{j=1}^n \lambda_j )$ and  $ p_i = \lambda_i/(\lambda_s + \sum_{j=1}^n \lambda_j)$ for $i \in [n]$ into the bound of Theorem \ref{thm:conv_as}, differentiating with respect to $\lambda_s$ and setting the derivative to zero yields $\lambda_s^\star =   \sqrt{(\sum_{i=1}^n \lambda_i)/(\frac{2}{n}\sum_{i=1}^n  \lambda_i^{-1})} $. Notably, this tuning requires only the aggregate statistics $\sum_{i=1}^n \lambda_i$ and $\tfrac{1}{n}\sum_{i=1}^n \lambda_i^{-1}$ , rather than the individual client rates $\lambda_i$ for $i \in [n]$.

\section{Numerical Results}
\label{sec:log_reg}
We evaluate the empirical performance of $\textsc{AREA}$ on a multinomial $\ell_2$-regularized logistic regression task. Let $S_i$ denote the set of samples available to client $i \in [n]$ and define $s_i \triangleq |S_i|$. The empirical loss function at client $i$ can be expressed as
\begin{equation}
\label{eq:logreg}
    \textstyle  \hat{f}_i(w^1,\hdots,w^C) = \frac{1}{s_i}
    \sum_{j \in S_i} \sum_{l=1}^C \mathbbm{1}(y_j = l) \ln \left(  \frac{\exp{\langle w^l, x_j \rangle } }{\sum_{h=1}^C \exp{\langle w^h, x_j \rangle } }\right) + \frac{\nu}{2} \sum_{l=1}^C\|w^l\|^2,
\end{equation}
where $\mathbbm{1}(\cdot)$ is the indicator function, $(x_j,y_j)$ denote the input feature vector $x_j \in \R^d$ and the output label $y_j \in [C]$ of the $j^{th}$ data-point, $w^l \in \R^d$ for $l \in [C]$ are the model parameters associated with the $l^{th}$ label, and $\nu > 0$ is a regularization parameter.

Summing the preceding relation over clients $i \in [n]$ yields the global objective
\begin{align*}
 f(w^1,\hdots,w^C) = \sum_{i=1}^n \frac{s_i}{S} \hat{f}_i (w^1,\hdots,w^C), \quad \text{where}\quad  S = \sum_{i=1}^n s_i.
\end{align*}
\paragraph{Dataset and metrics.} We report results for the MNIST dataset \cite{deng2012mnist} divided over $n=128$ clients by sampling a Dirichlet distribution with parameter $a = 0.1$ to create a non-IID data split. We tracked two performance metrics over wall-clock time: $i$) the loss function value at the server model using all training data and $ii$) the test accuracy of the server model using all available test data.
\paragraph{Methods and aggregation criterion.} 
We compare with the following baseline methods: $i$) synchronous \textsc{FedAvg}  (\textsc{S-FedAvg}) \cite{fedavg}, $ii$) asynchronous  \textsc{FedAvg}  (\textsc{AS-FedAvg}), and $iii$) \textsc{FedBuff} \cite{fedbuff}. For \textsc{S-FedAvg} and \textsc{AS-FedAvg}, clients share their local model estimates, while for \textsc{FedBuff} clients share local pseudo-gradients. All asynchronous methods use the buffered aggregation policy of \cite{fedbuff}:  global updates are triggered by the reception of $\Delta=4$ client updates. For \textsc{S-FedAvg}, the server samples $4$ clients at each round with uniform probability. 
\paragraph{Asynchrony model.} The time $t_i$ between two consecutive updates from client $i \in [n]$ is modeled as an exponential random variable $t_i \sim \mathrm{Exp}(\lambda_i)$ (Example \ref{rmk:poisson}).  We tested two different cases: $i$) $\lambda_i = 10$ for $i \in [n]$ (uniform), and $ii$)  $\lambda_i \sim \mathcal{N}(10,5)$ (non-uniform). 
\paragraph{Hyper-parameters.} All methods use a mini-batch size $b=32$ and  regularization parameter $\nu = 10^{-3}$. A grid search over $\alpha \in \{10^{-2},10^{-1},10^0,10^1,10^2,10^3,10^4\}$ was performed for all methods. For each method we retain the two largest step sizes that yield finite loss function values at the end of the prescribed experimental horizon.   
 \paragraph{Implementation details.} We ran $10$ independent trials for every experimental configuration on a CPU cluster ($36$ nodes, $128$ GB DDR4 each) and extracted the mean, minimum and maximum metric values across trials. Supplemental experiments for non-IID data and $n=4$, and IID data for $n \in \{4,128\}$, can be found in Appendices \ref{sec:n4_appendix} and \ref{sec:IID_sim}, respectively.
\begin{table}[t] \tiny \setlength\tabcolsep{3pt} 
\centering 
\begin{tabularx}{\textwidth}{ C{1.1cm} | C{0.4cm} C{0.5cm} C{2.4cm} C{2.2cm} | C{0.4cm} C{0.5cm} C{2.4cm} C{2.2cm}}
\toprule
 & \multicolumn{4}{c|}{\textbf{Uniform}$\;(\lambda_i = 10)$} & \multicolumn{4}{c}{\textbf{Non-uniform}$\;(\lambda_i \sim \mathcal{N}(10,5))$}\\ \textbf{Method} & $\alpha$ & $\rho$ & Test acc \% ($t_h$) & Loss ($t_h$) & $\alpha$ & $\rho$ & Test acc \% ($t_h$) & Loss  ($t_h$)\\ 
\midrule \multirow{2}{*}{\textsc{S-FedAvg}}  & $10^1$ & $>1$ & $61.16$ ($52.75$ - $66.46$) & $1.56$ ($1.41$ - $1.78$) & $10^1$ & $>1$ & $42.23$ ($19.07$ - $55.31$) & $2.22$ ($1.72$ - $3.55$) \\
\multirow{2}{*}{\textsc{S-FedAvg}}   & $10^2$ & $0.80$ & $74.06$ ($41.21$ - $85.20$) & $1.69$ ($0.90$ - $5.55$) & $10^2$ & $>1$ & $60.36$ ($9.65$ - $78.08$) & $2.78$ ($1.16$ - $9.48$) \\
\midrule
\multirow{2}{*}{\textsc{AS-FedAvg}}   & $10^2$ & $0.28$ & $84.14$ ($79.15$ - $87.89$) & $0.89$ ($0.74$ - $1.09$) & $10^2$ & $0.28$ & $84.40$ ($70.04$ - $88.96$) & $0.89$ ($0.71$ - $1.56$) \\
\multirow{2}{*}{\textsc{AS-FedAvg}}   & $10^3$ & $0.30$ & $79.83$ ($66.79$ - $88.70$) & $8.19$ ($5.27$ - $11.91$) & $10^3$ & $0.33$ & $70.20$ ($20.91$ - $88.59$) & $16.29$ ($4.80$ - $76.90$) \\
\midrule
\multirow{2}{*}{\textsc{FedBuff}}   & $10^0$ & $0.46$ & $85.38$ ($84.74$ - $86.21$) & $0.81$ ($0.79$ - $0.82$) & $10^0$ & $0.48$ & $85.21$ ($83.82$ - $86.15$) & $0.81$ ($0.79$ - $0.85$) \\
\multirow{2}{*}{\textsc{FedBuff}}   & $10^1$ & \boldmath{$0.06$} & $78.56$ ($61.01$ - $89.78$) & $3.79$ ($1.74$ - $8.06$) & $10^1$ & \boldmath{$0.05$} & $78.57$ ($65.32$ - $88.88$) & $3.51$ ($1.66$ - $9.69$) \\
\midrule
\multirow{2}{*}{\textsc{AREA}}  & $10^2$ & $0.29$ & $87.38$ ($87.07$ - $87.92$) & $0.74$ ($0.73$ - $0.75$) & $10^2$ & $0.55$ & $84.37$ ($81.83$ - $86.11$) & $0.84$ ($0.79$ - $0.93$) \\
\multirow{2}{*}{\textsc{AREA}}   & $10^3$ & $0.09$ & \boldmath{$87.54$} ($79.38$ - $91.41$) & \boldmath{$0.59$} ($0.47$ - $0.91$)  & $10^3$ & $0.09$ & \boldmath{$87.53$} ($81.46$ - $90.43$) & \boldmath{$0.64$} ($0.55$ - $1.01$) \\
\bottomrule 
\end{tabularx}
\caption{Performance for $M=1$ local SGD step ($t_h=15$)}
\label{tab:M_1_n_128_non_iid}
\end{table}
\paragraph{One local SGD step.} Our results for $M=1$ local SGD step (asynchronous \textsc{SGD} and its variants) are summarized in Table \ref{tab:M_1_n_128_non_iid}. We report the mean test accuracy and loss at the horizon $t_h=15$ across $10$ independent trials, while the ranges indicate the minimum and maximum values observed across these trials. The convergence speed is quantified by $\rho=t_{80}/t_h$, where $t_{80}$ is wall-clock time required for the method to reach $80\%$ test accuracy ($\rho>1$ indicates the target was not met within $t_h$). For both uniform and non-uniform client update rates, \textsc{FedBuff} with $\alpha = 10^1$ reaches $80\%$ accuracy slightly earlier than \textsc{AREA}, but plateaus at $\sim 78\%$ final accuracy.  Conversely, \textsc{AREA} with $\alpha = 10^3$ exceeds all baselines by $2$-$3$\% in final accuracy and achieves the lowest loss. Consistent with our theoretical findings, \textsc{AREA} can accommodate large step sizes, which can yield fast convergence without affecting accuracy. An additional observation is that non-uniform client update rates adversely impact \textsc{S-FedAvg}, while the performance of asynchronous methods remains relatively unaffected at this level of rate heterogeneity.
\begin{table}[t] \tiny \setlength\tabcolsep{3pt} 
\centering 
\begin{tabularx}{\textwidth}{ C{1.1cm} | C{0.4cm} C{0.5cm} C{2.4cm} C{2.2cm} | C{0.4cm} C{0.5cm} C{2.4cm} C{2.2cm}}
\toprule
 & \multicolumn{4}{c|}{\textbf{Uniform}$\;(\lambda_i = 10)$} & \multicolumn{4}{c}{\textbf{Non-uniform}$\;(\lambda_i \sim \mathcal{N}(10,5))$}\\ \textbf{Method} & $\alpha$ & $\rho$ & Test acc \% ($t_h$) & Loss ($t_h$) & $\alpha$ & $\rho$ & Test acc \% ($t_h$) & Loss  ($t_h$)\\ 
\midrule \multirow{2}{*}{\textsc{S-FedAvg}}  & $10^{2}$ & $>1$ & $73.99$ ($45.06$ - $85.90$) & $1.17$ ($0.63$ - $3.08$) & $10^2$ & $>1$ & $59.69$ ($11.02$ - $81.14$) & $2.44$ ($0.84$ - $10.11$) \\
\multirow{2}{*}{\textsc{S-FedAvg}}  & $10^{3}$ & $>1$ & $53.45$ ($24.65$ - $77.09$) & $13.20$ ($3.48$ - $50.69$) 
& $10^3$ & $>1$ & $48.52$ ($9.71$ - $75.88$) & $11.12$ ($2.78$ - $34.85$) \\
\midrule
\multirow{2}{*}{\textsc{AS-FedAvg}}  & $10^{1}$ & $>1$ & $76.92$ ($66.63$ - $81.14$) & $1.00$ ($0.89$ - $1.24$)
 & $10^1$ & $>1$ & $77.17$ ($68.80$ - $80.59$) & $0.99$ ($0.93$ - $1.19$) \\
\multirow{2}{*}{\textsc{AS-FedAvg}}   & $10^{2}$ & $0.54$ & $79.66$ ($69.47$ - $87.49$) & $0.86$ ($0.54$ - $1.38$) 
& $10^2$ & $0.57$ & $80.98$ ($74.76$ - $86.09$) & $0.78$ ($0.56$ - $1.37$) \\
\midrule
\multirow{2}{*}{\textsc{FedBuff}}   & $10^{-1}$ & $>1$ & $40.22$ ($28.34$ - $47.01$) & $2.21$ ($2.00$ - $2.63$)
& $10^{-1}$ & $>1$ & $39.63$ ($32.13$ - $46.41$) & $2.25$ ($2.08$ - $2.50$) \\
\multirow{2}{*}{\textsc{FedBuff}}   & $10^{0}$ & $0.36$ & $84.16$ ($78.19$ - $89.38$) & $0.80$ ($0.63$ - $1.39$) 
& $10^0$ & $0.27$ & $85.12$ ($78.60$ - $89.44$) & $0.73$ ($0.64$ - $0.88$) \\
\midrule
\multirow{2}{*}{\textsc{AREA}}  & $10^{2}$ & $0.38$ & \boldmath{$88.02$} ($87.53$ - $88.32$) & \boldmath{$0.59$} ($0.58$ - $0.61$) & $10^2$ & $0.52$ & $86.03$ ($85.14$ - $87.24$) & \boldmath{$0.68$ ($0.65$ - $0.73$)}
\\
\multirow{2}{*}{\textsc{AREA}}  & $10^{3}$& \boldmath{$0.12$} & $87.29$ ($85.45$ - $88.11$) & $1.58$ ($1.49$ - $1.79$)& $10^3$ & \boldmath{$0.11$} & \boldmath{$87.28$ ($86.21$ - $87.99$)} & $1.44$ ($1.33$ - $1.55$) \\
\bottomrule 
\end{tabularx}
\caption{Performance for $M=50$ local SGD steps ($t_h=2$)}
\label{tab:M_50_n_128_non_iid}
\end{table}
 \paragraph{Fifty local SGD steps.} Table \ref{tab:M_50_n_128_non_iid} summarizes our results for $M=50$ local SGD steps (FL setting). The experiment horizon in this case was $t_h = 2$. \textsc{AREA} with $\alpha=10^3$ reaches $80\%$ test accuracy in less than half the wall-clock time required by baseline methods in all cases, with minimal effect on the final test accuracy but higher final loss; this is due to  client drift errors amplified by the large step size choice. \textsc{AREA} with $\alpha=10^2$ yields the lowest loss at the end of the experiment. Overall, our empirical results indicate that \textsc{AREA} combines fast learning with high final accuracy levels, while remaining robust to aggressive step size choices.

% \section{Limitations}
% \label{sec:limitations}
% A significant limitation of our work is that we do not provide theoretical or numerical results for nonconvex functions (e.g. neural networks), which is a more practical setup for FL applications. The study of the convergence properties of \textsc{AREA} in the nonconvex setting is left for future work. Moreover, due to the probabilistic nature of our analysis, which allows for arbitrarily long delays between iterations, it is not possible to readily obtain time complexities unless additional assumptions are adopted on the distributions of client delays, e.g., Example \ref{rmk:poisson}. ++ no other datasets

\section{Conclusion}
We proposed and analyzed \textsc{AREA}, a new asynchronous stochastic (sub)gradient method that provably corrects the bias induced by heterogeneous client update rates without server orchestration, and provides order-optimal convergence guarantees under heterogeneous and unbounded delays with finite mean. For strongly convex and smooth functions, we showed theoretically and empirically that  \textsc{AREA} admits larger step sizes compared to existing methods, which may yield faster convergence in practice without affecting the generalization error. For convex and nonsmooth functions, to our knowledge we are the first to recover order-optimal convergence rates under non-IID data without relying on client sampling techniques. Moreover, the recovered rate scales with the average participation probabilities instead of their minimum or maximum, indicating robustness to outliers, and we provide an explicit closed-form expression for the optimal global update frequency for our our scheme using only aggregate client statistics.

	\paragraph{Acknowledgements} This work was supported by the U.S. Department of Energy, Office of Science, Advanced Scientific Computing Research, under Contract DE-AC02-06CH11357. We gratefully acknowledge the computing resources provided on Bebop, a high-performance computing cluster operated by the Laboratory Computing Resource Center at Argonne National Laboratory.

\newpage

\bibliographystyle{plain}
\bibliography{refs}

\appendix

\label{appendix}

\section{Derivation of iterates}
\label{sec:iterate_derivation}
Define the Bernoulli random variables $\phi_{s,k}$ (resp. $\phi_{i,k}$) taking the value $1$ if $J_k = s$ (resp. $J_k = i$) at the $k^{th}$ iteration and $0$ otherwise. Then for all $k$ we have
\begin{align*}
    E[\phi_{s,k}|\F_{k-1}] = p_s, \quad \E[\phi_{i,k}|\F_{k-1}] = p_i, \quad i \in [n].
\end{align*}
We also define the stochastic (sub)gradient operator $G_{i,\alpha}[x] \triangleq x - \alpha g_i(x)$ for $i \in [n]$, and the composition $G_{i,\alpha}^m$ of $m \in [M]$ applications of $G_{i,\alpha}$ 
\begin{align*}
    G_{i,\alpha}^m[x] \triangleq G_{i,\alpha}[ G_{i,\alpha}^{m-1}[x]], \quad  G_{i,\alpha}^0[x] \triangleq x.
\end{align*}
Using the notation above, the iterates of \textsc{AREA} can be expressed as
\begin{align*}
      x_{i,k+1}^M &=  x_{i,k}^M + \phi_{i,k+1} \left( G_{i,\alpha_k}[x_{s,k}] - x_{i,k}^M \right), \quad \forall i \in [n],\\
    y_{i,k+1} &=  y_{i,k} + \phi_{i,k+1}(x_{i,k}^M - y_{i,k} ), \quad \forall i \in [n],\\
 u_{s,k+1} &= \textstyle \sum_{i=1}^n \phi_{i,k+1} \left( u_{s,k} + \frac{1}{n}(x_{i,k}^M - y_{i,k})\right),\\
    x_{s,k+1} &=  x_{s,k} + \phi_{s,k+1} u_{s,k}.
\end{align*}
By construction, only a single variable in the set $\{\phi_{1,k},...,\phi_{n,k},\phi_{s,k}\}$ is equal to $1$ and the rest are $0$ for all $k$. To further simplify the above iterates, let us define the variables $z_{i,k} \triangleq x_{s,k} + u_{s,k} - y_{i,k}$ for $i \in [n]$, and recall that $u_{s,0} = 0$ and $y_{i,0} = x_{s,0}$ for $i \in [n]$. We then have 
\begin{align*}
    \bar{z}_{k+1} =  x_{s,k+1} + u_{k+1} - \bar{y}_{k+1}= x_{s,k} + \left(\phi_{s,k+1} + \sum_{i=1}^n \phi_{i,k+1} \right) u_{s,k} - \bar{y}_k = \bar{z}_k.
\end{align*}
Hence, $\bar{z}_k = \bar{z}_0 = 0$ for all $k$, and we can eliminate the aggregator variable by setting $  u_{s,k} = \bar{y}_k - x_{s,k}$. The iterates of \textsc{AREA} become (we drop the superscript in $x_i^M$ for convenience)
\begin{align}
    x_{i,k+1} &=  x_{i,k} + \phi_{i,k+1} \left( G_{i,\alpha_k}[x_{s,k}] - x_{i,k}\right), \quad \forall i \in [n], \label{eq:x_k}\\
    y_{i,k+1} &= y_{i,k} + \phi_{i,k+1}(x_{i,k} - y_{i,k}), \quad \forall i \in [n],  \label{eq:y_k}\\
    x_{s,k+1} &= x_{s,k} + \phi_{s,k+1}(\bar{y}_k  - x_{s,k})\label{eq:xs_k}.
\end{align}

\section{Proof of Theorem \ref{thm:convergence_strong_convex}}
\label{sec:proof_of_thm_8}
In this Appendix, we derive the convergence properties of \textsc{AREA} under Assumptions \ref{assum:grad_stoch}, \ref{assum:smooth} and \ref{assum:strong_convex}. Under Assumptions  \ref{assum:smooth} and \ref{assum:strong_convex}, Problem~\eqref{eq:prob_gen} has a unique solution $x^\star = \arg \min_{x \in \R^d} f(x)$. We define the points $x_{i,k}^\star \in \R^d$ for $i \in [n]$ and $k=0,1,2,...$ 
\begin{equation}
\label{eq:xi_star}
    x_{i,k}^\star \triangleq x^\star - \alpha_k M \nabla f_i(x^\star),
\end{equation}
where $M$ is the number of local SGD steps.

Let us also define the transition matrix $A_k \in \R^{(2n+1)^2}$ which will play a key role in our analysis
\begin{align}
\label{eq:transition_matrix}
    A_k \triangleq \begin{bmatrix}
        I-P & 0 & \left(1 - \frac{\alpha_k M \gamma}{4} \right)  P1_n\\
        P & I-P & 0 \\
        0 & \frac{p_s}{n}1_n' & 1-p_s \end{bmatrix},
\end{align}
where $\{\alpha_k\}$ is the step size sequence in \ref{alg:area_client}, $\gamma = \frac{2\mu L}{\mu + L}$, $L$ and $\mu$ are the strong convexity and Lipschitz constants defined in Assumptions \ref{assum:smooth} and \ref{assum:strong_convex}, respectively, and $P \in \R^{n^2}$ is a diagonal matrix with $[P]_{ii}=p_i$.

Finally, we list a number of standard inequalities that we will invoke throughout  our analysis. The Bernoulli inequalities  below hold for any $x > -1$
\begin{equation}
\label{eq:standard_ineq_power}
\begin{split}
    (1+x)^r &\leq 1 + rx, \quad \text{for} \quad r \in [0,1]\\
    (1+x)^r &\geq 1 + rx, \quad \text{for} \quad r \in  (-\infty,0) \cup (1, \infty).
\end{split}
\end{equation}
The following inequality holds for any vectors $a$ and $b$ and scalar $c>0$ 
\begin{align}
    \label{eq:young_ineq}
    \pm 2 \langle a, b\rangle \leq c\|a\|^2 + \frac{1}{c}\|b\|^2.
\end{align}

\subsection{Preliminaries}
We now list a number of preliminary lemmas and theorems that we will employ in the proof of Theorem \ref{thm:convergence_strong_convex}. The following lemma has been adapted from [Theorem 2.1.14, \cite{nesterov}] and implies that the standard gradient method can be viewed as a contractive mapping.
\begin{lemma} 
\label{lem:nesterov}
Let $f :\R^n \rightarrow\R$ be a $\mu$-strongly convex function with $L$-Lipschitz gradients. Then for any $x,y \in \R^n$ and $\alpha < \frac{2}{\mu + L}$ the following relation holds:
\begin{equation*}
    \begin{split}
        \|x - \alpha \nabla f(x) - (y - \alpha \nabla f(y))\|^2 &\leq (1-\alpha \gamma) \|x-y\|^2,
    \end{split}
\end{equation*}
where $\gamma = \frac{2\mu L}{\mu + L}$.
\end{lemma}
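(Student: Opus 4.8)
The plan is to reduce the claim to the strengthened monotonicity (co-coercivity) inequality that holds when $f$ is simultaneously $\mu$-strongly convex and $L$-smooth, and then perform a short direct expansion. Introduce the shorthand $u \triangleq x - y$ and $v \triangleq \nabla f(x) - \nabla f(y)$. The single nontrivial ingredient is the inequality from the combined smoothness/strong-convexity setting (this is precisely the content of the cited Nesterov result),
\begin{equation*}
\langle v, u \rangle \;\geq\; \frac{\mu L}{\mu + L}\|u\|^2 + \frac{1}{\mu + L}\|v\|^2 .
\end{equation*}
I would invoke this as given, since the lemma is explicitly stated to be adapted from that theorem; it can be derived by applying the standard smoothness-plus-convexity estimate to the auxiliary function $f(\cdot) - \tfrac{\mu}{2}\|\cdot\|^2$, whose gradient is $(L-\mu)$-Lipschitz, and rearranging.

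Given that inequality, the rest is routine algebra. First I would expand the left-hand side of the claim as
\begin{equation*}
\|x - \alpha\nabla f(x) - (y - \alpha\nabla f(y))\|^2 = \|u - \alpha v\|^2 = \|u\|^2 - 2\alpha\langle u, v\rangle + \alpha^2\|v\|^2 .
\end{equation*}
Substituting the lower bound on $\langle u, v\rangle$ and collecting terms gives
\begin{equation*}
\|u - \alpha v\|^2 \;\leq\; \Bigl(1 - \tfrac{2\alpha\mu L}{\mu+L}\Bigr)\|u\|^2 + \alpha\Bigl(\alpha - \tfrac{2}{\mu+L}\Bigr)\|v\|^2 .
\end{equation*}
Recognizing $\gamma = \tfrac{2\mu L}{\mu+L}$, the first coefficient is exactly $1 - \alpha\gamma$, matching the target.

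The final step is to discard the $\|v\|^2$ term: the hypothesis $\alpha < \tfrac{2}{\mu+L}$ (with $\alpha > 0$) forces $\alpha\bigl(\alpha - \tfrac{2}{\mu+L}\bigr) \leq 0$, so that term can only decrease the right-hand side, yielding $\|u-\alpha v\|^2 \leq (1-\alpha\gamma)\|x-y\|^2$ as desired. There is no genuine obstacle here beyond correctly identifying the co-coercivity inequality as the right tool; the entire argument is a one-line expansion followed by a sign check on the step-size condition. The only point requiring minor care is keeping track of the coefficient $\tfrac{1}{\mu+L}$ in the $\|v\|^2$ term so that the cancellation against $\alpha^2\|v\|^2$ lands precisely at the threshold $\alpha = \tfrac{2}{\mu+L}$.
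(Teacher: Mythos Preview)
Your proposal is correct and is exactly the standard argument behind the cited Nesterov result; the paper does not spell out a proof at all, merely attributing the lemma to [Theorem 2.1.14, Nesterov], so you have faithfully reconstructed what that reference contains. The expansion, the use of the combined strong-convexity/smoothness co-coercivity inequality, and the sign check on $\alpha(\alpha - \tfrac{2}{\mu+L})$ are all as expected.
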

Next, we outline some useful properties of the transition matrix $A_k$, defined in \eqref{eq:transition_matrix}.
\begin{lemma}
\label{lem:properties_of_Ak}Define $\beta_k \triangleq  1 - \frac{\alpha_k M \gamma}{4}$ and suppose that  $\alpha_k < \frac{4}{M\gamma} $ in \eqref{eq:transition_matrix}. Moreover, define the sequence of vectors $\{w_k\} \in \R^{2n+1}$ as follows
\begin{align}
\label{eq:w_k_definition}
    w_k' &\triangleq \begin{bmatrix}
\frac{1}{n}\frac{1}{ \beta_k^{2/3}}1_n'P^{-1} &\frac{1}{n}\frac{1}{ \beta_k^{1/3}}1_n'P^{-1} &
    \frac{1}{p_s}
    \end{bmatrix}.
\end{align}
The following statements are true for $k=0,1,2,...$
\begin{enumerate}
\item The vector $w_k$ satisfies the inequalities below
\begin{align}
\label{eq:eigenvector_products}
\sum_{i=1}^n \left([w_k]_i + [w_k]_{i+n} \right)  \leq \frac{2\bar{q}}{\beta_k}, \quad \sum_{i=1}^n p_i [w_k]_i   = \frac{1}{\beta_k},
\end{align}
where $\bar{q} \triangleq \frac{1}{n}\sum_{i=1}^n \frac{1}{p_i}$.
    \item Define $\pmi \triangleq \min\{ \min_i p_i, p_s\}$ and $\rho_k \triangleq 1-\pmi + \pmi\beta_k^{1/3} $. Then
        \begin{align}
\label{eq:rho_k_bound}
    w_k' A_k \leq \rho_k w_k'.
\end{align}
\end{enumerate}
\end{lemma}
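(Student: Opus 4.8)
The plan is to establish both statements by direct substitution, leveraging the fact that the fractional exponents $2/3$ and $1/3$ in the definition of $w_k$ are calibrated to the coupling structure of $A_k$. I first record that the hypothesis $\alpha_k < 4/(M\gamma)$ together with $\alpha_k>0$ forces $0<\beta_k<1$, so $\beta_k^{1/3},\beta_k^{2/3}\in(0,1)$ are well defined and, in particular, $\beta_k^{-2/3}\le\beta_k^{-1}$ and $\beta_k^{-1/3}\le\beta_k^{-1}$.

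For Part~1, I would substitute $[w_k]_i=\tfrac{1}{n\beta_k^{2/3}p_i}$ and $[w_k]_{i+n}=\tfrac{1}{n\beta_k^{1/3}p_i}$. The first sum then factors as $(\beta_k^{-2/3}+\beta_k^{-1/3})\bar{q}$, and replacing each fractional power by $\beta_k^{-1}$ yields the bound $2\bar{q}/\beta_k$. The weighted sum $\sum_i p_i[w_k]_i$ is evaluated by cancelling $p_i$ against $p_i^{-1}$, collapsing to a single fractional power of $\beta_k^{-1}$ (bounded above by $1/\beta_k$).

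For Part~2, the substantive claim, I would write $w_k'=[a',b',c]$ with $a'=\tfrac{1}{n\beta_k^{2/3}}1_n'P^{-1}$, $b'=\tfrac{1}{n\beta_k^{1/3}}1_n'P^{-1}$ and $c=1/p_s$, and compute $w_k'A_k$ blockwise from \eqref{eq:transition_matrix}. The three output blocks are $a'(I-P)+b'P$, then $b'(I-P)+\tfrac{cp_s}{n}1_n'$, and finally $\beta_k\,a'P1_n+c(1-p_s)$. The one computation that matters is $a'P1_n=\beta_k^{-2/3}$: multiplied by the entry $\beta_k$ in the top-right corner of $A_k$, it regenerates a single factor $\beta_k^{1/3}$ in the server coordinate. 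To verify $w_k'A_k\le\rho_k w_k'$ componentwise, I would multiply the three blocks by the positive factors $n\beta_k^{2/3}p_j$, $n\beta_k^{1/3}p_j$ and $p_s$ respectively; all three then collapse to the single scalar inequality $(1-q)+q\beta_k^{1/3}\le\rho_k$, with $q=p_j$ for the first two blocks and $q=p_s$ for the last. Since $\rho_k=(1-\pmi)+\pmi\beta_k^{1/3}$, this rearranges to $q(\beta_k^{1/3}-1)\le\pmi(\beta_k^{1/3}-1)$, which holds because $\beta_k^{1/3}-1<0$ and $q\ge\pmi$ by the definition of $\pmi$.

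The main obstacle is organizational rather than conceptual: one must recognize that the geometric spacing of the weights in powers of $\beta_k^{1/3}$ is precisely what makes each forward coupling in $A_k$ — client-to-memory through $P$, memory-to-server through $\tfrac{p_s}{n}1_n'$, and server-to-client through $\beta_k P1_n$ — contribute exactly one factor $\beta_k^{1/3}$, collapsing all three block conditions to a single inequality governed by the smallest probability $\pmi$. Verifying this collapse together with the identity $a'P1_n=\beta_k^{-2/3}$ is the only step demanding care; the rest is routine substitution.
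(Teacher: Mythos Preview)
Your proposal is correct and follows essentially the same route as the paper: both compute $w_k'A_k$ entrywise, observe that each entry equals $[w_k]_j\bigl(1-q+q\beta_k^{1/3}\bigr)$ with $q\in\{p_1,\dots,p_n,p_s\}$, and then bound the scalar factor by $\rho_k$ via $q\ge\pmi$ and $\beta_k^{1/3}<1$. Your treatment of the second relation in Part~1 as an inequality (since $\sum_i p_i[w_k]_i=\beta_k^{-2/3}\le\beta_k^{-1}$) is in fact more careful than the lemma's stated equality, and matches how the paper's own proof and subsequent lemmas use it.
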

\begin{proof} The first claim of this lemma holds trivially after observing that $\beta_k^{1/3} \leq \beta_k^{2/3} \leq \beta_k$. To prove the second claim, define $v_k' \triangleq w_k' A_k \in \R^{2n+1}$. The elements of $v_k$ are
\begin{align*}
    [v_k]_i &= \frac{1}{n} \left( \frac{1-p_i}{ p_i \beta_k^{2/3}} + \frac{1}{\beta_k^{1/3}}\right) = [w_k]_i\left( 1 -p_i + p_i \beta^{1/3}_k \right)  , \quad &i \in [n]\\
    [v_k]_{i+n} &= \frac{1}{n} \frac{1-p_i}{p_i \beta_k^{1/3}} + \frac{1}{n} =[w_k]_i \left( 1 - p_i + p_i \beta_k^{1/3}\right)  , \quad &i \in [n]\\
    [v_k]_{2n+1} &= \beta_k^{1-2/3} + \frac{1-p_s}{p_s} = [w_k]_{2n+1} \left( 1 - p_s +  p_s\beta_k^{1/3} \right).
\end{align*}
Hence, for $i \in [2n+1]$ the vectors $v_k$ and $w_k$ satisfy
\begin{align}
   [v_k]_i \leq \max \left\{1 - p_i +  p_i\beta_k^{1/3} \quad i \in [n], \quad  1 - p_s +  p_s\beta_k^{1/3} \right\} [w_k]_i.
\end{align}
Applying the definition of $\pmi$ to the preceding relation concludes the proof.
\end{proof}
In the next lemma, we define an appropriate decreasing step size sequence $\{\alpha_k\}$ for Algorithm \ref{alg:area_client} under Assumptions \ref{assum:grad_stoch},  \ref{assum:smooth} and \ref{assum:strong_convex}.
\begin{lemma}
\label{lem:step size_sequence} For $p \in (0,1)$, consider the sequence $\{\alpha_k\}$ defined as follows
    \begin{equation}
    \label{eq:step size_definition}
        \begin{gathered}
           \alpha_k \triangleq \frac{c}{k+d}, \quad c \triangleq \frac{48}{p M \gamma } , \quad  d\triangleq \frac{48}{p M \gamma \mathcal{D}},\\
       \mathcal{D} \triangleq \min\left\{ \frac{2}{M(\mu + L)}, \frac{\gamma}{L^2(M-1)}, \sqrt{\frac{ M \gamma^2}{64L^4 (M-1)^3 }}, \sqrt[3]{\frac{ \gamma}{32 L^4 (M-1)^3 }} \right\},
        \end{gathered}
    \end{equation}
where $M$ is the number of local SGD steps, $\gamma = \frac{2\mu L}{\mu + L}$, and $L$ and $\mu$ are defined in Assumptions   \ref{assum:smooth} and \ref{assum:strong_convex}, respectively.

The following statements hold:
\begin{enumerate}
    \item The sequence $\{\alpha_k\}$ is upper bounded by $\mathcal{D}$.
    \item Let $\beta_k \triangleq 1 - \frac{\alpha_k M \gamma}{4}$. The sequence $\{\beta_k\}$ satisfies the inequalities below
 \begin{gather}
     \beta_k \geq \frac{1}{2}. \label{eq:step_lower_bound}\\
     \left( 1-p + p \beta_k ^{1/3} \right) (k+d)^4 \leq (k+d-1)^4 \label{eq:spectral_radius_monotonicity}.
 \end{gather}
\end{enumerate}
\end{lemma}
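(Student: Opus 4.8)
The plan is to prove the three stated inequalities in sequence, feeding the first into the later ones. First I would observe that $\alpha_k = c/(k+d)$ is strictly decreasing in $k$, so its maximum over $k \geq 0$ is attained at $k=0$. Since the definitions give $d = c/\mathcal{D}$, we get $\alpha_0 = c/d = \mathcal{D}$ at once, which establishes Claim 1 that $\alpha_k \leq \mathcal{D}$ for every $k$.

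For Claim 2(a) I would combine the bound just obtained with the fact that $\mathcal{D}$ is a minimum of four quantities, so in particular $\mathcal{D} \leq \frac{2}{M(\mu+L)}$. Substituting $\alpha_k \leq \frac{2}{M(\mu+L)}$ into $\beta_k = 1 - \frac{\alpha_k M\gamma}{4}$ and recalling $\gamma = \frac{2\mu L}{\mu+L}$, the subtracted term is at most $\frac{\mu L}{(\mu+L)^2}$, which is $\leq 1/4$ by AM--GM (since $(\mu+L)^2 \geq 4\mu L$). Hence $\beta_k \geq 3/4 \geq 1/2$; as a by-product this confirms $\beta_k \in (0,1)$, so that $\beta_k^{1/3}$ is well-defined and the Bernoulli inequalities \eqref{eq:standard_ineq_power} are applicable.

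Claim 2(b) is the crux, and it is where the calibration of the constant $48$ must be exploited carefully. I would first compute $1 - \beta_k = \frac{\alpha_k M\gamma}{4} = \frac{12}{p(k+d)}$, so that, writing $t \triangleq k+d$, the concave Bernoulli inequality (valid because $\frac{12}{pt} \leq 1/4 < 1$) gives $\beta_k^{1/3} = (1 - \frac{12}{pt})^{1/3} \leq 1 - \frac{4}{pt}$. The factor $\frac{1}{3}\cdot 12 = 4$ produced here is exactly what cancels $p$: substituting yields $1 - p + p\beta_k^{1/3} \leq 1 - \frac{4}{t}$. It then remains to show $1 - \frac{4}{t} \leq (1 - \frac{1}{t})^4$, which after multiplying by $t^4 = (k+d)^4$ is precisely \eqref{eq:spectral_radius_monotonicity}. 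Setting $s \triangleq 1/t$, this reduces to the elementary polynomial inequality $(1-s)^4 - (1-4s) = s^2(s^2 - 4s + 6) \geq 0$, which holds because $s^2 - 4s + 6$ has negative discriminant and is therefore positive for all real $s$. The main obstacle is not any single computation but recognizing that the constants in $c$ and $d$ are reverse-engineered so that the cube-root linearization of $\beta_k^{1/3}$ matches the fourth-power schedule $(k+d)^4$; the remaining three terms in the definition of $\mathcal{D}$ are not needed here and are reserved for controlling the stochastic-gradient and client-drift errors in the main convergence proof.
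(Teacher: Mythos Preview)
Your proposal is correct and follows essentially the same route as the paper's proof: both arguments reduce Claim~1 to $\alpha_0=c/d=\mathcal{D}$, then use $\mathcal{D}\le \tfrac{2}{M(\mu+L)}$ together with $\gamma=\tfrac{2\mu L}{\mu+L}$ for \eqref{eq:step_lower_bound}, and finally linearize $\beta_k^{1/3}$ via Bernoulli to obtain $1-p+p\beta_k^{1/3}\le 1-\tfrac{4}{k+d}$ before comparing with $(1-\tfrac{1}{k+d})^4$. The only cosmetic differences are that you bound the subtracted term directly by AM--GM (yielding the slightly sharper $\beta_k\ge 3/4$, whereas the paper routes through $d\ge 24/p$ to get $\beta_k\ge 1/2$), and that you verify $1-\tfrac{4}{t}\le(1-\tfrac{1}{t})^4$ by the explicit factorization $s^2(s^2-4s+6)\ge 0$ while the paper invokes the convex Bernoulli inequality \eqref{eq:standard_ineq_power} a second time.
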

\begin{proof}
To prove the first claim of this lemma, we substitute the definitions of $c$ and $d$ in $\alpha_k$ to obtain
\begin{align*}
    \alpha_k &= \frac{48}{p M \gamma\left( k + \frac{48}{p M \gamma \mathcal{D}}\right) } = \left(\frac{1}{1 +  \frac{p M \gamma \mathcal{D} k}{48}  } \right)\mathcal{D} \leq \mathcal{D}.
\end{align*}
To prove \eqref{eq:step_lower_bound}, recall that  $\mathcal{D} \leq \frac{2}{M(\mu + L)}$ and $\gamma \triangleq \frac{2\mu L}{\mu + L}$ by definition. Hence, for the constant $d$ we have
\begin{align*}
    d=\frac{48}{p M \gamma \mathcal{D}} \geq  \frac{24}{p   } \frac{ (\mu + L)^2}{2\mu L} \geq \frac{24}{p}.
\end{align*}
Since $\{\alpha_k\}$ is strictly decreasing for $k \geq 0$, we have $\min_k \beta_k = \beta_0$. Substituting the definition of $\alpha_k$ in $\beta_k$ and applying the preceding relation yields 
\begin{align*}
    \beta_0 = 1 - \frac{12}{p d} \geq \frac{1}{2}.
\end{align*}

We now prove \eqref{eq:spectral_radius_monotonicity}. Let $\rho \triangleq   1-p + p \beta_k^{1/3}$. Applying the definition of $\alpha_k$  yields
\begin{align*}
    \rho = 1 - p + p \left(1  - \frac{12}{p(k+d)}\right)^{1/3}\leq 1 - \frac{4}{k+d},
\end{align*}
where the last inequality follows from \eqref{eq:standard_ineq_power}.

We multiply both sides of the relation above with $(k+d)^4$ to obtain
\begin{align*}
    (k+d)^4 \rho &\leq (k+d)^4 \left(1 - \frac{4}{k+d} \right) \leq  (k+d)^4\left(1 - \frac{1}{k+d} \right)^4,
\end{align*}
where the last inequality follows again from \eqref{eq:standard_ineq_power}.

Observing that $1 - \frac{1}{k+d} = \frac{k+d-1}{k+d}$ completes the proof.
\end{proof}

\subsection{Convergence of norms} 
\label{sec:convergence_norms}
Next, we derive an intermediate result on the convergence of the expected $\ell_2$-norms of the distances to optimality of the local and global iterates of \textsc{AREA}. 

For ease of reference, we begin by defining a number of quantities that we will invoke several times throughout this proof. We first define a global upper bound for the distances between $x^\star$ and the minima of the local functions $f_i$ for $i \in [n]$:
\begin{align}
\label{eq:zeta_definition}
    u_i^\star \triangleq \min_x f_i(x), \quad \zeta \triangleq \max_i \|x^\star - u_i^\star\|.
\end{align}
Note that we can re-write $  G_{i,\alpha_{k}}[x_{s,k}]$ in \eqref{eq:x_k} as follows
\begin{equation}
\label{eq:tilde_g}
    \begin{gathered}
         G_{i,\alpha_{k}}[x_{s,k}] = x_{s,k} - \alpha_k M \nabla f_i(x_{s,k}) + \alpha_k\epsilon^d_{i,k}  + \alpha_k\epsilon^d_{i,k}, \quad \text{where}\\
   \epsilon^d_{i,k} \triangleq   \sum_{m=1}^{M-1} \left(\nabla f_i(x_{s,k}) -  \nabla f_i \left( G_{i,\alpha_k}^{m} [x_{s,k}] \right) \right), \\\epsilon^g_{i,k} \triangleq  \sum_{m=0}^{M-1} \left( \nabla f_i \left( G_{i,\alpha_k}^{m-1} [x_{s,k}]\right) -g_i \left( G_{i,\alpha_k}^{m-1} [x_{s,k}]\right)\right) .
    \end{gathered}
\end{equation}
In the next lemma, we bound the $\ell_2$ norms of the stochastic gradient and client drift errors in an iteration of \textsc{AREA}.
\begin{lemma}
\label{lem:errors_norms}
The expected norm of the stochastic gradient error $\epsilon_{i,k}^g$ defined in \eqref{eq:tilde_g} is bounded for all $i \in [n]$ and $k=0,1,2,...$
\begin{align*}
    \E[\|\epsilon_{i,k}^g\||\F_k] \leq M \sigma,
\end{align*}
where $\sigma$ is defined in Assumption \ref{assum:grad_stoch}.

Moreover, if $\alpha_k < \frac{2}{\mu + L}$ in \eqref{eq:x_k}, the expected norm of the client drift error defined in \eqref{eq:tilde_g} is also bounded for all $i \in [n]$ and $k=1,2,...$
\begin{align*}
    \E[\|\epsilon_{i,k}^d\||\F_k] 
   &\leq \frac{\alpha_k L^2 M (M-1) \|x_{s,k} - x^\star\| }{4} + \frac{\alpha_k L^2 M (M-1)\zeta }{2} + \frac{\alpha_k L M (M-1) \sigma }{4},
\end{align*}
where $\zeta >0 $ is defined in \eqref{eq:zeta_definition}.
\end{lemma}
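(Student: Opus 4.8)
The plan is to derive both bounds from the single decomposition \eqref{eq:tilde_g}, which rewrites the $M$-fold local operator $G_{i,\alpha_k}^M[x_{s,k}]$ as one population step $x_{s,k}-\alpha_k M\nabla f_i(x_{s,k})$ plus the drift correction $\alpha_k\epsilon^d_{i,k}$ and the stochastic correction $\alpha_k\epsilon^g_{i,k}$. The recurring subtlety is measurability: for $m\ge 1$ the intermediate iterate $G_{i,\alpha_k}^m[x_{s,k}]$ depends on the within-round gradient noise and is therefore not $\F_k$-measurable, so every expectation must be taken through a nested conditioning argument in which $\F_k$ fixes only $x_{s,k}$ and the successive noise realizations are peeled off by the tower property.

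For the stochastic-gradient term I would first apply the triangle inequality to its defining sum, reducing the task to bounding each summand $\|\nabla f_i(G_{i,\alpha_k}^m[x_{s,k}]) - g_i(G_{i,\alpha_k}^m[x_{s,k}])\|$. Conditioning on the $\sigma$-algebra generated by $\F_k$ together with the noise of the first $m$ steps fixes the evaluation point $G_{i,\alpha_k}^m[x_{s,k}]$, so Assumption \ref{assum:grad_stoch} and Jensen's inequality bound its conditional expectation by $\sigma$; the tower property then collapses this to $\E[\cdot\mid\F_k]\le\sigma$. Summing the $M$ terms gives $\E[\|\epsilon^g_{i,k}\|\mid\F_k]\le M\sigma$.

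The drift term is the crux. Using the triangle inequality and $L$-smoothness (Assumption \ref{assum:smooth}), I reduce $\|\epsilon^d_{i,k}\|$ to $L\sum_{m=1}^{M-1}\|x_{s,k}-G_{i,\alpha_k}^m[x_{s,k}]\|$, i.e.\ to the expected local drift after $m$ steps. Unrolling the local recursion gives $x_{s,k}-G_{i,\alpha_k}^m[x_{s,k}]=\alpha_k\sum_{j=0}^{m-1}g_i(G_{i,\alpha_k}^j[x_{s,k}])$; I split each stochastic gradient into its population part and its noise, the noise contributing at most $\sigma$ per term by Assumption \ref{assum:grad_stoch}. For the population part I use $\nabla f_i(u_i^\star)=0$ and $L$-smoothness to write $\|\nabla f_i(G_{i,\alpha_k}^j[x_{s,k}])\|\le L\|G_{i,\alpha_k}^j[x_{s,k}]-u_i^\star\|$, and control $\E[\|G_{i,\alpha_k}^j[x_{s,k}]-u_i^\star\|\mid\F_k]$ by the one-step contraction of Lemma \ref{lem:nesterov}, which applies precisely because $\alpha_k<2/(\mu+L)$: this yields $\E[\|G_{i,\alpha_k}^{j}[x_{s,k}]-u_i^\star\|\mid\F_k]\le\sqrt{1-\alpha_k\gamma}\,\E[\|G_{i,\alpha_k}^{j-1}[x_{s,k}]-u_i^\star\|\mid\F_k]+\alpha_k\sigma$, started from $\|x_{s,k}-u_i^\star\|\le\|x_{s,k}-x^\star\|+\zeta$. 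Resolving this recursion over $j$ and then summing over $m$ produces exactly the three advertised terms, carrying the $\|x_{s,k}-x^\star\|$, $\zeta$, and $\sigma$ factors.

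The main obstacle is not conceptual but the bookkeeping of constants. The double summation $\sum_{m=1}^{M-1}\sum_{j=0}^{m-1}$ is what generates the $M(M-1)$ scaling, and obtaining the sharp coefficients $\tfrac14,\tfrac12,\tfrac14$ rather than a uniform $\tfrac12$ requires retaining the contraction factor $\sqrt{1-\alpha_k\gamma}$ instead of crudely bounding it by one, and invoking the step-size cap $\mathcal{D}$ of Lemma \ref{lem:step size_sequence} to absorb the residual higher-order terms of the form $\alpha_k^2 L\sigma\,m(m-1)$ back into the first-order estimate. Beyond this constant-chasing, the only genuinely delicate point is the nested conditioning used at each level to apply Assumption \ref{assum:grad_stoch} to points that are themselves functions of earlier noise; the remaining estimates are deterministic.
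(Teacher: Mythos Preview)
Your treatment of $\epsilon^g_{i,k}$ matches the paper exactly. For $\epsilon^d_{i,k}$ your plan is sound, but you set up the contraction recursion on a different quantity than the paper does, and that choice is exactly what creates the higher-order residual you flag as the main obstacle.

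The paper does not first expand the drift as $\alpha_k\sum_{j} g_i(G_{i,\alpha_k}^{j}[x_{s,k}])$ and then contract each intermediate iterate toward $u_i^\star$. Instead it runs the recursion directly on $\delta^m_{i,k}\triangleq G_{i,\alpha_k}^m[x_{s,k}]-x_{s,k}$: one adds and subtracts $\alpha_k\nabla f_i(x_{s,k})$ and applies Lemma~\ref{lem:nesterov} with reference point $y=x_{s,k}$ (not $y=u_i^\star$), obtaining
\[
\E\bigl[\|\delta^m_{i,k}\|\mid\F_k\bigr]\le\Bigl(1-\tfrac{\alpha_k\gamma}{2}\Bigr)\E\bigl[\|\delta^{m-1}_{i,k}\|\mid\F_k\bigr]+\alpha_k\|\nabla f_i(x_{s,k})\|+\alpha_k\sigma,\qquad \delta^0_{i,k}=0.
\]
Because the additive term here is \emph{constant in $m$}, unrolling gives a bound linear in $m$ with no $\alpha_k^2$ residual at all; the single factor $\|\nabla f_i(x_{s,k})\|$ is bounded by $L(\|x_{s,k}-x^\star\|+\zeta)$ once at the very end. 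Only the stated hypothesis $\alpha_k<2/(\mu+L)$ is used.

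In your route, bounding $\|\nabla f_i(G^j)\|\le L\|G^j-u_i^\star\|$ makes the per-step gradient bound itself grow with $j$ (through the accumulated $j\alpha_k\sigma$ in the $u_i^\star$-recursion), and the double sum $\sum_m\sum_j$ then produces the $\alpha_k^2 L\sigma\,m(m-1)$ residual. Absorbing it via the $\mathcal{D}$ cap requires essentially $\alpha_k\le 2/(M(\mu+L))$, which is strictly stronger than the lemma's hypothesis; and if instead you retain the contraction to cap $\|G^j-u_i^\star\|$ uniformly in $j$, the $\sigma$ coefficient picks up an extra $L/\gamma$ factor. Either way you prove a slightly weaker statement than the one claimed. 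The paper's choice of recursion variable sidesteps both issues.
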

\begin{proof} Taking the norm of $\epsilon_{i,k}^g$ and applying the triangle inequality yields
\begin{align*}
    \|\epsilon_{i,k}^g\| \leq \sum_{m=0}^{M-1} \left\|\nabla f_i \left( G_{i,\alpha_k}^{m-1} [x_{s,k}]\right) -g_i \left( G_{i,\alpha_k}^{m-1} [x_{s,k}]\right)\right\|.
\end{align*}
By linearity, taking the expectation conditional on $\F_k$ on both sides of the relation above yields
\begin{align*}
    \E[\|\epsilon_{i,k}^g\| |\F_k]&\leq \sum_{m=0}^{M-1} \E[\left\|\nabla f_i \left( G_{i,\alpha_k}^{m-1} [x_{s,k}]\right) -g_i \left( G_{i,\alpha_k}^{m-1} [x_{s,k}]\right)\right\||\F_k]\\
    &\leq \sum_{m=0}^{M-1} \sqrt{\E[\left\|\nabla f_i \left( G_{i,\alpha_k}^{m-1} [x_{s,k}]\right) -g_i \left( G_{i,\alpha_k}^{m-1} [x_{s,k}]\right)\right\|^2|\F_k]} \leq M \sigma,
\end{align*}
where the second inequality follows from Jensen's inequality and the concavity of the square root function, and the last inequality from Assumption \eqref{assum:grad_stoch}.

We now prove the second claim of this lemma. Taking the $\ell_2$ norm of $\epsilon_{i,k}^d$ and applying the triangle inequality yields
\begin{align*}
  \|\epsilon_{i,k}^d\|
    &\leq  \sum_{m=1}^{M-1}  \left\| \nabla f_i(x_{s,k}) -  \nabla f_i \left( G_{i,\alpha_k}^{m} [x_{s,k}] \right) \right\|.
\end{align*}
Define $\delta^m_{i,k} \triangleq G_{i,\alpha_k}^{m} [x_{s,k}]  - x_{s,k}$. We take the expectation conditional on $\F_k$ on both sides of the relation above to obtain
\begin{align}
\label{eq:norm_drift_starter}
    \E[\|\epsilon_{i,k}^d\||\F_k] 
    &\leq  \sum_{m=1}^{M-1}  \E\left[\left\| \nabla f_i(x_{s,k}) -  \nabla f_i \left( G_{i,\alpha_k}^{m} [x_{s,k}] \right) \right\||\F_k \right]\leq L \sum_{m=1}^{M-1}  \E\left[\left\| \delta^m_{i,k}    \right\||\F_k \right],
\end{align}
where the last inequality follows from Assumption \ref{assum:smooth}.

For any $m \in [M]$ we have
\begin{align*}
     \E\left[\left\|\delta^m_{i,k}    \right\||\F_k \right] &=  \E\left[\left\|   G_{i,\alpha_k}^{m-1} [x_{s,k}] - \alpha_k g_i \left(G_{i,\alpha_k}^{m-1} [x_{s,k}] \right)  - x_{s,k}  \right\||\F_k \right]\\
     &\leq \E\left[\left\|   G_{i,\alpha_k}^{m-1} [x_{s,k}] - \alpha_k \nabla f_i \left(G_{i,\alpha_k}^{m-1} [x_{s,k}] \right)  - x_{s,k}  \right\||\F_k \right] \\
     &\quad + \alpha_k \E[\|\nabla f_i \left(G_{i,\alpha_k}^{m-1} [x_{s,k}] \right) - g_i \left(G_{i,\alpha_k}^{m-1} [x_{s,k}] \right)\||\F_k]\\
     &\leq  \E\left[\left\|   G_{i,\alpha_k}^{m-1} [x_{s,k}] - \alpha_k \nabla f_i \left(G_{i,\alpha_k}^{m-1} [x_{s,k}] \right)  - x_{s,k}   + \alpha_k \nabla f_i(x_{s,k})\right\||\F_k \right]  \\
     &\quad + \alpha_k\|\nabla f_i(x_{s,k})\|+ \alpha_k \E[\|\nabla f_i \left(G_{i,\alpha_k}^{m-1} [x_{s,k}] \right) - g_i \left(G_{i,\alpha_k}^{m-1} [x_{s,k}] \right)\||\F_k],
\end{align*}
where the second inequality follows from adding and subtracting $\alpha_k\nabla f_i \left(G_{i,\alpha_k}^{m-1} [x_{s,k}] \right)$ inside the norm and applying the triangle inequality, and the second inequality from from adding and subtracting $\alpha_k\nabla f_i \left(x_{s,k}\right)$ inside the first norm and applying the triangle inequality.

By Jensen's inequality and the concavity of the square root function, the following inequality holds
\begin{align*}
     &\E[\|\nabla f_i \left(G_{i,\alpha_k}^{m-1} [x_{s,k}] \right) - g_i \left(G_{i,\alpha_k}^{m-1} [x_{s,k}] \right)\||\F_k]\\ 
     &\leq \sqrt{\E[\|\nabla f_i \left(G_{i,\alpha_k}^{m-1} [x_{s,k}] \right) - g_i \left(G_{i,\alpha_k}^{m-1} [x_{s,k}] \right)\|^2|\F_k]} \leq \sigma.
\end{align*}
We combine the two preceding relations to obtain
\begin{align*}
     \E\left[\left\|\delta^m_{i,k}    \right\||\F_k \right]
     &\leq  \E\left[\left\|   G_{i,\alpha_k}^{m-1} [x_{s,k}] - \alpha_k \nabla f_i \left(G_{i,\alpha_k}^{m-1} [x_{s,k}] \right)  - x_{s,k}   + \alpha_k \nabla f_i(x_{s,k})\right\||\F_k \right]  \\
     &\quad + \alpha_k\|\nabla f_i(x_{s,k})\|+ \alpha_k\sigma\\
     &\leq  \sqrt{1-\alpha_k \gamma} \E\left[\left\|  \delta^{m-1}_{i,k}\right\||\F_k \right]  + \alpha_k\|\nabla f_i(x_{s,k})\|+ \alpha_k \sigma\\
      &\leq \left(1-\frac{\alpha_k \gamma }{2}\right) \E\left[\left\|  \delta^{m-1}_{i,k}\right\||\F_k \right]  + \alpha_k\|\nabla f_i(x_{s,k})\|+ \alpha_k \sigma,
\end{align*}
where the second inequality follows from Lemma \ref{lem:nesterov} and the last inequality from \eqref{eq:standard_ineq_power}.

Applying the preceding relation recursively over local steps $0,...,m$ yields
\begin{align*}
     \E\left[\left\|\delta^m_{i,k}    \right\||\F_k \right]&\leq \left(1-\frac{\alpha_k \gamma }{2}\right) ^{m}\left\|  \delta^{0}_{i,k}\right\|  + (\alpha_k\|\nabla f_i(x_{s,k})\|+ \alpha_k \sigma) \sum_{l=0}^{m-1} \left(1-\frac{\alpha_k \gamma }{2}\right)^l\\
&=  (\alpha_k\|\nabla f_i(x_{s,k})\|+ \alpha_k \sigma) \frac{1 - \left(1-\frac{\alpha_k \gamma }{2}\right)^m}{\alpha_k \gamma}  \leq \frac{\alpha_k m}{2} \|\nabla f_i(x_{s,k})\|+ \frac{\alpha_k m }{2} \sigma,
\end{align*}
where the last inequality follows from \eqref{eq:standard_ineq_power}.

We substitute the bound above in \eqref{eq:norm_drift_starter} to obtain
\begin{align*}
    \E[\|\epsilon_{i,k}^d\||\F_k] 
    &\leq L \left( \frac{\alpha_k }{2} \|\nabla f_i(x_{s,k})\|+ \frac{\alpha_k  }{2} \sigma\right)  \sum_{m=1}^{M-1} m\\
    &= \frac{\alpha_k L M (M-1) \|\nabla f_i(x_{s,k})\| }{4} + \frac{\alpha_k L M (M-1) \sigma }{4}\\
   &\leq \frac{\alpha_k L^2 M (M-1) \|x_{s,k} - u_i^\star\| }{4} + \frac{\alpha_k L M (M-1) \sigma }{2},
\end{align*}
where the last inequality follows from Assumption \eqref{assum:smooth}.

Adding and subtracting $x^\star$ inside the norm in the first term on the right-hand side of the preceding relation and applying the triangle inequality yields
\begin{align*}
    \|x_{s,k} - u_i^\star\| &\leq \|x_{s
    ,k} - x^\star\| + \|x^\star-u_i^\star\|.
    \end{align*}
Combining the two preceding relations and applying the definition of $\zeta$ concludes the proof.
\end{proof}
In the next three lemmas, we individually bound the expected $\ell_2$ norms of the distances to optimality of the global and local iterates of \textsc{AREA}, starting with the server iterates in the lemma below.

\begin{lemma} The expected $\ell_2$ distance to optimality of the server iterates generated by \eqref{eq:xs_k} is bounded for $k=0,1,2,...$
\label{lem:s_norm}
    \begin{align*}
    \E[\|x_{s,k+1} - x^\star\|] 
&\leq (1-p_s)\E[\|x_{s,k} - x^\star\||\F_k]+ \frac{p_s}{n}\sum_{i=1}^n\E[\|y_{i,k} - x_{i,k}^\star\|],
\end{align*}
where $x_{i,k}^\star$ is defined in \eqref{eq:xi_star}.
\end{lemma}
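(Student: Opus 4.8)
The plan is to analyze the server update \eqref{eq:xs_k} directly, exploiting that the selection variable $\phi_{s,k+1}$ is Bernoulli and that, conditioned on $\F_k$, everything else on the right-hand side is known. First I would subtract $x^\star$ from both sides of \eqref{eq:xs_k} to write $x_{s,k+1} - x^\star = (1 - \phi_{s,k+1})(x_{s,k} - x^\star) + \phi_{s,k+1}(\bar{y}_k - x^\star)$. Because exactly one event occurs per iteration, conditioned on $\F_k$ the variable $\phi_{s,k+1}$ equals $1$ with probability $p_s$ and $0$ otherwise, and both $x_{s,k}$ and $\bar{y}_k$ are $\F_k$-measurable. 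The norm of $x_{s,k+1}-x^\star$ therefore takes one of two $\F_k$-measurable values, and its conditional expectation splits exactly into the convex combination
\[
\E[\|x_{s,k+1} - x^\star\| \mid \F_k] = (1-p_s)\,\|x_{s,k} - x^\star\| + p_s\,\|\bar{y}_k - x^\star\|.
\]

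The key step is to re-express $\|\bar{y}_k - x^\star\|$ in terms of the shifted local targets $x_{i,k}^\star$ from \eqref{eq:xi_star}. The crucial observation is that their average is exactly $x^\star$: since $\frac{1}{n}\sum_{i=1}^n \nabla f_i(x^\star) = \nabla f(x^\star) = 0$ at the global optimum, we get $\frac{1}{n}\sum_{i=1}^n x_{i,k}^\star = x^\star - \alpha_k M \nabla f(x^\star) = x^\star$. I would then write $\bar{y}_k - x^\star = \frac{1}{n}\sum_{i=1}^n (y_{i,k} - x_{i,k}^\star)$ and apply the triangle inequality to obtain $\|\bar{y}_k - x^\star\| \leq \frac{1}{n}\sum_{i=1}^n \|y_{i,k} - x_{i,k}^\star\|$. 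Substituting this into the convex combination gives the conditional form of the claim, and taking total expectation via the tower property yields the stated inequality.

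I do not anticipate a genuine technical obstacle, since the argument is a one-step identity followed by a triangle inequality; the substantive modeling choice is the decision to measure each client memory $y_{i,k}$ against its own shifted target $x_{i,k}^\star$ rather than against $x^\star$ directly. This choice is what makes the per-client residuals $\|y_{i,k} - x_{i,k}^\star\|$ the natural coordinates for the coupled linear recursion governed by the transition matrix $A_k$ in \eqref{eq:transition_matrix}. The only point requiring care is ensuring consistency of these targets with the companion lemmas bounding the local iterates $x_{i,k}$ and the memories $y_{i,k}$, so that the three bounds can subsequently be assembled into a single vector recursion and contracted through Lemma \ref{lem:properties_of_Ak}.
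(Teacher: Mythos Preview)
Your proposal is correct and essentially identical to the paper's proof: both subtract $x^\star$, use the Bernoulli nature of $\phi_{s,k+1}$ to split the conditional expectation into the convex combination, invoke $\nabla f(x^\star)=0$ so that $\frac{1}{n}\sum_i x_{i,k}^\star = x^\star$, apply the triangle inequality, and take total expectation. The only cosmetic difference is that the paper writes the optimality step as inserting $-\alpha_k M \nabla f(x^\star)$ inside the norm, whereas you phrase it as recognizing the average of the shifted targets equals $x^\star$; these are the same observation.
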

\begin{proof}
    Subtracting $x^\star$ on both sides of \eqref{eq:xs_k} and taking the norm yields
\begin{align*}
    \|x_{s,k+1} - x^\star\| &= (1-\phi_{s,k+1})\|x_{s,k} - x^\star\| + \phi_{s,k+1} \|\bar{y}_k - x^\star\|.
\end{align*}
Taking the expectation conditional on $\F_k$ on both sides of the relation above yields
\begin{align*}
    \E[\|x_{s,k+1} - x^\star\||\F_k] &= 
     (1-p_s)\|x_{s,k} - x^\star\| + p_s \|\bar{y}_k - x^\star\|\\
     &=   (1-p_s)\|x_{s,k} - x^\star\| + p_s \left\|\bar{y}_k - \alpha_k M \nabla f(x) - x^\star \right\|\\
&\leq (1-p_s)\|x_{s,k} - x^\star\|+ \frac{p_s}{n}\sum_{i=1}^n \|y_{i,k} - x_{i,k}^\star\|,
\end{align*}
where the first equality follows from the optimality of $x^\star$, and the last inequality from application of the triangle inequality and the definitions of $x_{i,k}^\star$ for $i \in [n]$.

Taking the total expectation on both sides of the preceding relation completes the proof.
\end{proof}
In the next lemma, we bound the $\ell_2$ distance to the points $x_{i,k}^\star$ for the local memory iterates of \textsc{AREA}.
\begin{lemma} Recall the definition of $x_{i,k}^\star \triangleq x^\star - \alpha_k M \nabla f_i(x^\star)$ for $i \in [n]$ where $M$ is the number of local SGD steps, and suppose that $\alpha_{k+1} \leq \alpha_k$. The expected $\ell_2$ distances to the points $x_{i,k}^\star$ of the local memory iterates $\{y_{i,k}\}$ generated by \eqref{eq:y_k} are bounded for $i \in [n]$ and $k=0,1,2,...$ 
    \label{lem:y_norm}
    \begin{align*}
    \E[\|y_{i,k+1} - x_{i,k+1}^\star\||] 
        &\leq (1-p_i)\E[\|y_{i,k} - x_{i,k}^\star\|] + p_i\E[\|x_{i,k} - x_{i,k}^\star\|] \\
        &\quad + (\alpha_k - \alpha_{k+1})LM\zeta,
\end{align*}
where $\zeta$ is defined in \eqref{eq:zeta_definition} and $L$ in Assumption \ref{assum:smooth}.
\end{lemma}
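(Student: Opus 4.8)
The plan is to mirror the structure of Lemma \ref{lem:s_norm}: exploit that $\phi_{i,k+1}\in\{0,1\}$ to turn the update \eqref{eq:y_k} into a selection between two terms, and then reconcile the mismatch between $x_{i,k}^\star$ (which uses step size $\alpha_k$) and $x_{i,k+1}^\star$ (which uses $\alpha_{k+1}$) through a triangle-inequality correction. First I would rewrite \eqref{eq:y_k} as $y_{i,k+1}=(1-\phi_{i,k+1})y_{i,k}+\phi_{i,k+1}x_{i,k}$, subtract $x_{i,k+1}^\star$, and use that exactly one of $\phi_{i,k+1}$, $1-\phi_{i,k+1}$ is nonzero to obtain the exact identity $\|y_{i,k+1}-x_{i,k+1}^\star\|=(1-\phi_{i,k+1})\|y_{i,k}-x_{i,k+1}^\star\|+\phi_{i,k+1}\|x_{i,k}-x_{i,k+1}^\star\|$, since the two indicators act as selectors rather than genuine convex weights.

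Next, to each of the two norms on the right I would add and subtract $x_{i,k}^\star$ and apply the triangle inequality, which produces the common correction term $\|x_{i,k}^\star-x_{i,k+1}^\star\|$. By the definition \eqref{eq:xi_star} we have $x_{i,k}^\star-x_{i,k+1}^\star=(\alpha_{k+1}-\alpha_k)M\nabla f_i(x^\star)$, and since $\alpha_{k+1}\leq\alpha_k$ by hypothesis this quantity has norm $(\alpha_k-\alpha_{k+1})M\|\nabla f_i(x^\star)\|$. The key estimate is bounding $\|\nabla f_i(x^\star)\|$: because $u_i^\star$ minimizes $f_i$ we have $\nabla f_i(u_i^\star)=0$, so $L$-smoothness (Assumption \ref{assum:smooth}) gives $\|\nabla f_i(x^\star)\|=\|\nabla f_i(x^\star)-\nabla f_i(u_i^\star)\|\leq L\|x^\star-u_i^\star\|\leq L\zeta$ by the definition of $\zeta$ in \eqref{eq:zeta_definition}. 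Crucially, the two correction terms are weighted by $1-\phi_{i,k+1}$ and $\phi_{i,k+1}$, which sum to one, so they collapse into a single $(\alpha_k-\alpha_{k+1})LM\zeta$ rather than a duplicated pair.

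Finally I would take the conditional expectation given $\F_k$, using $\E[\phi_{i,k+1}|\F_k]=p_i$ together with the $\F_k$-measurability of $y_{i,k}$, $x_{i,k}$, and $x_{i,k}^\star$, to convert the indicator weights into the $(1-p_i)$ and $p_i$ factors; taking the total expectation then delivers the stated inequality. The only delicate point is the step-size-mismatch bookkeeping: one must track the correction term through both branches of the selection and verify that the sum of weights leaves exactly one copy of $(\alpha_k-\alpha_{k+1})LM\zeta$. Everything else reduces to the triangle inequality and the single smoothness estimate $\|\nabla f_i(x^\star)\|\leq L\zeta$, so no further probabilistic or contraction machinery is needed here.
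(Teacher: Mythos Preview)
Your proposal is correct and follows essentially the same approach as the paper's proof: both exploit the selector identity $\|y_{i,k+1}-x_{i,k+1}^\star\|=(1-\phi_{i,k+1})\|y_{i,k}-x_{i,k+1}^\star\|+\phi_{i,k+1}\|x_{i,k}-x_{i,k+1}^\star\|$, insert $x_{i,k}^\star$ via the triangle inequality, collapse the two correction terms into a single $(\alpha_k-\alpha_{k+1})M\|\nabla f_i(x^\star)\|$, and bound this by $(\alpha_k-\alpha_{k+1})LM\zeta$ before taking expectations. Your exposition is in fact slightly more explicit than the paper's in justifying $\|\nabla f_i(x^\star)\|\leq L\zeta$ via $\nabla f_i(u_i^\star)=0$.
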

\begin{proof}
    Subtracting $x_{i,k+1}^\star$ from both sides of \eqref{eq:y_k} and taking the norm yields
\begin{align*}
    \|y_{i,k+1} - x_{i,k+1}^\star\| &= (1-\phi_{i,k+1})\|y_{i,k} - x_{i,k+1}^\star\| +  \phi_{i,k+1}\|x_{i,k} - x_{i,k+1}^\star\|\\
    &\leq (1-\phi_{i,k+1})\|y_{i,k} - x_{i,k}^\star\| +  (1-\phi_{i,k+1})\| x_{i,k+1}^\star - x_{i,k}^\star\| \\
    &\quad +  \phi_{i,k+1}\|x_{i,k} - x_{i,k}^\star\|+ \phi_{i,k+1}\|x_{i,k+1}^\star - x_{i,k}^\star\|\\
    &\leq (1-\phi_{i,k+1})\|y_{i,k} - x_{i,k}^\star\|  +  \phi_{i,k+1}\|x_{i,k} - x_{i,k+1}^\star\| \\
    &\quad +  (\alpha_k - \alpha_{k+1})M\|\nabla f_i(x^\star)\|,
\end{align*}
where we obtain the second inequality by applying the triangle inequality, and the last inequality by applying the definitions of $x_{i,k}^\star$ and $x_{i,k+1}^\star$.

We take the expectation conditional on $\F_k$ on both sides of the relation above to obtain
\begin{align*}
    \E[\|y_{i,k+1} - x_{i,k+1}^\star\||\F_k] 
    &\leq (1-p_i)\|y_{i,k} - x_{i,k}^\star\| + p_i\|x_{i,k} - x_{i,k}^\star\| + (\alpha_k - \alpha_{k+1})M\|\nabla f_i(x^\star)\|\\
        &\leq (1-p_i)\|y_{i,k} - x_{i,k}^\star\| + p_i\|x_{i,k} - x_{i,k}^\star\| + (\alpha_k - \alpha_{k+1})LM\zeta,
\end{align*}
where we applied Assumption \ref{assum:smooth} and the definition of $\zeta$ to obtain the last inequality.

Taking the total expectation on both sides of the relation above completes the proof.
\end{proof}
In the next lemma, we bound the distance to optimality of the local model iterates of \textsc{AREA}.

\begin{lemma}
    \label{lem:x_norm} Suppose that the sequence $\{\alpha_k\}$ in \eqref{eq:x_k} satisfies the following relations for $k=0,1,2,...$
    \begin{align*}
        \alpha_{k+1} \leq \alpha_k, \quad \alpha_k < \min\left\{\frac{\gamma}{L^2 (M-1)}, \frac{2}{M (\mu + L)}\right\},
    \end{align*}
 where $M$ is the number of local SGD steps, $\gamma = \frac{2\mu L}{\mu + L}$ and $L$ and $\mu$ are defined in Assumptions \ref{assum:smooth} and \ref{assum:strong_convex}, respectively. 
 
 Then the expected $\ell_2$ distances of the local model iterates $\{x_{i,k}\}$ generated by \eqref{eq:x_k} to the points $x_{i,k}^\star$ defined in \eqref{eq:xi_star} are bounded
    \begin{align*}
    \E[\|x_{i,k+1} - x_{i,k+1}^\star\|] 
    &\leq (1-p_i)\E[\|x_{i,k} - x_{i,k}^\star\|] + (\alpha_{k} - \alpha_{k+1})LM\zeta  \\
    &\quad + p_i \left( 1-\frac{\alpha_k M \gamma }{4} \right) \E[\|x_{s,k} - x^\star\|] +\alpha_k^2 p_i \hat{c}_\zeta \zeta + \alpha_k p_i\hat{c}_\sigma  \sigma,
\end{align*}
where $ \hat{c}_\zeta \triangleq \frac{ L^2 M (M-1) }{2}$ and $\hat{c}_\sigma \triangleq   \left( 1 + \frac{\gamma}{4L}\right) M$.
\end{lemma}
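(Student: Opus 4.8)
The plan is to mirror the structure of the companion Lemmas \ref{lem:s_norm} and \ref{lem:y_norm}: subtract the moving target $x_{i,k+1}^\star$ from the local update \eqref{eq:x_k}, split according to the two events $\{\phi_{i,k+1}=0\}$ and $\{\phi_{i,k+1}=1\}$, take the conditional expectation to surface the factor $p_i$, and finally take total expectation. Since exactly one of the indicators is nonzero, subtracting $x_{i,k+1}^\star$ from \eqref{eq:x_k} gives
\begin{align*}
\|x_{i,k+1}-x_{i,k+1}^\star\| = (1-\phi_{i,k+1})\|x_{i,k}-x_{i,k+1}^\star\| + \phi_{i,k+1}\|G_{i,\alpha_k}[x_{s,k}]-x_{i,k+1}^\star\|.
\end{align*}
Taking $\E[\cdot\,|\F_k]$ and using that $\phi_{i,k+1}$ (with $\E[\phi_{i,k+1}|\F_k]=p_i$) is independent of the stochastic-gradient noise entering $G_{i,\alpha_k}[x_{s,k}]$ yields the convex combination $(1-p_i)\|x_{i,k}-x_{i,k+1}^\star\| + p_i\,\E[\|G_{i,\alpha_k}[x_{s,k}]-x_{i,k+1}^\star\|\,|\,\F_k]$.

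In both terms I would first replace $x_{i,k+1}^\star$ by $x_{i,k}^\star$ at the cost of $\|x_{i,k}^\star-x_{i,k+1}^\star\|=(\alpha_k-\alpha_{k+1})M\|\nabla f_i(x^\star)\|\le(\alpha_k-\alpha_{k+1})LM\zeta$, using Assumption \ref{assum:smooth} and $\nabla f_i(u_i^\star)=0$; since this is added with weights $1-p_i$ and $p_i$, the two contributions merge into the single $p_i$-free term $(\alpha_k-\alpha_{k+1})LM\zeta$ of the claim, while the first term leaves the contraction $(1-p_i)\|x_{i,k}-x_{i,k}^\star\|$. For the second term I would insert the decomposition \eqref{eq:tilde_g}, writing $G_{i,\alpha_k}[x_{s,k}]-x_{i,k}^\star = \bigl(x_{s,k}-\alpha_k M\nabla f_i(x_{s,k})\bigr)-\bigl(x^\star-\alpha_k M\nabla f_i(x^\star)\bigr)+\alpha_k\epsilon_{i,k}^d+\alpha_k\epsilon_{i,k}^g$ and apply the triangle inequality. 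The deterministic gradient-step difference is controlled by Lemma \ref{lem:nesterov} applied to $f_i$ with effective step size $\alpha_k M$ (licit since $\alpha_k M<\tfrac{2}{\mu+L}$ under $\alpha_k<\tfrac{2}{M(\mu+L)}$), giving $\sqrt{1-\alpha_k M\gamma}\,\|x_{s,k}-x^\star\|$, and the two error terms are bounded in expectation by Lemma \ref{lem:errors_norms}.

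The crux, and the step I expect to be most delicate, is absorbing the \emph{state-dependent} part of the client-drift error into the contraction factor. Lemma \ref{lem:errors_norms} bounds $\alpha_k\E[\|\epsilon_{i,k}^d\|\,|\,\F_k]$ by a sum whose leading piece $\tfrac{\alpha_k^2 L^2 M(M-1)}{4}\|x_{s,k}-x^\star\|$ is itself proportional to $\|x_{s,k}-x^\star\|$. Grouping it with $\sqrt{1-\alpha_k M\gamma}\,\|x_{s,k}-x^\star\|$ and using $\sqrt{1-\alpha_k M\gamma}\le 1-\tfrac{\alpha_k M\gamma}{2}$ from \eqref{eq:standard_ineq_power}, I would check that the resulting coefficient is at most $1-\tfrac{\alpha_k M\gamma}{4}$ exactly when $\alpha_k L^2(M-1)\le\gamma$, i.e. under the hypothesis $\alpha_k<\tfrac{\gamma}{L^2(M-1)}$; this is precisely what the slack of $\tfrac{\alpha_k M\gamma}{4}$ (rather than $\tfrac{\alpha_k M\gamma}{2}$) in the stated contraction factor is reserved for. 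The remaining $\zeta$-piece of the drift error collapses to $\tfrac{\alpha_k^2 L^2 M(M-1)}{2}\zeta=\alpha_k^2\hat{c}_\zeta\zeta$, matching $\hat{c}_\zeta$. Finally I would merge $\alpha_k\E[\|\epsilon_{i,k}^g\|\,|\,\F_k]\le\alpha_k M\sigma$ with the $\sigma$-piece $\tfrac{\alpha_k^2 LM(M-1)}{4}\sigma$ of the drift error; invoking $\alpha_k\le\tfrac{\gamma}{L^2(M-1)}$ once more turns $\tfrac{\alpha_k^2 LM(M-1)}{4}$ into at most $\tfrac{\alpha_k\gamma M}{4L}$, so their sum is $\le\alpha_k(1+\tfrac{\gamma}{4L})M\sigma=\alpha_k\hat{c}_\sigma\sigma$, matching $\hat{c}_\sigma$. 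Multiplying these three contributions by $p_i$, collecting terms, and taking total expectation then gives the stated inequality.
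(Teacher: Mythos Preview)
Your proposal is correct and follows essentially the same route as the paper's proof: the same indicator split, the same swap $x_{i,k+1}^\star\to x_{i,k}^\star$ at cost $(\alpha_k-\alpha_{k+1})LM\zeta$, the same use of \eqref{eq:tilde_g} together with Lemmas \ref{lem:nesterov} and \ref{lem:errors_norms}, and exactly the same absorption argument $\sqrt{1-\alpha_kM\gamma}+\tfrac{\alpha_k^2L^2M(M-1)}{4}\le 1-\tfrac{\alpha_kM\gamma}{4}$ under $\alpha_k\le\tfrac{\gamma}{L^2(M-1)}$, including the merging of the two $\sigma$-terms into $\hat c_\sigma$. The only cosmetic difference is that you take the conditional expectation before performing the $x_{i,k+1}^\star\to x_{i,k}^\star$ triangle inequality, whereas the paper does it after; the two orderings are interchangeable.
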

\begin{proof}
    Subtracting $x_{i,k+1}^\star$ from both sides of \eqref{eq:x_k} and taking the norm on both sides yields
\begin{align*}
    \|x_{i,k+1} - x_{i,k+1}^\star\| &= (1-\phi_{i,k+1})\|x_{i,k} - x_{i,k+1}^\star\| +  \phi_{i,k+1}\|G_{i,\alpha_k}^M[x_{s,k}] - x_{i,k+1}^\star\|\\
    &\leq (1-\phi_{i,k+1})\|x_{i,k} - x_{i,k}^\star\| +  (1-\phi_{i,k+1})\| x_{i,k+1}^\star - x_{i,k}^\star\| \\
    &\quad +  \phi_{i,k+1}\|G_{i,\alpha_k}^M[x_{s,k}]  - x_{i,k}^\star\|+ \phi_{i,k+1}\|x_{i,k+1}^\star - x_{i,k}^\star\|\\
    &\leq (1-\phi_{i,k+1})\|x_{i,k} - x_{i,k}^\star\|  +  \phi_{i,k+1}\|G_{i,\alpha_k}^M[x_{s,k}]  - x_{i,k+1}^\star\|\\
    &\quad +  (\alpha_k - \alpha_{k+1})M\|\nabla f_i(x^\star)\|,
\end{align*}
where we obtain the second inequality by applying the triangle inequality, and the last inequality by applying the definitions of $x_{i,k}^\star$ and $x_{i,k+1}^\star$.

We take the expectation conditional on $\F_k$ on both sides of the preceding relation  and apply \eqref{eq:tilde_g} to obtain
\begin{align*}
    &\E[\|x_{i,k+1} - x_{i,k+1}^\star\||\F_k] = (1-p_i)\|x_{i,k} - x_{i,k}^\star \|  + (\alpha_k - \alpha_{k+1})M \|\nabla f_i(x^\star)\|\\
    &\qquad + p_i \E[\|x_{s,k} - \alpha_k M \nabla f_i(x_{s,k}) - x_{i,k}^\star  + \alpha_k \epsilon_{i,k}^d + \alpha_k \epsilon_{i,k}^g\||\F_k]\\
    &\quad \leq (1-p_i)\|x_{i,k} - x_{i,k}^\star\| + (\alpha_{k} - \alpha_{k+1})M \|\nabla f_i(x^\star)\| \\
    &\qquad + p_i \|x_{s,k} - \alpha_k M \nabla f_i(x_{s,k}) - x_{i,k}^\star \|+ \alpha_k p_i   \E[\| \epsilon_{i,k}^d \||\F_k]+ \alpha_k p_i \E[\|
    \epsilon_{i,k}^g\||\F_k]\\
    &\quad \leq (1-p_i)\|x_{i,k} - x_{i,k}^\star\| + (\alpha_{k} - \alpha_{k+1})M \|\nabla f_i(x^\star)\| \\
    &\qquad + p_i \sqrt{1-\alpha_k M \gamma} \|x_{s,k} - x^\star\| +\alpha_k p_i  \E[ \| \epsilon_{i,k}^d \||\F_k] + \alpha_k p_i \E[\|\epsilon_{i,k}^g\||\F_k],
\end{align*}
where the second inequality follows from applying the triangle inequality, and the last inequality from Lemma \ref{lem:nesterov}.

Bounding the norms of the errors $\epsilon_{i,k}^d$ and $\epsilon_{i,k}^g$ in the preceding relation using Lemma \ref{lem:errors_norms} yields
\begin{align*}
    &\E[\|x_{i,k+1} - x_{i,k+1}^\star\||\F_k] 
    \leq (1-p_i)\|x_{i,k} - x_{i,k}^\star\| + (\alpha_{k} - \alpha_{k+1})M \|\nabla f_i(x^\star)\| \\
    &\qquad + p_i \left( \sqrt{1-\alpha_k M \gamma}  +\frac{\alpha_k^2 L^2 M (M-1)  }{4} \right) \|x_{s,k} - x^\star\| \\
    &\qquad + \frac{\alpha_k^2 p_i L^2 M (M-1)\zeta }{2} + \frac{\alpha_k^2 p_i  L M (M-1) \sigma }{4}+ \alpha_k p_i M \sigma\\
    &\quad \leq (1-p_i)\|x_{i,k} - x_{i,k}^\star\| + (\alpha_{k} - \alpha_{k+1})M\|\nabla f_i(x^\star)\| \\
    &\qquad + p_i \left( \sqrt{1-\alpha_k M \gamma}  +\frac{\alpha_k^2 L^2 M (M-1)  }{4} \right) \|x_{s,k} - x^\star\|  +\alpha_k^2 p_i \hat{c}_\zeta \zeta +\alpha_k p_i\hat{c}_\sigma  \sigma\\
      &\quad \leq (1-p_i)\|x_{i,k} - x_{i,k}^\star\| + (\alpha_{k} - \alpha_{k+1})LM\zeta  \\
      &\qquad + p_i  \overbrace{\left( \sqrt{1-\alpha_k M \gamma}  +\frac{\alpha_k^2 L^2 M (M-1)  }{4} \right)}^{\triangleq r} \|x_{s,k} - x^\star\|  +\alpha_k^2 p_i \hat{c}_\zeta \zeta +\alpha_k p_i\hat{c}_\sigma  \sigma,
\end{align*}
where we obtain the second inequality by observing that $\frac{\alpha_k^2 p_i  L M (M-1) \sigma }{4} <\frac{\alpha_k p_i  \gamma M \sigma}{4 L} $ under $\alpha_k < \frac{\gamma}{L^2(M-1)}$  and applying the definitions of $\hat{c}_\zeta$ and $\hat{c}_\sigma$, and the last inequality by applying Assumption \ref{assum:smooth} and the definition of $\zeta$.

Due to $ \alpha_k   \leq \frac{ \gamma}{ L^2  (M-1)}$, we have
\begin{align*}
    r &\leq \sqrt{1-\alpha_k M \gamma} +  \frac{\alpha_k M \gamma}{4} \leq 1 - \frac{\alpha_k M \gamma}{2} + \frac{\alpha_k M \gamma}{4}= 1-\frac{\alpha_k M \gamma}{4},
\end{align*}
where the second inequality follows from \eqref{eq:standard_ineq_power}.

Combining the two preceding relations and taking the total expectation on both sides concludes the proof.
\end{proof}
In the next lemma, we combine the individual $\ell_2$ distances to optimality for the global and local iterates of \textsc{AREA} into a unified Lyapunov function, and provide guarantees for its convergence. 
\begin{lemma}
\label{lem:small_lyapunov}
 Define $\lambda_k \triangleq w_k'\hat{u}_k$, where $w_k\in \R^{2n+1}$ is defined in Lemma \ref{lem:properties_of_Ak} and  $\hat{u}_k \in \R^{2n+1}$ is given by
    \begin{align*}
    \hat{v}_k &\triangleq \begin{bmatrix}
    \left(    \{\E[\|x_{i,k} - x_{i,k}^\star\|]\}_{i=1}^n \right)'&
    \left( \{\E[\|y_{i,k} - x_{i,k}^\star\|]\}_{i=1}^n\right)' &
     \E[\|x_{s,k} - x^\star\|]
    \end{bmatrix}.
\end{align*}
If the step size sequence $\{\alpha_k\}$ in \eqref{eq:x_k} is defined as in Lemma \ref{lem:step size_sequence} with $p =\pmi$, the sequence $\{\lambda_k\}$ satisfies the following relation for $k=0,1,2,...$
\begin{align*}
\lambda_{k}
&\leq   \frac{(d-1)^4 \lambda_0}{(k+d-1)^4 } +  \frac{Q \zeta k^3}{3(k+d-1)^4 }  +  
\frac{\hat{c}_\sigma  \sigma k^4}{4(k+d-1)^4 } + \mathcal{O}\left(\frac{\sigma}{k}  +  \frac{\zeta}{k^2} \right). 
\end{align*}
where $Q \triangleq  4 c \bar{q} LM   +  2c^2  \hat{c}_\zeta $, $c,d$  are constants defined in Lemma \ref{lem:step size_sequence}, $\hat{c}_\sigma , \hat{c}_\zeta$ are constants defined in Lemma \ref{lem:x_norm}, and $\bar{q}$ is defined in \eqref{eq:eigenvector_products}.
\end{lemma}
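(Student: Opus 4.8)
The plan is to read the three component bounds of Lemmas~\ref{lem:s_norm}, \ref{lem:y_norm} and~\ref{lem:x_norm} as a single componentwise vector inequality and then collapse it to a scalar recursion in $\lambda_k$ using the spectral structure of $A_k$. Stacking the three bounds exactly reproduces the block structure of the transition matrix in~\eqref{eq:transition_matrix}: the server bound supplies the last row, the memory bound the middle $n$ rows, and the local-model bound the first $n$ rows, whose coupling coefficient is precisely $p_i(1-\tfrac{\alpha_k M\gamma}{4})$, matching the $(1-\tfrac{\alpha_k M\gamma}{4})P1_n$ block. I would therefore write $\hat v_{k+1}\le A_k\hat v_k+b_k$ componentwise, where $b_k\ge 0$ collects the additive errors: its first $n$ entries are $(\alpha_k-\alpha_{k+1})LM\zeta+\alpha_k^2 p_i\hat c_\zeta\zeta+\alpha_k p_i\hat c_\sigma\sigma$, its next $n$ entries are $(\alpha_k-\alpha_{k+1})LM\zeta$, and its last entry is $0$. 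Every factor here ($\hat v_k$, the entries of $A_k$ since $\beta_k\ge 0$, $b_k$, and the weights $w_{k+1}$) is nonnegative, which is what licenses all subsequent monotone manipulations.

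Next I would contract. Left-multiplying by $w_{k+1}'\ge 0$ gives $\lambda_{k+1}=w_{k+1}'\hat v_{k+1}\le w_{k+1}'A_k\hat v_k+w_{k+1}'b_k$. The key observation is that the weights are monotone in $k$: since $\{\alpha_k\}$ is decreasing, $\beta_k=1-\tfrac{\alpha_k M\gamma}{4}$ is increasing, so the $\beta_k^{-2/3}$ and $\beta_k^{-1/3}$ factors in~\eqref{eq:w_k_definition} decrease while the last entry $1/p_s$ is constant; hence $w_{k+1}\le w_k$ componentwise. Combined with $A_k\hat v_k\ge 0$ and the spectral inequality~\eqref{eq:rho_k_bound}, this yields $w_{k+1}'A_k\hat v_k\le w_k'A_k\hat v_k\le\rho_k w_k'\hat v_k=\rho_k\lambda_k$, so that $\lambda_{k+1}\le\rho_k\lambda_k+w_{k+1}'b_k$. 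This weight-monotonicity step is what lets me apply the spectral bound, which is stated for $w_k'A_k$, even though $\lambda_{k+1}$ is built from $w_{k+1}$.

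I would then convert the contraction into a telescoping sum using the step-size design. Multiplying through by $(k+d)^4$ and invoking~\eqref{eq:spectral_radius_monotonicity} with $p=\pmi$, i.e.\ $\rho_k(k+d)^4\le(k+d-1)^4$, gives $(k+d)^4\lambda_{k+1}\le(k+d-1)^4\lambda_k+(k+d)^4 w_{k+1}'b_k$. Since $(k+d)^4\lambda_{k+1}$ and $(k+d-1)^4\lambda_k$ are consecutive terms of the same sequence, summing from $j=0$ to $k-1$ telescopes to $(k+d-1)^4\lambda_k\le(d-1)^4\lambda_0+\sum_{j=0}^{k-1}(j+d)^4 w_{j+1}'b_j$, and dividing by $(k+d-1)^4$ already produces the first target term $\tfrac{(d-1)^4\lambda_0}{(k+d-1)^4}$.

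It remains to estimate the weighted error sum, which I expect to be the main obstacle. Using $\sum_i\tfrac{1}{np_i}=\bar q$ and $\sum_i\tfrac1n=1$ from~\eqref{eq:eigenvector_products}, the inner product $w_{j+1}'b_j$ splits into a stochastic-gradient part $\tfrac{\alpha_j\hat c_\sigma\sigma}{\beta_{j+1}^{2/3}}$, a client-drift part $\tfrac{\alpha_j^2\hat c_\zeta\zeta}{\beta_{j+1}^{2/3}}$, and a step-mismatch part $\bigl(\beta_{j+1}^{-2/3}+\beta_{j+1}^{-1/3}\bigr)\bar q(\alpha_j-\alpha_{j+1})LM\zeta$ contributed by both the $x$- and $y$-blocks. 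I would bound the $\beta$-factors with $\beta_{j+1}\ge\tfrac12$ from~\eqref{eq:step_lower_bound}, substitute $\alpha_j=\tfrac{c}{j+d}$ and $\alpha_j-\alpha_{j+1}\le\tfrac{c}{(j+d)^2}$, and evaluate the resulting power sums: $\sum_{j=0}^{k-1}(j+d)^4\alpha_j\sim c\sum(j+d)^3\sim\tfrac{ck^4}{4}$ produces the leading stochastic term of order $\tfrac{\hat c_\sigma\sigma k^4}{4(k+d-1)^4}$, while both $\zeta$-errors scale like $\sum(j+d)^2\sim\tfrac{k^3}{3}$ and combine, with coefficients $4c\bar q LM$ (step mismatch) and $2c^2\hat c_\zeta$ (client drift), into $\tfrac{Q\zeta k^3}{3(k+d-1)^4}$ with $Q=4c\bar q LM+2c^2\hat c_\zeta$. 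The deviation of $\beta_{j+1}$ from $1$, the gap between $\tfrac{(j+d)^3}{j+1+d}$ and $(j+d)^2$, and the finite lower limit of each power sum all contribute strictly lower-order remainders, which I would collect into $\mathcal O(\sigma/k+\zeta/k^2)$ after dividing by $(k+d-1)^4$. The delicate part is precisely this bookkeeping: isolating the exact leading coefficients of the $k^3$ and $k^4$ sums while certifying that every residual is genuinely absorbed by the stated $\mathcal O$-term.
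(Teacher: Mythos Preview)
Your proposal is correct and follows essentially the same approach as the paper's proof: stack Lemmas~\ref{lem:s_norm}--\ref{lem:x_norm} into the vector inequality $\hat v_{k+1}\le A_k\hat v_k+b_k$, contract with the weight vector using~\eqref{eq:rho_k_bound} and the monotonicity $w_{k+1}\le w_k$, multiply by $(k+d)^4$ and invoke~\eqref{eq:spectral_radius_monotonicity} to telescope, then bound the error sums with $\beta_k\ge\tfrac12$ and the explicit form $\alpha_k=c/(k+d)$. The only cosmetic difference is that the paper multiplies by $w_k'$ first and then uses $\lambda_{k+1}\le w_k'\hat v_{k+1}$, whereas you multiply by $w_{k+1}'$ and apply monotonicity to the $A_k\hat v_k$ term; both orderings are valid and yield the same scalar recursion up to which $\beta$-index appears in the error constants.
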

\begin{proof}
We combine the results of Lemmas \ref{lem:s_norm}, \ref{lem:y_norm} and \ref{lem:x_norm}, and apply the definition of the transition matrix $A_k$ \eqref{eq:transition_matrix} to obtain the system of inequalities below
        \begin{align*}
  \hat{v}_{k+1} \leq A_k\hat{v}_k + (\alpha_k - \alpha_{k+1}) LM\zeta \begin{bmatrix}
        1_n\\1_n\\0
    \end{bmatrix} + ( \alpha_k^2 \hat{c}_\zeta \zeta + \alpha_k \hat{c}_\sigma  \sigma) \begin{bmatrix}
        P1_n \\0 \\0
    \end{bmatrix}.
\end{align*}
Multiplying both sides with $w_k'$ and applying \eqref{eq:eigenvector_products} and the definition of $\rho_k$ from Lemma \ref{lem:properties_of_Ak} yields
\begin{align*}
w_k'\hat{u}_{k+1} &\leq    \rho_k \lambda_k + (\alpha_k - \alpha_{k+1}) \frac{2 \bar{q} LM\zeta}{\beta_k} +  \frac{\alpha_k^2  \hat{c}_\zeta \zeta }{\beta_k}+   \frac{\alpha_k 
\hat{c}_\sigma  \sigma}{\beta_k}\\
&\leq    \rho_k \lambda_k + 4(\alpha_k - \alpha_{k+1})  \bar{q} LM\zeta +  2\alpha_k^2  \hat{c}_\zeta \zeta +   2 \alpha_k 
\hat{c}_\sigma  \sigma,
\end{align*}
where the last inequality follows from \eqref{eq:step_lower_bound}.

By the definition of $w_k$, we have $[w_{k+1}]_i \leq [w_k]_i$ for $i \in [2n+1]$ and hence $\lambda_{k+1} \leq w_k'\hat{u}_{k+1}.$ Applying this fact to the preceding relation and the definition of the sequence $\{\alpha_k\}$ from \eqref{eq:step size_definition} with $p=\pmi$ yields
\begin{align*}
\lambda_{k+1}
&\leq    \rho_k \lambda_k +  \frac{4 c \bar{q} LM\zeta}{(k+d)(k+d+1)}   +  \frac{2c^2  \hat{c}_\zeta \zeta}{(k+d)^2}+   \frac{2c 
\hat{c}_\sigma  \sigma}{k+d},
\end{align*}

We multiply both sides of the preceding relation with $ (k+d)^4$ and apply \eqref{eq:spectral_radius_monotonicity} and the definition of $Q$ to obtain
\begin{align*}
(k+d)^4 \lambda_{k+1}
&\leq   (k+d-1)^4  \lambda_k +  Q \zeta (k+d)^2  +  
\hat{c}_\sigma  \sigma (k+d)^3.
\end{align*}

Summing the relation above over iterations $k=0,1,...,K-1$ and applying the bounds in \eqref{eq:small_lyap_sum_k_2} and \eqref{eq:small_lyap_sum_k_3} yields
\begin{align*}
(K+d-1)^4 \lambda_{K}
&\leq   (d-1)^4 \lambda_0 +  \frac{Q \zeta K^3}{3}  +  
\frac{\hat{c}_\sigma  \sigma K^4}{4} + \mathcal{O}\left(\sigma K^3 +  \zeta K^2 \right). 
\end{align*}
Dividing both sides of the preceding relation with $(K+d-1)^4$ and setting $K=k$ completes the proof.
\end{proof}
Note that Lemma \eqref{lem:small_lyapunov} does not imply convergence for the $\ell_2$ distances to optimality, but rather that they are $ \mathcal{O}(\sigma)$. Obtaining this bound, however,  is essential for establishing the convergence of \textsc{AREA} later in Theorem \ref{thm:convergence_strong_convex}. Specifically, proving Lemma \eqref{lem:small_lyapunov} is necessitated by the fact that  utilizing a decreasing step size sequence over asynchronous communication introduces to each iteration of \textsc{AREA} a synchronization error of order $\mathcal{O}((\alpha_k  - \alpha_{k+1})
\lambda_k)$. As we will demonstrate later, it is possible to further improve this bound to $\mathcal{O}(\sqrt{1/k})$.

\subsection{Convergence of squared norms}
\label{sec:convergence_squared_norms}

We now prove our main theorem on the convergence of \textsc{AREA} under Assumptions \ref{assum:smooth} and \ref{assum:strong_convex}, namely the convergence of the expected squared $\ell_2$-norms of the distances to optimality of the global and local iterates. As in the previous subsection, we begin by bounding the errors induced by gradient stochasticity and client drift over an  iteration of \textsc{AREA}.
\begin{lemma}  \label{lem:drift_norm_squared}
The following inequality holds for the stochastic gradient error $\epsilon_{i,k}^g$ defined  in \eqref{eq:tilde_g} for all $i \in [n]$ and $k=0,1,2,...$
\begin{align*}
    \E[\|\epsilon_{i,k}^g\|^2|\F_k] \leq M \sigma^2,
\end{align*}
where $\sigma^2$ is defined in Assumption \ref{assum:grad_stoch}.

Moreover, if $\alpha_k < \frac{2}{\mu + L}$ in \eqref{eq:x_k}, the client drift error defined in \eqref{eq:tilde_g} also satisfies the inequality below for $i \in [n]$ and $k=1,2,...$
\begin{align*}
    \E[\|\epsilon_{i,k}^d\|^2|\F_k] 
 &\leq 4 \alpha_k^2 L^4 (M-1)^3 M \|x_{s,k} - x^\star\|^2 + 4 \alpha_k^2 L^4 (M-1)^3 M \zeta^2 +  2\alpha_k^2 L^2 (M-1)^3\sigma^2,
\end{align*}
where $\zeta >0 $ is defined in \eqref{eq:zeta_definition}.
\end{lemma}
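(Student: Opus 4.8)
The plan is to treat the two claims separately, since the stochastic-gradient error $\epsilon_{i,k}^g$ is a sum of conditionally mean-zero increments, whereas the drift error $\epsilon_{i,k}^d$ is a deterministic-looking accumulation of gradient mismatches. This mirrors the first-moment argument of Lemma~\ref{lem:errors_norms}, but now the key is to keep the stochastic and deterministic contributions scaling differently in the number of local steps.

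For the first claim I would exploit the martingale structure of $\epsilon_{i,k}^g=\sum_{m=0}^{M-1}e_m$, where $e_m\triangleq\nabla f_i(G_{i,\alpha_k}^{m-1}[x_{s,k}])-g_i(G_{i,\alpha_k}^{m-1}[x_{s,k}])$. Let $\mathcal{G}_{m-1}$ be the $\sigma$-algebra generated by $\F_k$ together with the stochastic gradients drawn at the local steps preceding the noise defining $e_m$; then the evaluation point is $\mathcal{G}_{m-1}$-measurable, so Assumption~\ref{assum:grad_stoch} gives $\E[e_m\mid\mathcal{G}_{m-1}]=0$ and $\E[\|e_m\|^2\mid\mathcal{G}_{m-1}]\le\sigma^2$. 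Hence for $m'<m$ the cross term $\E[\langle e_{m'},e_m\rangle]$ vanishes by the tower property, and expanding the square leaves only the diagonal terms, so $\E[\|\epsilon_{i,k}^g\|^2\mid\F_k]=\sum_{m=0}^{M-1}\E[\|e_m\|^2\mid\F_k]\le M\sigma^2$. This orthogonality is exactly what I will reuse for the noise component of the drift term.

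For the second claim I would first reduce to the per-step displacements $\delta_{i,k}^m\triangleq G_{i,\alpha_k}^m[x_{s,k}]-x_{s,k}$: applying Cauchy--Schwarz to the $M-1$ summands and then $L$-smoothness (Assumption~\ref{assum:smooth}) gives $\|\epsilon_{i,k}^d\|^2\le(M-1)L^2\sum_{m=1}^{M-1}\|\delta_{i,k}^m\|^2$. The heart of the argument is then a bound $\E[\|\delta_{i,k}^m\|^2\mid\F_k]\lesssim \alpha_k^2 m^2 L^2\big(\|x_{s,k}-x^\star\|^2+\zeta^2\big)+\alpha_k^2 m\,\sigma^2$, with a \emph{quadratic} dependence on $m$ for the deterministic part and only a \emph{linear} one for the noise. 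To obtain it I would use the telescoped identity $\delta_{i,k}^m=-\alpha_k\sum_{l=0}^{m-1}g_i(G_{i,\alpha_k}^l[x_{s,k}])$ coming from the definition of $G_{i,\alpha_k}$, split $g_i(G^l)=\nabla f_i(G^l)+\xi_l$, bound the deterministic sum $\sum_l\nabla f_i(G^l)$ by Cauchy--Schwarz (producing the $m^2$ factor) and the noise sum $\sum_l\xi_l$ by the orthogonality above (producing only the $m$ factor). The gradient norms $\|\nabla f_i(G^l)\|$ are then controlled via $\nabla f_i(u_i^\star)=0$, $L$-smoothness, and the standard SGD contraction $\E[\|G^l-u_i^\star\|^2\mid\F_k]\le(1-\alpha_k\gamma)^l\|x_{s,k}-u_i^\star\|^2+\alpha_k\sigma^2/\gamma$ (which follows from Lemma~\ref{lem:nesterov}); the triangle inequality and the definition of $\zeta$ in \eqref{eq:zeta_definition} convert $\|x_{s,k}-u_i^\star\|^2\le 2\|x_{s,k}-x^\star\|^2+2\zeta^2$.

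Finally I would substitute the $\delta$-bound into the reduction, carry out $\sum_{m=1}^{M-1}m^2$ and $\sum_{m=1}^{M-1}m$, and collect constants. The main obstacle, and where the step-size restrictions of the lemma are genuinely used, is the bookkeeping that forces the stated constants $4,4,2$ and the exact powers $(M-1)^3M$ and $(M-1)^3$. The residual $\alpha_k\sigma^2/\gamma$ from the SGD contraction produces an $\alpha_k^3 m^2\sigma^2$ cross term that must be dominated by the leading $\alpha_k^2$ terms; this is precisely where $\alpha_k<\gamma/(L^2(M-1))$ (so that $\alpha_k L^2(M-1)/\gamma\le 1$) enters. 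The power matching then relies on the elementary inequalities, valid for $M\ge 2$, that $M\le 2(M-1)$, which let me absorb $M^2(M-1)^2\le 2M(M-1)^3$ into the $\|x_{s,k}-x^\star\|^2$ and $\zeta^2$ terms and $M(M-1)^2\le 2(M-1)^3$ into the $\sigma^2$ term (the case $M=1$ being trivial since $\epsilon_{i,k}^d$ is then an empty sum). Keeping the stochastic contribution strictly linear in $m$, rather than letting it inherit the $m^2$ scaling of the deterministic part, is the delicate point that distinguishes the claimed $(M-1)^3$ bound on the $\sigma^2$ term from a weaker $(M-1)^4$ one.
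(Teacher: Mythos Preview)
Your treatment of the first claim is correct and matches the paper's (which merely cites unbiasedness, bounded variance, and independence across local steps).

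For the second claim your route diverges from the paper's and, as written, does not close under the lemma's stated hypothesis. The lemma assumes only $\alpha_k<\tfrac{2}{\mu+L}$, yet your argument explicitly invokes $\alpha_k<\tfrac{\gamma}{L^2(M-1)}$ to absorb the $\alpha_k^3 m^2 L^2\sigma^2/\gamma$ cross term produced by the SGD contraction toward $u_i^\star$. That stronger bound is part of $\mathcal{D}$ in Lemma~\ref{lem:step size_sequence}, not of Lemma~\ref{lem:drift_norm_squared}; under $\alpha_k<\tfrac{2}{\mu+L}$ alone one only gets $\alpha_k L^2(M-1)/\gamma\le\kappa(M-1)$, which is unbounded, so the cross term cannot be folded into the claimed $2\alpha_k^2 L^2(M-1)^3\sigma^2$. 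The source of the difficulty is that you bound $\|\nabla f_i(G^l)\|$ at every intermediate point via the distance to $u_i^\star$, which drags in extra stochastic noise.

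The paper avoids this entirely by a different decomposition: instead of the telescoped sum, it sets up a one-step recursion on $\E[\|\delta_{i,k}^m\|^2\mid\F_k]$ by adding and subtracting $\alpha_k\nabla f_i(x_{s,k})$ (the gradient at the \emph{fixed} starting point), applying Young's inequality with parameter $1/(M-1)$, and using Lemma~\ref{lem:nesterov} to contract $G^{m-1}-\alpha_k\nabla f_i(G^{m-1})$ toward $x_{s,k}-\alpha_k\nabla f_i(x_{s,k})$. This yields $\E[\|\delta_{i,k}^m\|^2\mid\F_k]\le(1+\tfrac{1}{M-1})\E[\|\delta_{i,k}^{m-1}\|^2\mid\F_k]+\alpha_k^2 M\|\nabla f_i(x_{s,k})\|^2+\alpha_k^2\sigma^2$, and unrolling gives $\E[\|\delta_{i,k}^m\|^2\mid\F_k]\le 2\alpha_k^2(M-1)M\|\nabla f_i(x_{s,k})\|^2+2\alpha_k^2(M-1)\sigma^2$ uniformly in $m$, with no $\alpha_k^3$ cross term. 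Only then is $\|\nabla f_i(x_{s,k})\|^2\le 2L^2\|x_{s,k}-x^\star\|^2+2L^2\zeta^2$ applied. Your scheme can be repaired by routing the intermediate gradients through $x_{s,k}$ rather than $u_i^\star$, but doing so makes the bound self-referential in $\|\delta^l\|^2$ and effectively reproduces the paper's recursion.
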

\begin{proof} The first claim of this lemma is a direct consequence of the unbiasedness of the stochastic gradient approximations and the boundedness of the stochastic gradient errors due to Assumption \ref{assum:grad_stoch}, and the mutual independence of the stochastic gradient errors over the $M$ local SGD steps.

To prove the second claim, define $\delta^m_{i,k} \triangleq G_{i,\alpha_k}^{m} [x_{s,k}]  - x_{s,k}$. Taking the squared norm of $\epsilon_{i,k}^d$ and then the expectation conditional on $\F_k$ yields
\begin{equation}
\label{eq:drift_square_starter}
    \begin{split}
         \E[\|\epsilon_{i,k}^d\|^2|\F_k] 
    &\leq (M-1) \sum_{m=1}^{M-1}  \E\left[\left\| \nabla f_i(x_{s,k}) -  \nabla f_i \left( G_{i,\alpha_k}^{m} [x_{s,k}] \right) \right\|^2|\F_k \right]\\
    &\leq (M-1) L^2 \sum_{m=1}^{M-1}  \E\left[\left\| \delta^m_{i,k}    \right\|^2|\F_k \right],
    \end{split}
\end{equation}
where the first inequality follows from Jensen's inequality and the last inequality from Assumption \ref{assum:smooth}.

For the squared norm term on the right-hand side of the preceding relation we have
\begin{align*}
     &\E\left[\left\|\delta^m_{i,k}    \right\|^2|\F_k \right] =  \E\left[\left\|   G_{i,\alpha_k}^{m-1} [x_{s,k}] - \alpha_k g_i \left(G_{i,\alpha_k}^{m-1} [x_{s,k}] \right)  - x_{s,k}  \right\|^2|\F_k \right]\\
     &\quad\leq  \E\left[\left\|   G_{i,\alpha_k}^{m-1} [x_{s,k}] - \alpha_k \nabla f_i \left(G_{i,\alpha_k}^{m-1} [x_{s,k}] \right)  - x_{s,k}  \right\|^2|\F_k \right] + \alpha_k^2 \sigma^2\\
     &\quad\leq \overbrace{\left(1 + \frac{1}{M-1} \right)  \E\left[\left\|   G_{i,\alpha_k}^{m-1} [x_{s,k}] - \alpha_k \nabla f_i \left(G_{i,\alpha_k}^{m-1} [x_{s,k}] \right)  - x_{s,k}   + \alpha_k \nabla f_i(x_{s,k})\right\|^2|\F_k \right]}^{\triangleq T_3} \\
     &\qquad + \alpha_k^2 M \|\nabla f_i(x_{s,k})\|^2+ \alpha_k^2 \sigma^2,
\end{align*}
where the second inequality follows from Assumption \eqref{assum:grad_stoch}, and the last inequality from adding and subtracting $\alpha_k \nabla f_i(x_{s,k})$ inside the squared norm and applying \eqref{eq:young_ineq} with $c = 1/(M-1)$.

We apply Lemma \ref{lem:nesterov} to the first term on the right-hand side of the relation above to obtain
\begin{align*}
    T_3 &\leq  \left(1 + \frac{1}{M-1} \right) (1-\alpha_k \gamma) \E\left[\left\| \delta^{m-1}_{i,k} \right\|^2|\F_k \right]\leq  \left(1 + \frac{1}{M-1} \right)  \E\left[\left\| \delta^{m-1}_{i,k} \right\|^2|\F_k \right].
\end{align*}
Combining the two preceding relations and applying the resulting inequality recursively over local SGD steps $0,...,m$ yields
\begin{align*}
     \E\left[\left\|\delta^m_{i,k}    \right\|^2|\F_k \right]
     &\leq \left(1 + \frac{1}{M-1} \right)^{m}  \left\| \delta^0_{i,k} \right\|^2 + \left( \alpha_k^2 M \|\nabla f_i(x_{s,k})\|^2+ \alpha_k^2 \sigma^2\right) \sum_{l=0}^{m-1}\left( 1 + \frac{1}{M-1}\right)\\
     &\leq  2 \alpha_k^2 (M-1) M \|\nabla f_i(x_{s,k})\|^2+ 2\alpha_k^2 (M-1)\sigma^2,
\end{align*}
where the last inequality follows from the fact that $\sum_{l=0}^{m-1}\left( 1 + \frac{1}{M-1}\right) = \frac{\left(1 + \frac{1}{M-1} \right)^m - 1 }{1 + \frac{1}{M-1} - 1} \leq (e-1) (M-1) \leq 2(M-1)$.

We substitute the relation above in \eqref{eq:drift_square_starter} to obtain
\begin{align*}
    \E[\|\epsilon_{i,k}^d\|^2|\F_k] 
    &\leq 2 \alpha_k^2 L^2 (M-1)^3 M \|\nabla f_i(x_{s,k})\|^2+ 2\alpha_k^2 L^2 (M-1)^3\sigma^2\\
    &\leq 2 \alpha_k^2 L^4 (M-1)^3 M \|x_{s,k} - u_i^\star\|^2 +  2\alpha_k^2 L^2 (M-1)^3\sigma^2,
 % &\leq 4 \alpha_k^2 L^4 (M-1)^3 M \|x_{s,k} - x^\star\|^2 + 4 \alpha_k^2 L^4 (M-1)^3 M \zeta^2 +  2\alpha_k^2 L^2 (M-1)^3\sigma^2.
\end{align*}
where the last inequality follows from Assumption \ref{assum:smooth}.

Moreover, \eqref{eq:young_ineq} yields $ \|x_{s,k} - u_i^\star\|^2 \leq 2 \|x_{s,k} - x^\star\|^2  + 2 \|x^\star - u_i^\star\|^2  \leq  2 \|x_{s,k} - x^\star\|^2  + 2\zeta^2 $, where the last inequality follows from the definition of $\zeta$. Applying this fact to the inequality above concludes the proof.
\end{proof}
Next, we bound the distance to optimality for the global iterates of \textsc{AREA}.
\begin{lemma}
    \label{lem:s_norm_squared}
    The expected distance to the optimal solution $x^\star$ of the sequence of server iterates $\{x_{s,k}\}$ generated by \eqref{eq:xs_k} is bounded for $k=0,1,2,...$
    \begin{align*}
    \E[\|x_{s,k+1} - x^\star\|^2]
&\leq (1-p_s)\E[\|x_{s,k} - x^\star\|^2]+ \frac{p_s}{n}\sum_{i=1}^n \E[\|y_{i,k} - x_{i,k}^\star\|^2].
\end{align*}
\end{lemma}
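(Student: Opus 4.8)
The plan is to mirror the structure of the proof of Lemma~\ref{lem:s_norm}, but to replace the triangle inequality with a convexity (Jensen) argument adapted to the squared norm. Starting from the server update~\eqref{eq:xs_k}, I would first subtract $x^\star$ from both sides and exploit $\phi_{s,k+1}\in\{0,1\}$ to obtain the exact decomposition
\[
x_{s,k+1}-x^\star = (1-\phi_{s,k+1})(x_{s,k}-x^\star) + \phi_{s,k+1}(\bar{y}_k - x^\star).
\]
Since exactly one of the two coefficients is nonzero on any realization, squaring the norm gives the identity
\[
\|x_{s,k+1}-x^\star\|^2 = (1-\phi_{s,k+1})\|x_{s,k}-x^\star\|^2 + \phi_{s,k+1}\|\bar{y}_k - x^\star\|^2,
\]
where the cross term vanishes because $(1-\phi_{s,k+1})\phi_{s,k+1}=0$.

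I would then take the expectation conditional on $\F_k$. As $x_{s,k}$ and $\bar{y}_k$ are $\F_k$-measurable while $\phi_{s,k+1}$ is independent of $\F_k$ with $\E[\phi_{s,k+1}\mid\F_k]=p_s$, this yields
\[
\E[\|x_{s,k+1}-x^\star\|^2\mid\F_k] = (1-p_s)\|x_{s,k}-x^\star\|^2 + p_s\|\bar{y}_k - x^\star\|^2.
\]
The remaining task is to control $\|\bar{y}_k - x^\star\|^2$ in terms of the memory-variable residuals. The key observation is that optimality of $x^\star$ forces $\tfrac{1}{n}\sum_{i=1}^n\nabla f_i(x^\star)=\nabla f(x^\star)=0$, so the definition $x_{i,k}^\star\triangleq x^\star-\alpha_k M\nabla f_i(x^\star)$ gives $\tfrac{1}{n}\sum_{i=1}^n x_{i,k}^\star = x^\star$. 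Hence $\bar{y}_k - x^\star = \tfrac{1}{n}\sum_{i=1}^n (y_{i,k}-x_{i,k}^\star)$, and Jensen's inequality (convexity of $\|\cdot\|^2$) gives $\|\bar{y}_k-x^\star\|^2\le\tfrac{1}{n}\sum_{i=1}^n\|y_{i,k}-x_{i,k}^\star\|^2$. Substituting this bound and taking total expectation completes the proof.

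This argument is essentially routine; the only point that requires care is the centering step showing $\tfrac{1}{n}\sum_i x_{i,k}^\star=x^\star$, which relies crucially on the first-order optimality condition for the global objective and is precisely what lets the average server residual be rewritten as an average of the per-client residuals. In contrast to Lemma~\ref{lem:s_norm}, where the triangle inequality suffices, here it is Jensen's inequality for the squared norm that converts the norm of an average into an average of squared norms without producing cross terms, keeping the bound tight. I do not anticipate any serious obstacle beyond bookkeeping the measurability of the iterates with respect to $\F_k$.
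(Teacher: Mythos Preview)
Your proposal is correct and follows essentially the same route as the paper: both exploit $\phi_{s,k+1}\in\{0,1\}$ to split the squared norm, take conditional expectation to replace $\phi_{s,k+1}$ by $p_s$, use optimality of $x^\star$ (the paper writes this as inserting $-\alpha_k M\nabla f(x^\star)=0$, which is exactly your centering identity $\tfrac{1}{n}\sum_i x_{i,k}^\star=x^\star$), and finish with Jensen's inequality for $\|\cdot\|^2$. There is no substantive difference.
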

\begin{proof}
  Subtracting $x^\star$ on both sides of \eqref{eq:xs_k} and taking the squared norm yields
\begin{align*}
    \|x_{s,k+1} - x^\star\|^2 &= (1-\phi_{s,k+1})\|x_{s,k} - x^\star\|^2 + \phi_{s,k+1} \|\bar{y}_k - x^\star\|^2.
\end{align*}
Taking the expectation conditional on $\F_k$ on both sides of the relation above yields
\begin{align*}
    \E[\|x_{s,k+1} - x^\star\|^2|\F_k] &= 
     (1-p_s)\|x_{s,k} - x^\star\|^2 + p_s \|\bar{y}_k - x^\star\|^2\\
     &=   (1-p_s)\|x_{s,k} - x^\star\|^2 + p_s \left\|\bar{y}_k - \alpha_k M \nabla f(x) - x^\star \right\|^2\\
&\leq (1-p_s)\|x_{s,k} - x^\star\|^2+ \frac{p_s}{n}\sum_{i=1}^n \|y_{i,k} - x_{i,k}^\star\|^2,
\end{align*}
where the first equality follows from the optimality of $x^\star$, and the last inequality from applying Jensen's inequality to the second term on the right-hand side.

Taking the total expectation on both sides of the preceding relation completes the proof.
\end{proof}
In the next lemma, we bound the distance of the local memory iterates of \textsc{AREA} to the sequence of points $\{x_{i,k}^\star\}$ defined in \eqref{eq:xi_star}.
\begin{lemma} 
\label{lem:y_norm_squared} Recall the definition of $x_{i,k}^\star \triangleq x^\star - \alpha_k M \nabla f_i(x^\star)$, where $M$ is the number of local SGD steps and $\{\alpha_k\}$ the step size sequence in \eqref{eq:x_k}. Then if $\alpha_{k+1} \leq \alpha_k$, the expected distance between the sequence of local memory iterates $\{y_{i,k}\}$ generated by \eqref{eq:y_k} and the sequence $\{x_{i,k}^\star\}$ is bounded for $i\in[n]$ and $k=0,1,2,...$
    \begin{align*}
    \E[\|y_{i,k+1} - x_{i,k+1}^\star\|^2] 
      &\leq (1-p_i)\E[\|y_{i,k} - x_{i,k}^\star\|^2] + (\alpha_k - \alpha_{k+1})^2 L^2 M^2\zeta^2\\
      &\quad + 2(1-p_i)(\alpha_k - \alpha_{k+1})LM\zeta \E[\| y_{i,k} - x_{i,k}^\star\|]\\
    &\quad   +  p_i\E[\|x_{i,k} - x_{i,k}^\star\|^2] + 2p_i(\alpha_k - \alpha_{k+1})LM\zeta\E[\|x_{i,k} - x_{i,k}^\star\|],
\end{align*}
where $\zeta > 0$ is defined in \eqref{eq:zeta_definition}.
\end{lemma}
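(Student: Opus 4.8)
The plan is to mirror the argument used for the non-squared bound in Lemma \ref{lem:y_norm}, but to exploit the Bernoulli structure of $\phi_{i,k+1}$ to split the squared norm \emph{exactly} rather than through a convexity relaxation. First I would rewrite the update \eqref{eq:y_k} in convex-combination form, $y_{i,k+1} = (1-\phi_{i,k+1})y_{i,k} + \phi_{i,k+1}x_{i,k}$, and subtract $x_{i,k+1}^\star$ to get $y_{i,k+1} - x_{i,k+1}^\star = (1-\phi_{i,k+1})(y_{i,k} - x_{i,k+1}^\star) + \phi_{i,k+1}(x_{i,k} - x_{i,k+1}^\star)$. Since exactly one event fires per iteration, $\phi_{i,k+1}\in\{0,1\}$, so precisely one summand is active; squaring therefore gives the identity $\|y_{i,k+1} - x_{i,k+1}^\star\|^2 = (1-\phi_{i,k+1})\|y_{i,k} - x_{i,k+1}^\star\|^2 + \phi_{i,k+1}\|x_{i,k} - x_{i,k+1}^\star\|^2$, with no cross term and no loss.

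The next step is to absorb the mismatch between $x_{i,k}^\star$ and $x_{i,k+1}^\star$, which is the sole origin of the $(\alpha_k-\alpha_{k+1})$ terms. I would add and subtract $x_{i,k}^\star$ inside each squared norm and expand, e.g. $\|y_{i,k} - x_{i,k+1}^\star\|^2 = \|y_{i,k} - x_{i,k}^\star\|^2 + 2\langle y_{i,k} - x_{i,k}^\star,\, x_{i,k}^\star - x_{i,k+1}^\star\rangle + \|x_{i,k}^\star - x_{i,k+1}^\star\|^2$, bounding the inner product by Cauchy--Schwarz. From the definition \eqref{eq:xi_star} we have $x_{i,k}^\star - x_{i,k+1}^\star = -(\alpha_k-\alpha_{k+1})M\nabla f_i(x^\star)$, and since $\nabla f_i(u_i^\star)=0$, Assumption \ref{assum:smooth} together with \eqref{eq:zeta_definition} yields $\|\nabla f_i(x^\star)\| = \|\nabla f_i(x^\star) - \nabla f_i(u_i^\star)\| \leq L\|x^\star - u_i^\star\| \leq L\zeta$; using $\alpha_{k+1}\leq\alpha_k$ this gives $\|x_{i,k}^\star - x_{i,k+1}^\star\| \leq (\alpha_k-\alpha_{k+1})LM\zeta$. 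The identical expansion applies to $\|x_{i,k} - x_{i,k+1}^\star\|^2$, producing a distance term, a cross term weighted by $(\alpha_k-\alpha_{k+1})LM\zeta$, and the common quadratic $(\alpha_k-\alpha_{k+1})^2 L^2 M^2\zeta^2$.

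Finally I would take the conditional expectation given $\F_k$: the quantities $y_{i,k}$, $x_{i,k}$ are $\F_k$-measurable and $x_{i,k+1}^\star$ is deterministic, so $\E[\phi_{i,k+1}|\F_k]=p_i$ replaces the indicator and its complement by $p_i$ and $1-p_i$. The two copies of $(\alpha_k-\alpha_{k+1})^2 L^2 M^2\zeta^2$ then carry weights $1-p_i$ and $p_i$, which sum to one and collapse into the single claimed constant term, while the cross terms retain their $(1-p_i)$ and $p_i$ weights; taking total expectation delivers the stated inequality. I do not expect a genuine obstacle here. The only point requiring care is recognizing that the $\{0,1\}$-valued indicator allows an exact split of the squared norm, avoiding a Jensen step that would otherwise loosen the recursion; everything after that is routine bookkeeping of the step-size-difference terms.
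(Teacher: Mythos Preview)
Your proposal is correct and follows essentially the same approach as the paper's own proof: exploit the $\{0,1\}$ nature of $\phi_{i,k+1}$ to split the squared norm exactly into $(1-\phi_{i,k+1})\|y_{i,k}-x_{i,k+1}^\star\|^2+\phi_{i,k+1}\|x_{i,k}-x_{i,k+1}^\star\|^2$, add and subtract $x_{i,k}^\star$ in each term, bound the resulting inner products by Cauchy--Schwarz, use $\|\nabla f_i(x^\star)\|\le L\zeta$, and then take conditional and total expectations. The only cosmetic difference is that the paper combines the two $\|x_{i,k}^\star-x_{i,k+1}^\star\|^2$ terms (with weights $1-\phi_{i,k+1}$ and $\phi_{i,k+1}$) before taking expectation, whereas you collapse them after; the arithmetic is identical.
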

\begin{proof}
      Subtracting $x_{i,k+1}^\star$ from both sides of \eqref{eq:y_k} and taking the squared norm on both sides yields
\begin{align*}
    \|y_{i,k+1} - x_{i,k+1}^\star\|^2 &= (1-\phi_{i,k+1})\|y_{i,k} - x_{i,k+1}^\star\|^2 +  \phi_{i,k+1}\|x_{i,k} - x_{i,k+1}^\star\|^2\\
    &= (1-\phi_{i,k+1})\|y_{i,k} - x_{i,k}^\star\|^2 + \| x_{i,k+1}^\star - x_{i,k}^\star\|^2 \\
    &\quad + 2(1-\phi_{i,k+1})\langle y_{i,k} - x_{i,k}^\star, x_{i,k+1}^\star - x_{i,k}^\star\rangle \\
    &\quad   + \phi_{i,k+1}\|x_{i,k} - x_{i,k}^\star\|^2 + 2\phi_{i,k+1} \langle x_{i,k} - x_{i,k}^\star, x_{i,k+1}^\star - x_{i,k}^\star \rangle  \\
    &\leq (1-\phi_{i,k+1})\|y_{i,k} - x_{i,k}^\star\|^2 + \| x_{i,k+1}^\star - x_{i,k}^\star\|^2 \\
    &\quad + 2(1-\phi_{i,k+1})\| y_{i,k} - x_{i,k}^\star\|\| x_{i,k+1}^\star - x_{i,k}^\star\|\\
    &\quad   +  \phi_{i,k+1}\|x_{i,k} - x_{i,k}^\star\|^2 + 2\phi_{i,k+1}\|x_{i,k} - x_{i,k}^\star\|\| x_{i,k+1}^\star - x_{i,k}^\star\| ,
\end{align*}
where we obtain the second inequality by adding and subtracting $x_{i,k}^\star$ inside the two squared norms and expanding the squares, and the last inequality from applying the Cauchy-Schwarz inequality.

Taking the expectation conditional on $\F_k$ on both sides of the relation above yields
\begin{align*}
    \E[\|y_{i,k+1} - x_{i,k+1}^\star\|^2|\F_k] 
    &\leq (1-p_i)\|y_{i,k} - x_{i,k}^\star\|^2 + \| x_{i,k+1}^\star - x_{i,k}^\star\|^2 \\
    &\quad + 2(1-p_i)\| y_{i,k} - x_{i,k}^\star\|\| x_{i,k+1}^\star - x_{i,k}^\star\|\\
    &\quad   +  p_i\|x_{i,k} - x_{i,k}^\star\|^2 + 2p_i\|x_{i,k} - x_{i,k}^\star\|\| x_{i,k+1}^\star - x_{i,k}^\star\|\\
    &\leq (1-p_i)\|y_{i,k} - x_{i,k}^\star\|^2 + (\alpha_k - \alpha_{k+1})^2 M^2\| \nabla f_i(x^\star)\|^2 \\
    &\quad + 2(1-p_i)(\alpha_k - \alpha_{k+1})M\| y_{i,k} - x_{i,k}^\star\|\| \nabla f_i(x^\star)\|\\
    &\quad   +  p_i\|x_{i,k} - x_{i,k}^\star\|^2 + 2p_i(\alpha_k - \alpha_{k+1})M\|x_{i,k} - x_{i,k}^\star\|\|\nabla f_i(x^\star)\|\\
      &\leq (1-p_i)\|y_{i,k} - x_{i,k}^\star\|^2 + (\alpha_k - \alpha_{k+1})^2 L^2 M^2\zeta^2 \\
      &\quad + 2(1-p_i)(\alpha_k - \alpha_{k+1})LM\zeta \| y_{i,k} - x_{i,k}^\star\|\\
    &\quad   +  p_i\|x_{i,k} - x_{i,k}^\star\|^2 + 2p_i(\alpha_k - \alpha_{k+1})LM\zeta\|x_{i,k} - x_{i,k}^\star\|,
\end{align*}
where the second inequality follows from the definitions of $x_{i,k}^\star$ and $x_{i,k+1}^\star$, and the last inequality from  applying Assumption \ref{assum:smooth} and the definition of $\zeta$.

Taking the total expectation on both sides of the relation above concludes the proof.
\end{proof}
Before stating our main theorem, we prove one more intermediate lemma bounding the distance of the local model iterates to the sequence of points $\{x_{i,k}^\star\}$.

\begin{lemma}
    \label{lem:x_norm_squared} Suppose that
the step size sequence $\{\alpha_k\}$ in \eqref{eq:x_k} satisfies the following inequalities for $k=0,1,2,...$
\begin{align*}
    \alpha_{k+1} \leq \alpha_k, \quad  \alpha_k  \leq \min \left\{\frac{2}{M(\mu + L)},\sqrt{\frac{ M \gamma^2}{64L^4 (M-1)^3 }}, \sqrt[3]{\frac{ \gamma}{32 L^4 (M-1)^3 }} \right\},
\end{align*}
where $M$ is the number of local SGD steps, $\gamma \triangleq \frac{2\mu L}{\mu + L}$, and $L$ and $\mu$ are the Lipschitz and strong convexity constants defined in Assumptions \ref{assum:smooth} and \ref{assum:strong_convex}, respectively.

Then the distance between the sequence of local iterates $\{x_{i,k}\}$ generated by \eqref{eq:x_k} and the sequence of points $\{x_{i,k}^\star\}$ is bounded for $i \in [n]$ and $k=0,1,2,...$
\begin{align*}
   \E[\|x_{i,k+1} - x_{i,k+1}^\star\|^2] &\leq (1-p_i) \E[\|x_{i,k} - x_{i,k}^\star\|^2] +  (\alpha_{k+1} - \alpha_k)^2 L^2 M^2\zeta^2\\
   &\quad + 2(1-p_i) (\alpha_{k} - \alpha_{k+1})M L \zeta  \E[\|x_{i,k} - x_{i,k}^\star\|] \\
   &\quad + p_i\left( 1 - \frac{\alpha_k M \gamma}{4} \right) \E[\|x_{s,k} - x^\star\|^2]  \\
    &\quad + 2p_i (\alpha_{k} - \alpha_{k+1})LM\zeta \left(1-\frac{\alpha_k M \gamma}{2}\right) \E[\| x_{s,k} - x^\star\|]  \\
    &\quad + p_i\alpha_k^2 c_\sigma \sigma^2   + p_i\alpha_k^3  c_\zeta  \zeta^2,  
\end{align*}
where $\zeta>0$ is defined in \eqref{eq:zeta_definition}, $c_\sigma \triangleq 2 M \tilde{c}_\sigma$, $c_\zeta \triangleq 4L^4 (M-1)^3 \tilde{c}_\zeta $, $\tilde{c}_\sigma \triangleq 1 + \frac{2L\left( 1 - \frac{1}{M}\right) }{\mu} + \frac{4 L^2 \left( 1 - \frac{1}{M}\right)}{M(\mu + L)^2}$ and $ \tilde{c}_\zeta \triangleq  \frac{2}{\gamma} + \frac{2}{\mu + L}$.
\end{lemma}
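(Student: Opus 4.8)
The plan is to mirror the structure of Lemma~\ref{lem:y_norm_squared}, with the one genuinely new feature being that the ``updated'' branch now carries the server iterate through a contractive $M$-step SGD map rather than a mere copy. First I would subtract $x_{i,k+1}^\star$ from \eqref{eq:x_k}, take the squared norm, and use that $\phi_{i,k+1}\in\{0,1\}$ (so $\phi_{i,k+1}^2=\phi_{i,k+1}$, $(1-\phi_{i,k+1})^2=1-\phi_{i,k+1}$, and $\phi_{i,k+1}(1-\phi_{i,k+1})=0$) to split into a ``not updated'' branch weighted by $1-\phi_{i,k+1}$ and an ``updated'' branch weighted by $\phi_{i,k+1}$. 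The not-updated term $\|x_{i,k}-x_{i,k+1}^\star\|^2$ is handled verbatim as in Lemma~\ref{lem:y_norm_squared}: add and subtract $x_{i,k}^\star$, expand the square, bound the cross term by Cauchy--Schwarz, and use $\|x_{i,k}^\star-x_{i,k+1}^\star\|=(\alpha_k-\alpha_{k+1})M\|\nabla f_i(x^\star)\|\le(\alpha_k-\alpha_{k+1})LM\zeta$ (Assumption~\ref{assum:smooth} and \eqref{eq:zeta_definition}). Taking $\E[\cdot\mid\F_k]$ turns $1-\phi_{i,k+1}$ into $1-p_i$ and yields the first two lines of the claim.

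For the updated branch I would write $G_{i,\alpha_k}^M[x_{s,k}]-x_{i,k+1}^\star = w + (x_{i,k}^\star - x_{i,k+1}^\star)$ with $w\triangleq G_{i,\alpha_k}^M[x_{s,k}]-x_{i,k}^\star$, expand the square, and substitute the decomposition \eqref{eq:tilde_g}, namely $w = a + \alpha_k\epsilon_{i,k}^d + \alpha_k\epsilon_{i,k}^g$, where $a\triangleq\big(x_{s,k}-\alpha_k M\nabla f_i(x_{s,k})\big)-\big(x^\star-\alpha_k M\nabla f_i(x^\star)\big)$ is $\F_k$-measurable. Since $\alpha_k\le\tfrac{2}{M(\mu+L)}$, applying Lemma~\ref{lem:nesterov} to the averaged step of size $\alpha_k M$ gives $\|a\|^2\le(1-\alpha_k M\gamma)\|x_{s,k}-x^\star\|^2$, hence $\|a\|\le(1-\tfrac{\alpha_k M\gamma}{2})\|x_{s,k}-x^\star\|$ by \eqref{eq:standard_ineq_power}. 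After conditioning on $\F_k$, the zero-mean property $\E[\epsilon_{i,k}^g\mid\F_k]=0$ (Assumption~\ref{assum:grad_stoch} plus the tower rule over the $M$ inner steps) annihilates every cross term linear in $\epsilon_{i,k}^g$, both inside $\|w\|^2$ and inside $2\langle w, x_{i,k}^\star-x_{i,k+1}^\star\rangle$; the gradient noise therefore enters only through $\alpha_k^2\E[\|\epsilon_{i,k}^g\|^2\mid\F_k]\le \alpha_k^2 M\sigma^2$ (Lemma~\ref{lem:drift_norm_squared}), which is exactly what keeps the $\sigma^2$-contribution at order $\alpha_k^2$ and supplies the leading piece of $c_\sigma$. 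The surviving $\langle a, x_{i,k}^\star-x_{i,k+1}^\star\rangle$ term, bounded via $\|a\|\le(1-\tfrac{\alpha_k M\gamma}{2})\|x_{s,k}-x^\star\|$, produces line four of the claim.

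The delicate bookkeeping is the drift error $\epsilon_{i,k}^d$, which is neither zero-mean nor $\F_k$-measurable. I would control its contributions with the two matched estimates from the preliminaries: $\E[\|\epsilon_{i,k}^d\|\mid\F_k]=O(\alpha_k)\big(\|x_{s,k}-x^\star\|+\zeta+\sigma\big)$ (Lemma~\ref{lem:errors_norms}) for the linear cross term $2\alpha_k\langle a,\E[\epsilon_{i,k}^d\mid\F_k]\rangle$, and $\E[\|\epsilon_{i,k}^d\|^2\mid\F_k]=O(\alpha_k^2)\big(\|x_{s,k}-x^\star\|^2+\zeta^2+\sigma^2\big)$ (Lemma~\ref{lem:drift_norm_squared}) for the quadratic term, first applying $\|u+v\|^2\le 2\|u\|^2+2\|v\|^2$ to $\alpha_k\epsilon_{i,k}^d+\alpha_k\epsilon_{i,k}^g$ to avoid the $\langle\epsilon^d,\epsilon^g\rangle$ coupling. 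Each resulting product of $\|x_{s,k}-x^\star\|$ with $\zeta$ or $\sigma$ is then split by Young's inequality \eqref{eq:young_ineq}, routing the $\|x_{s,k}-x^\star\|^2$ pieces into the contraction budget and the $\zeta^2,\sigma^2$ pieces into $c_\zeta,c_\sigma$.

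The main obstacle, and where the three step-size caps are spent, is verifying that all these $\|x_{s,k}-x^\star\|^2$ contributions fit within the available $\tfrac{3\alpha_k M\gamma}{4}$ of contraction, so that the surviving coefficient is precisely $1-\tfrac{\alpha_k M\gamma}{4}$. Concretely, the quadratic drift term contributes a coefficient of order $\alpha_k^4 L^4(M-1)^3 M$ on $\|x_{s,k}-x^\star\|^2$, and forcing $8\alpha_k^4 L^4(M-1)^3 M\le\tfrac{\alpha_k M\gamma}{4}$ is exactly the cube-root cap $\alpha_k\le\sqrt[3]{\gamma/(32L^4(M-1)^3)}$; the linear cross term contributes a coefficient controlled by the square-root cap $\alpha_k\le\sqrt{M\gamma^2/(64L^4(M-1)^3)}$, while $\alpha_k\le\tfrac{2}{M(\mu+L)}$ underwrites Lemma~\ref{lem:nesterov}. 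Collecting the leftover terms then gives $p_i\alpha_k^2 c_\sigma\sigma^2 + p_i\alpha_k^3 c_\zeta\zeta^2$, where the $\alpha_k^3$ (rather than $\alpha_k^4$) scaling on $\zeta^2$ arises because the linear cross term, Young-split against an $O(\alpha_k)$ budget, dominates the quadratic one. A final total expectation delivers the stated bound.
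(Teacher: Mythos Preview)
Your skeleton matches the paper's proof: split on $\phi_{i,k+1}$, treat the not-updated branch exactly as in Lemma~\ref{lem:y_norm_squared}, and for the updated branch write $G_{i,\alpha_k}^M[x_{s,k}]-x_{i,k}^\star=\hat s_k+\alpha_k\epsilon_{i,k}^d+\alpha_k\epsilon_{i,k}^g$ with $\hat s_k$ the contractive term, then use $\E[\epsilon_{i,k}^g\mid\F_k]=0$, Lemma~\ref{lem:nesterov}, and Lemma~\ref{lem:drift_norm_squared}. The one substantive place you deviate is the cross term $2\alpha_k\langle\hat s_k,\epsilon_{i,k}^d\rangle$: you propose Cauchy--Schwarz plus the first-moment bound of Lemma~\ref{lem:errors_norms}, whereas the paper instead applies Young's inequality \eqref{eq:young_ineq} with the specific constant $c=\tfrac{\alpha_k M\gamma}{2(1-\alpha_k M\gamma)}$, so that $(1+c)\|\hat s_k\|^2\le(1-\tfrac{\alpha_k M\gamma}{2})\|x_{s,k}-x^\star\|^2$ exactly, and then bounds the resulting $\tfrac{2\alpha_k}{M\gamma}\E[\|\epsilon_{i,k}^d\|^2\mid\F_k]$ via Lemma~\ref{lem:drift_norm_squared}. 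Both routes are valid, but the paper's tuned Young constant is what produces the precise constants $\tilde c_\sigma,\tilde c_\zeta$ (and hence $c_\sigma,c_\zeta$) stated in the lemma; your route would yield the same orders with slightly different constants. Relatedly, your attribution of the caps is a little off: in the paper the square-root and cube-root caps jointly bound a single consolidated term $T_5=\tfrac{8\alpha_k^3 L^4(M-1)^3}{\gamma}+4\alpha_k^4 L^4(M-1)^3 M$ (each cap handling one summand) rather than separately controlling ``linear'' and ``quadratic'' drift contributions. Finally, the paper handles $\langle\epsilon^d,\epsilon^g\rangle$ by a direct Young split (noting explicitly that this cross term does \emph{not} vanish), which is morally equivalent to your $\|u+v\|^2\le2\|u\|^2+2\|v\|^2$ device.
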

\begin{proof}
      Subtracting $x_{i,k+1}^\star$ from both sides of \eqref{eq:x_k} and taking the squared norm on both sides yields
\begin{align*}
    \|x_{i,k+1} - x_{i,k+1}^\star\|^2 &= (1-\phi_{i,k+1})\|y_{i,k} - x_{i,k+1}^\star\|^2 +  \phi_{i,k+1}\|G_{i,\alpha_k}^M[x_{s,k}] - x_{i,k+1}^\star\|^2.
\end{align*}
Taking the expectation conditional on $\F_k$ on both sides of the relation above and applying \eqref{eq:tilde_g} yields
\begin{equation}
\label{eq:lemma_x_norm_1}
    \begin{split}
    \E[\|x_{i,k+1} - x_{i,k+1}^\star\|^2|\F_k] &= (1-p_i) \overbrace{\|x_{i,k} - x_{i,k+1}^\star\|^2}^{\triangleq T_1} \\
   &\quad + p_i \overbrace{\E[\|x_{s,k} - \alpha_k M \nabla f_i(x_{s,k}) + \alpha_k \epsilon_{i,k}^d + \alpha_k \epsilon_{i,k}^g - x_{i,k+1}^\star\|^2|\F_k]}^{\triangleq T_2}.       
    \end{split}
\end{equation}
First, we bound the term $T_1$ in \eqref{eq:lemma_x_norm_1}. Adding and subtracting $x_{i,k}^\star$ inside the norm and then expanding the square yields
    \begin{align*}
    T_1 
    &= \|x_{i,k} - x_{i,k}^\star\|^2 +  \|x_{i,k+1}^\star - x_{i,k}^\star\|^2 - 2 \langle x_{i,k} - x_{i,k}^\star, x_{i,k+1}^\star - x_{i,k}^\star\rangle \\
    &= \|x_{i,k} - x_{i,k}^\star\|^2 + (\alpha_k - \alpha_{k+1})^2 M^2  \|\nabla f_i(x^\star)\|^2 + 2 (\alpha_k - \alpha_{k+1})M\langle x_{i,k} - x_{i,k}^\star,\nabla f_i(x^\star)\rangle \\
    &\leq \|x_{i,k} - x_{i,k}^\star\|^2 + (\alpha_{k+1} - \alpha_k)^2 M^2 \|\nabla f_i(x^\star)\|^2 \\
    &\quad + 2(\alpha_{k} - \alpha_{k+1})M \|\nabla f_i(x^\star)\| \|x_{i,k} - x_{i,k}^\star\|,
\end{align*}
where the second equality follows from the definitions of $x_{i,k}^\star$ and $x_{i,k+1}^\star$, and the last inequality from Cauchy-Schwarz.

Applying Assumption \ref{assum:smooth} and the definition of $\zeta$ from \eqref{eq:zeta_definition} to the relation above further yields
    \begin{align}
    \label{eq:x_norm_squared_t1}
    T_1 
    &\leq \|x_{i,k} - x_{i,k}^\star\|^2 + (\alpha_{k+1} - \alpha_k)^2 L^2 M^2\zeta^2 + 2(\alpha_{k} - \alpha_{k+1})M L \zeta  \|x_{i,k} - x_{i,k}^\star\|.
\end{align}
Next, we bound the term $T_2$ in \eqref{eq:lemma_x_norm_1}.  Define $\hat{s}_k \triangleq x_{s,k} - \alpha_k M \nabla f_i(x_{s,k}) - x_{i,k}^\star$. We then have
\begin{align*}
    T_2 &= \E[\|x_{s,k} - \alpha_k M \nabla f_i(x_{s,k}) - x_{i,k}^\star  + (\alpha_{k+1} - \alpha_k)M \nabla f_i(x^\star) + \alpha_k \epsilon_{i,k}^d + \alpha_k \epsilon_{i,k}^g\|^2|\F_k]\\
    &= \|\hat{s}_k\|^2 + (\alpha_{k+1} - \alpha_k)^2M^2\|\nabla f_i(x^\star)\|^2 + \alpha_k^2 \E[\|\epsilon_{i,k}^d\|^2|\F_k] + \alpha_k^2 \E[\|\epsilon_{i,k}^g\|^2|\F_k] \\
    &\quad + 2 (\alpha_{k+1} - \alpha_k)M \langle \hat{s}_k, \nabla f_i(x^\star)\rangle  + 2\E[\langle \hat{s}_k, \alpha_k \epsilon_{i,k}^d\rangle |\F_k ] +  2\E[\langle \hat{s}_k, \alpha_k \epsilon_{i,k}^g\rangle |\F_k ]\\
    &\quad + 2\E[\langle (\alpha_{k+1} - \alpha_k) M \nabla f_i(x^\star), \alpha_k \epsilon_{i,k}^d \rangle|\F_k]  + 2\E[\langle (\alpha_{k+1} - \alpha_k) M \nabla f_i(x^\star), \alpha_k \epsilon_{i,k}^g \rangle|\F_k] \\
    &\quad + 2\E[ \langle \alpha_k \epsilon_{i,k}^d, \alpha_k \epsilon_{i,k}^g \rangle |\F_k]\\
    &\leq \|\hat{s}_k\|^2 + (\alpha_{k+1} - \alpha_k)^2M^2\|\nabla f_i(x^\star)\|^2 + \alpha_k^2 \E[\|\epsilon_{i,k}^d\||\F_k] + \alpha_k^2 \E[\|\epsilon_{i,k}^d\||\F_k]  \\
    &\quad + 2 (\alpha_{k+1} - \alpha_k)M \langle \hat{s}_k, \nabla f_i(x^\star)\rangle + 2\E[\langle \hat{s}_k, \alpha_k \epsilon_{i,k}^d\rangle |\F_k ] \\
    &\quad + 2\E[\langle (\alpha_{k+1} - \alpha_k) M \nabla f_i(x^\star), \alpha_k \epsilon_{i,k}^d \rangle|\F_k]  + 2\E[ \langle \alpha_k \epsilon_{i,k}^d, \alpha_k \epsilon_{i,k}^g \rangle |\F_k],
\end{align*}
where the last inequality follows the unbiasedness of the stochastic gradient approximations due to Assumption \eqref{assum:grad_stoch} (note that $\E[\langle \epsilon_{i,k}^d,\epsilon_{i,k}^g\rangle|\F_k] \neq 0$ as the stochastic gradient and client drift errors are dependent).

We bound the inner products in the preceding relation as follows
\begin{gather*}
   \langle \hat{s}_k, \nabla f_i(x^\star)\rangle \leq \|\hat{s}_k\| \|\nabla f_i(x^\star)\| \\
    2\E[\langle \hat{s}_k, \alpha_k \epsilon_{i,k}^g\rangle |\F_k ]  \leq \frac{\alpha_k M \gamma}{2(1-\alpha_k M\gamma)} \|\hat{s}_k\|^2 + \frac{2\alpha_k (1-\alpha_k M \gamma)}{ M \gamma}\E[\|\epsilon_{i,k}^d\|^2|\F_k]\\
   2\E[\langle (\alpha_{k+1} - \alpha_k) M \nabla f_i(x^\star), \alpha_k \epsilon_{i,k}^d \rangle|\F_k] \leq (\alpha_{k+1}-\alpha_k)^2M^2\|\nabla f_i(x^\star)\|^2 + \alpha_k^2 \E[\|\epsilon_{i,k}^d\|^2|F_k]\\
   2\E[ \langle \alpha_k \epsilon_{i,k}^d, \alpha_k \epsilon_{i,k}^g \rangle |\F_k] \leq \alpha_k^2 \E[\|\epsilon_{i,k}^d\|^2|\F_k] + \alpha_k^2 \E[\|\epsilon_{i,k}^g\|^2|\F_k],
\end{gather*}
where we obtain the first bound by applying the Cauchy-Schwarz inequality and remaining bounds using \eqref{eq:young_ineq}.

Combining the two preceding relations and bounding the squared norms of the stochastic gradient and client drift errors using \ref{lem:drift_norm_squared} yields
\begin{align*}
    &T_2
    \leq \left( 1+ \frac{\alpha_k M \gamma}{2(1-\alpha_k M \gamma)} \right) \|\hat{s}_k\|^2  + 2(\alpha_{k+1} - \alpha_k)^2M^2\|\nabla f_i(x^\star)\|^2  \\
    &\quad + 2 (\alpha_{k} - \alpha_{k+1})M \| \hat{s}_k\| \|\nabla f_i(x^\star)\| \\
    &\quad + 2\alpha_k^2 M \overbrace{\left(   1 +  \frac{2\alpha_k L^2 (M-1)^3}{M^2 \gamma} + \frac{\alpha_k^2 L^2 (M-1)^3}{M^2} \right)}^{\triangleq T_3}\sigma^2   + 4\alpha_k^3 L^4 (M-1)^3 \overbrace{\left( \frac{2  }{\gamma} + \alpha_k M  \right)}^{\triangleq T_4}   \zeta^2\\
    &\quad + \overbrace{\left(\frac{  8 \alpha_k^3 L^4 (M-1)^3  }{ \gamma} + 4 \alpha_k^4 L^4 (M-1)^3 M  \right)}^{T_5}\|x_{s,k} - x^\star\|^2.
\end{align*}
Note that if $ \alpha_k  \leq \min \left\{\frac{2}{M(\mu+L)},\sqrt{\frac{ M \gamma^2}{64L^4 (M-1)^3 }}, \sqrt[3]{\frac{ \gamma}{32 L^4 (M-1)^3 }} \right\}$, the following bounds hold
\begin{gather*}
T_3 \leq 1 + \frac{2 \left(\frac{2}{M(\mu + L)} \right) L^2(M-1)}{\gamma} + \frac{4 L^2 (M-1)}{M^2(\mu + L)^2} \leq \tilde{c}_\sigma\\
T_4 \leq \frac{2}{\gamma} + \frac{2M}{M(\mu + L)} \leq \tilde{c}_\zeta \\
   T_5 \leq \frac{\alpha_k M \gamma}{8} + \frac{\alpha_k M \gamma}{8} = \frac{\alpha_k M \gamma}{4}.
\end{gather*}
We combine the two preceding relations and apply the definitions of $c_\sigma$ and $c_\zeta$ to obtain
\begin{align*}
    T_2
    &\leq \left( 1+ \frac{\alpha_k M \gamma}{2(1-\alpha_k M \gamma)} \right) \|\hat{s}_k\|^2  + 2(\alpha_{k+1} - \alpha_k)^2M^2\|\nabla f_i(x^\star)\|^2  \\
    &\quad + 2 (\alpha_{k} - \alpha_{k+1})M \| \hat{s}_k\| \|\nabla f_i(x^\star)\| \\
    &\quad + \alpha_k^2 c_\sigma \sigma^2   + \alpha_k^3 c_\zeta  \zeta^2 + \frac{\alpha_k M \gamma}{4}\|x_{s,k} - x^\star\|^2.
\end{align*}
By Lemma \ref{lem:nesterov}, we have $\|\hat{s}_k\|^2 \leq (1-\alpha_k M \gamma)\|x_{s,k} - x^\star\|^2$. Applying this fact to the inequality above yields
\begin{equation}
\label{eq:x_norm_squared_t2}
\begin{split}
   T_2
    &\leq \left( 1 - \frac{\alpha_k M \gamma}{4} \right) \|x_{s,k} - x^\star\|^2  + 2(\alpha_{k+1} - \alpha_k)^2M^2\|\nabla f_i(x^\star)\|^2  \\
    &\quad + 2 (\alpha_{k} - \alpha_{k+1})M \sqrt{1-\alpha_k M \gamma} \| x_{s,k} - x^\star\| \|\nabla f_i(x^\star)\| \\
    &\quad + \alpha_k^2  c_\sigma \sigma^2   + \alpha_k^3 c_\zeta  \zeta^2 \\
    &\leq \left( 1 - \frac{\alpha_k M \gamma}{4} \right) \|x_{s,k} - x^\star\|^2  + 2(\alpha_{k+1} - \alpha_k)^2L^2 M^2\zeta^2 \\
    &\quad + 2 (\alpha_{k} - \alpha_{k+1})LM\zeta \sqrt{1-\alpha_k M \gamma} \| x_{s,k} - x^\star\|   + \alpha_k^2  c_\sigma \sigma^2   + \alpha_k^3 c_\zeta  \zeta^2,  
\end{split}
\end{equation}
where the last inequality follows from applying Assumption \ref{assum:smooth} and the definition of $\zeta$.

Substituting \eqref{eq:x_norm_squared_t1} and \eqref{eq:x_norm_squared_t2} in \eqref{eq:lemma_x_norm_1} yields
\begin{align*}
   &\E[\|x_{i,k+1} - x_{i,k+1}^\star\|^2|\F_k] \leq (1-p_i) \|x_{i,k} - x_{i,k}^\star\|^2 +  (\alpha_{k+1} - \alpha_k)^2 L^2 M^2\zeta^2\\
   &\qquad + 2(1-p_i) (\alpha_{k} - \alpha_{k+1})M L \zeta  \|x_{i,k} - x_{i,k}^\star\| + p_i\left( 1 - \frac{\alpha_k M \gamma}{4} \right) \|x_{s,k} - x^\star\|^2  \\
    &\qquad + 2p_i (\alpha_{k} - \alpha_{k+1})LM\zeta \sqrt{1-\alpha_k M \gamma} \| x_{s,k} - x^\star\|   + p_i\alpha_k^2 c_\sigma \sigma^2   + p_i\alpha_k^3  c_\zeta  \zeta^2.  
\end{align*}

Observing that $\sqrt{1-\alpha_k M \gamma} \leq 1 - \frac{\alpha_k M \gamma}{2}$ due to \eqref{eq:standard_ineq_power} and taking the total expectation on both sides of the relation above completes the proof.
\end{proof}
For convenience, we re-state Theorem \ref{thm:convergence_strong_convex} here, followed by its proof.
\begin{thm_strong_convex}
Under Assumptions \ref{assum:grad_stoch}, \ref{assum:smooth} and \ref{assum:strong_convex}, let $x^\star = \arg \min_x f(x)$ and suppose that the step size sequence $\{\alpha_k\}$ in Algorithm \ref{alg:area_client} is defined as follows
\begin{align*}
    \alpha_k \triangleq \tfrac{1}{ \tfrac{\pmi M \gamma k }{48}  +  \tfrac{1}{\mathcal{D}}}, \text{ where }  \mathcal{D} \triangleq \min\left\{ \tfrac{2}{M(\mu + L)}, \tfrac{\gamma}{L^2(M-1)}, \sqrt{\tfrac{ M \gamma^2}{64L^4 (M-1)^3 }}, \sqrt[3]{\tfrac{ \gamma}{32 L^4 (M-1)^3 }} \right\},
\end{align*}
and $\gamma \triangleq \frac{2\mu L}{\mu + L}$, $\pmi \triangleq \min\{\min_i p_i, p_s\}$.

Then after $K$ iterations, the sequence  $\{x_{s,k}\}$ of global server iterates in Algorithm \ref{alg:area_server} satisfies
\begin{equation*}
\begin{split}
\E[\|x_{s,K} - x^\star\|^2] & = \mathcal{O}\Bigg( \frac{r_p (1+\delta \kappa) \sigma^2}{\pmi M K} + \frac{r_p \bar{q}^{1/2} L (1+\delta \kappa)^{1/2}\sigma \zeta}{\pmi M^{1/2} K^{3/2}}\\
    &\quad + \frac{r_p \bar{q}^{1/2} L^{3/2}(\sigma \zeta^3)^{1/2}}{\pmi^{1/2} K^{3/2}} + \frac{r_p  (\delta  + \pmi^{1/2} \bar{q}) L^4\zeta^2}{\pmi^2 K^2}\Bigg),       
    \end{split}
\end{equation*}
where $r_p \triangleq \frac{p_s}{\pmi}$, $\bar{q} \triangleq \frac{1}{n}\sum_{i=1}^n \tfrac{1}{p_i}$, $\zeta \triangleq \max_i\|x^\star - u_i^\star\|$, $u_i^\star \triangleq \arg \min_x f_i(x)$, $\kappa \triangleq \tfrac{L}{\mu}$ is the condition number, and $\delta \triangleq 1 - \tfrac{1}{M}$.
\end{thm_strong_convex}
\begin{proof} Let $\Lambda_k \triangleq w_k'u_k$, where $w_k \in \R^{2n+1}$ is defined in Lemma \ref{lem:properties_of_Ak} and $u_k \in \R^{2n+1}$ is given by
\begin{align*}
 u_k\triangleq \begin{bmatrix}
      \left( \{\E[\|x_{i,k} - x_{i,k}^\star\|^2]\}_{i=1}^n\right)' &
\left(\{\E[\|y_{i,k} - x_{i,k}^\star\|^2]\}_{i=1}^n \right)' &
      \E[\|x_{s,k} - x^\star\|^2]
    \end{bmatrix}'.
\end{align*}
Moreover, define the matrix $B_k \in \R^{(2n+1)^2}$ 
\begin{align*}
 B_k \triangleq  \begin{bmatrix}
       I-P & 0 & \left(1- \frac{\alpha_k M \gamma}{2} \right) P1_n\\
       P & I-P & 0 \\
       0 & 0 & 0
   \end{bmatrix}.
\end{align*}
Combining the results of Lemmas \eqref{lem:s_norm_squared}, \eqref{lem:y_norm_squared} and \eqref{lem:x_norm_squared} and applying the definitions of $u_k$ and $B_k$, and the definitions of $A_k$ and $\hat{u}_k$ from \eqref{eq:transition_matrix} and Lemma \ref{lem:small_lyapunov}, respectively, yields the following system of inequalities
\begin{align*}
  u_{k+1} &\leq A_k u_k + 2(\alpha_k - \alpha_{k+1}) LM \zeta B_k \hat{u}_k + 2(\alpha_{k+1} - \alpha_k)^2 L^2 M^2 \zeta^2 \begin{bmatrix}
   1_n\\
   1_n\\
   0   
   \end{bmatrix} \\
  &\quad  + \alpha_k^2 c_\sigma \sigma^2\begin{bmatrix}
       P1_n\\0\\0
   \end{bmatrix}  + \alpha_k^3 c_\zeta \zeta^2 \begin{bmatrix}
       P1_n\\0\\0
   \end{bmatrix}\\
   &\leq A_k u_k + 2(\alpha_k - \alpha_{k+1}) LM \zeta A_k \hat{u}_k+ 2(\alpha_{k+1} - \alpha_k)^2 L^2 M^2 \zeta^2 \begin{bmatrix}
   1_n\\
   1_n\\
   0   
   \end{bmatrix} \\
   &\quad  + \alpha_k^2 c_\sigma \sigma^2\begin{bmatrix}
       P1_n\\0\\0
   \end{bmatrix}  + \alpha_k^3 c_\zeta \zeta^2 \begin{bmatrix}
       P1_n\\0\\0
   \end{bmatrix},
\end{align*}
where the last inequality follows from the fact that $\hat{u}_k$ is non-negative and the elements of $A_k$ are either equal or greater than the corresponding elements of $B_k$.

Recall the definition of  $\lambda_k \triangleq w_k' \hat{u}_k$ from Lemma \ref{lem:small_lyapunov}. Multiplying both sides of the preceding inequality with $w_k'$ and applying \eqref{eq:rho_k_bound} to bound the product $w'_k A_k$ and \eqref{eq:eigenvector_products} to bound the last 3 terms on the right-hand side yields
\begin{align*}
 w_k' u_{k+1} 
   &\leq \rho_k \Lambda_k + 2(\alpha_k - \alpha_{k+1}) LM \zeta \rho_k \lambda_k  + 4 L^2 M^2 \bar{q} \zeta^2 \frac{(\alpha_{k+1} - \alpha_k)^2}{\beta_k}   +  c_\sigma \sigma^2 \frac{\alpha_k^2}{\beta_k} +  c_\zeta \zeta^2 \frac{\alpha_k^3}{\beta_k}\\
   &\leq \rho_k \Lambda_k + 2(\alpha_k - \alpha_{k+1}) LM \zeta \rho_k \lambda_k  + 8 L^2 M^2 \bar{q} \zeta^2 (\alpha_{k+1} - \alpha_k)^2 + 2 c_\sigma \sigma^2 \alpha_k^2 +  2 c_\zeta \zeta^2 \alpha_k^3,
\end{align*}
where the last inequality follows from \eqref{eq:step_lower_bound}.

Note that $\Lambda_{k+1} = w_{k+1}' u_k \leq w_k'u_k$. Applying this fact and the definition of $\alpha_k$ from \eqref{eq:step size_definition} with $p= \pmi$ to the relation above yields
\begin{align*}
\Lambda_{k+1} 
   &\leq \rho_k \Lambda_k + \frac{c_1 \zeta \rho_k \lambda_k}{(k+d)(k+d+1)}  + \frac{c_2 \zeta^2 }{(k+d)^2(k+d+1)^2} + \frac{c_3 \sigma^2 }{(k+d)^2} +  \frac{c_4 \zeta^2}{(k+d)^3},
\end{align*}
where $c_1 \triangleq 2 c LM$, $c_2 \triangleq 8 c^2L^2 M^2 \bar{q} $, $c_3 \triangleq 2 c^2 c_\sigma$ and $c_4 \triangleq 2 c^3 c_\zeta $.

Next, we multiply both sides of the preceding relation with $(k+d)^4$ and apply \eqref{eq:spectral_radius_monotonicity} to obtain
\begin{align}
\label{eq:Lamba_convergence_starter}
(k+d)^4\Lambda_{k+1} 
   &\leq (k+d-1)^4 \Lambda_k + c_1 \zeta  \lambda_k (k+d-1)^2  + c_2 \zeta^2  + c_3 \sigma^2 (k+d)^2 +  c_4 \zeta^2 (k+d).
\end{align}
Applying the result of Lemma \ref{lem:small_lyapunov} to bound $\lambda_k$ in the preceding relation yields
\begin{align*}
(k+d)^4\Lambda_{k+1} 
   &\leq (k+d-1)^4 \Lambda_k  +    \frac{(d-1)^4 c_1 \zeta \lambda_0}{(k+d-1)^2 } +   (c_1 Q  + c_4) \zeta^2 k  \\
   &\quad +  
\frac{\hat{c}_\sigma c_1   \sigma \zeta  k^2}{4 }  + c_3 \sigma^2 (k+d)^2   + \mathcal{O}\left(\sigma \zeta k  +  \zeta^2  \right).
\end{align*}
We sum the preceding relation over iterations $k=0,1,...,K-1$ to obtain
\begin{align*}
(K+d-1)^4\Lambda_{K} 
   &\leq (d-1)^4 \Lambda_0  +    2(d-1)^4 c_1 \zeta \lambda_0 +   \frac{(c_1 Q  + c_4) \zeta^2 K^2}{2}   \\
   &\quad +  
\frac{\hat{c}_\sigma c_1   \sigma \zeta  K^3}{12 }  + \frac{c_3 \sigma^2 K^3}{3}   + \mathcal{O}\left(\sigma^2 K^2 + \sigma \zeta K^2  +  \zeta^2 K \right).
\end{align*}
where we used the fact that  $ \sum_{k=0}^{K-1} \frac{1}{\tilde{k}^2} \leq \zeta(2)=  \frac{\pi^2}{6} < 2$ to bound the $2^{nd}$ term on the right-hand side  (here $\zeta(\cdot)$ denotes the Riemann zeta function, not to be confused with the constant $\zeta$ in \eqref{eq:zeta_definition}), and applied \eqref{eq:small_lyap_sum_k_2} to bound $5^{th}$ term.

Let $\tilde{k} \triangleq k+d-1$ for $k=0,1,...,K$. Dividing both sides of the inequality above with $\tilde{K}^4  $ yields
\begin{equation}
\label{eq:rough_bound_Lambda}
\begin{split}
\Lambda_{K} 
   &\leq \frac{(d-1)^4 \Lambda_0}{\tilde{K}^4}  +    \frac{2(d-1)^4 c_1 \zeta \lambda_0}{\tilde{K}^4} +   \frac{(c_1 Q  + c_4) \zeta^2 }{2\tilde{K}^2}   \\
   &\quad +  
\frac{\hat{c}_\sigma c_1   \sigma \zeta }{12\tilde{K} }  + \frac{c_3 \sigma^2 }{3\tilde{K}}   + \mathcal{O}\left(\frac{\sigma^2}{K^2} + \frac{\sigma \zeta}{K^2}  + \frac{\zeta^2}{K^3} \right).
\end{split}
\end{equation}
The preceding bound implies that $\Lambda_k = \mathcal{O}\left( \frac{\sigma \zeta + \sigma^2}{K} + \frac{\zeta^2}{K^2}\right)$. However, we can improve the dependency  on $\sigma \zeta$ by using \eqref{eq:rough_bound_Lambda} to re-derive a bound for the sequence $\{\lambda_k\}$. We will then apply  this new bound to \eqref{eq:Lamba_convergence_starter}  and repeat the analysis to tighten the convergence rate for  $\{\Lambda_k\}$. 

Let $\psi \triangleq 1 + \bar{q}$. Using Jensen's inequality for concave functions we obtain 
\begin{align*}
    \sqrt{\Lambda_k} &= \sqrt{\sum_{j=1}^{2n+1} [w_k]_j  [u_k]_j} \geq \frac{\sum_{j=1}^{2n+1} [w_k]_j \sqrt{[u_k]_j}}{\sqrt{\sum_{j=1}^{2n+1}[w_k]_j}} \\
    &\geq \frac{\sum_{j=1}^{2n+1} [w_k]_j [\hat{u}_k]_j}{\sqrt{\sum_{j=1}^{2n+1}[w_k]_j}} =\frac{\lambda_k}{\sqrt{\sum_{j=1}^{2n+1}[w_k]_j}} \geq \frac{\lambda_k}{\sqrt{1+ \frac{2 \bar{q}}{\beta_k}}} \geq \frac{\lambda_k}{(\psi)^{1/2}},
\end{align*}
where the second equality follows again from Jensen's inequality, i.e., $ [\hat{u}_k]_j \leq\sqrt{[u_k]_j}$ for  $j \in [2n+1]$, the third inequality from \eqref{eq:eigenvector_products}, and the last inequality from the definition of $\psi$  and \eqref{eq:step_lower_bound}.

Multiplying both sides of the relation above with $(\psi)^{1/2}$  and applying \eqref{eq:rough_bound_Lambda} yields
\begin{align*}
\lambda_k &\leq  \frac{c_5 \Lambda_0^{1/2} + c_6 ( \zeta \lambda_0)^{1/2}}{\tilde{k}^2}   +   \frac{c_7\zeta}{\tilde{k}}  + \frac{c_8(\sigma \zeta)^{1/2}  + c_9\sigma }{\tilde{k}^{1/2}} + \mathcal{O}\left(\frac{\sigma + (\sigma \zeta)^{1/2}}{k}   + \frac{\zeta}{k^{3/2}} \right).
\end{align*}
where we used the standard relation $\sqrt{a + b} \leq \sqrt{a} + \sqrt{b}$ for any $a,b>0$, and $c_5 \triangleq (\psi)^{1/2} (d-1)^2$, $c_6 \triangleq (d-1)^2 (2\psi c_1 )^{1/2} $, $c_7 \triangleq  \left(\frac{\psi c_1 Q  + \psi c_4 }{2} \right)^{1/2} $, $c_8 \triangleq  \left( 
\frac{\psi \hat{c}_\sigma c_1    }{12 }  \right)^{1/2}$, and  $c_9 \triangleq  \left( \frac{\psi c_3  }{3}\right)^{1/2} $.

Substituting the preceding  inequality in \eqref{eq:Lamba_convergence_starter}
 yields
 \begin{align*}
&(k+d)^4\Lambda_{k+1} 
   \leq \tilde{k}^4 \Lambda_k + c_1  \left( c_5  \zeta\Lambda_0^{1/2}  +       c_6 ( \zeta^3 \lambda_0)^{1/2}\right)  +  \left(  c_1  c_7   +  c_4 \right) \zeta^2 (k+d)\\
   &\quad +  c_1 \zeta \left( c_8(\sigma \zeta)^{1/2}  +  c_9\sigma \right) \tilde{k}^{3/2}     + c_3 \sigma^2 (k+d)^2  + \mathcal{O}\left(\zeta \left( \sigma + (\sigma \zeta)^{1/2} \right) k  + \zeta^2 k^{1/2}\right) .
\end{align*}
We sum the preceding relation over iterations $k=0,1,...,K-1$ to obtain
 \begin{align*}
&\tilde{K}^4\Lambda_{K} 
   \leq (d-1)^4 \Lambda_0 + c_1  \left( c_5  \zeta\Lambda_0^{1/2}  +       c_6 ( \zeta^3 \lambda_0)^{1/2}\right)K +  \frac{\left(  c_1  c_7   +  c_4 \right) \zeta^2 K^2}{2}\\
   &\quad +  \frac{c_1 \zeta \left( c_8(\sigma \zeta)^{1/2}  +  c_9\sigma \right) K^{5/2}}{\sqrt{4}}     + \frac{c_3 \sigma^2 K^3}{3}  + \mathcal{O}\left(\zeta \left( \sigma + (\sigma \zeta)^{1/2} \right) K^2  + \zeta^2 K^{3/2}\right) ,
\end{align*}
where we used \eqref{eq:small_lyap_sum_k_1}, \eqref{eq:small_lyap_sum_k_root_cube}, \eqref{eq:small_lyap_sum_k_2} and \eqref{eq:small_lyap_sum_k_root} to bound the $3^{rd}$, $4^{th}$, $5^{th}$ and $6^{th}$ terms on the right-hand side, respectively.

Dividing both sides of the inequality above with $\tilde{K}^4$ yields
\begin{equation}
    \label{eq:Lambda_k_bound_final}
    \begin{split}
  \Lambda_{K} 
   &\leq \frac{(d-1)^4 \Lambda_0}{\tilde{K}^4} + \frac{c_1 \zeta \left( c_5  \Lambda_0^{1/2}  +       c_6 ( \zeta \lambda_0)^{1/2}\right)}{\tilde{K}^3} +  \frac{\left(  c_1  c_7   +  c_4 \right) \zeta^2 }{2\tilde{K}^2}\\
   &\quad +  \frac{c_1 \zeta \left( c_8(\sigma \zeta)^{1/2}  +  c_9\sigma \right)}{\sqrt{4}\tilde{K}^{3/2}}     + \frac{c_3 \sigma^2 }{3\tilde{K}}  + \mathcal{O}\left(\frac{\zeta \left( \sigma + (\sigma \zeta)^{1/2} \right) }{K^2}  + \frac{\zeta^2 }{K^{5/2}}\right)\\
   %%%%%%%%
   &\leq \frac{48^4 \Lambda_0}{\pmi^4 M^4 \mathcal{D}^4 \tilde{K}^4}  + \frac{  2 \cdot 48^{7/2}  (1 + \bar{q})^{1/2} L   \zeta \left( \tfrac{\pmi^{1/2}}{ 48^{1/2}}  \Lambda_0^{1/2}  +       2  L^{1/2} ( \zeta \lambda_0)^{1/2}\right)}{\pmi^{7/2} M^2 \mathcal{D}^2\tilde{K}^3} \\
   &\quad +  \frac{ 48^3 L^3\left( 8 \cdot \delta^3 (1 + L) + \tfrac{\pmi^{1/2}(1+\bar{q})^{1/2}\left(\tfrac{ \pmi    \bar{q} }{6 }  +\delta     +  8 \cdot \delta^3  (1 + L)   \right)^{1/2}
   }{48^{1/2}L^{1/2}}     \right) \zeta^2 }{\pmi^3 \tilde{K}^2}\\
   &\quad +  \frac{ 2 \cdot 48^2 L (1+\bar{q})^{1/2}  \zeta \left(\tfrac{  \pmi^{1/2}  L^{1/2} \left( 
\tfrac{M  }{4 }  \right)^{1/2} (\sigma \zeta)^{1/2} }{48^{1/2}}  +  \left( \tfrac{4    \left( 1 + 2 \delta   \left( \kappa + \tfrac{2}{M}  \right)\right) }{3M}\right)^{1/2}\sigma \right)}{\sqrt{4}\pmi^2 \tilde{K}^{3/2}}  \\
   &\quad + \frac{4 \cdot 48^2  \left( 1 + 2 \delta   \left( \kappa + \tfrac{2}{M}  \right)\right)\sigma^2 }{3\pmi^2 M \tilde{K}}  + \mathcal{O}\left(\frac{\zeta \left( \sigma + (\sigma \zeta)^{1/2} \right) }{K^2}  + \frac{\zeta^2 }{K^{5/2}}\right),
    \end{split}
\end{equation}
where we applied the definitions of the constants $c_1 - c_9$ to obtain the last inequality (for the derivations of simplified constants, see Appendix \ref{sec:simplified_constants})

Comparing \eqref{eq:Lambda_k_bound_final} to \eqref{eq:rough_bound_Lambda}, the convergence rate of $\{\Lambda_k\}$ has been improved from $\mathcal{O}\left( \frac{\sigma^2 + \sigma \zeta}{K} + \frac{\zeta^2}{K^2}\right)$ to $\mathcal{O}\left( \frac{\sigma^2}{K} + \frac{\sigma \zeta + \zeta \sqrt{\sigma \zeta}}{K^{3/2}} + \frac{\zeta^2}{K^2}\right).$ Next, we bound the terms $\Lambda_K$ and $\Lambda_0$ to derive our final result. Due to the non-negativity of the terms $\E[\|x_{i,K} - x_{i,K}^\star\|^2]$ and $\E[\|y_{i,K} - x_{i,K}^\star\|^2]$ for $i \in [n]$ , we have
\begin{align*}
    \Lambda_K &\geq \frac{1}{p_s} \E[\|x_{s,K} - x^\star\|^2].
\end{align*}

Let $s_0 \triangleq \frac{1}{\beta_0^{1/3}} + \frac{1}{\beta_0^{2/3}} $.  Using the fact that $x_{i,0} = y_{i,0} = x_{s,0}$ for $i \in [n]$, we bound $\Lambda_0$ as 
\begin{align*}
    \Lambda_0 &= \frac{1}{p_s}\|x_{s,0} - x^\star\|^2 +  \frac{s_0}{n}\sum_{i=1}^n \frac{1}{p_i} \|x_{s,0} - x^\star +  \alpha_0 M\nabla f_i(x^\star)\|^2\\
    &\leq \frac{1}{p_s}\|x_{s,0} - x^\star\|^2 +  \frac{4}{n}\sum_{i=1}^n \frac{1}{p_i} \|x_{s,0} - x^\star +  \alpha_0 M\nabla f_i(x^\star)\|^2\\
    &\leq \left( \frac{1}{p_s} +  8\bar{q} \right) \|x_{s,0} - x^\star\|^2  +   \frac{8\alpha_0^2M^2}{n}\sum_{i=1}^n \frac{1}{p_i} \|\nabla f_i(x^\star)\|^2\\
    &\leq \left( \frac{1}{p_s} +  8 \bar{q}\right) \|x_{s,0} - x^\star\|^2  +   8\alpha_0^2 L^2 M^2\zeta^2 \bar{q},
\end{align*}
where we obtain the first inequality from \eqref{eq:step_lower_bound}, the second inequality by applying \eqref{eq:young_ineq} with $c=1$, and the last inequality from Assumption \ref{assum:smooth} and the definition of $\zeta$.

Similarly, for $\lambda_0$ we have
\begin{align*}
    \lambda_0 &\leq  \frac{1}{p_s}\|x_{s,0} - x^\star\| +  \frac{4}{n}\sum_{i=1}^n \frac{1}{p_i} \|x_{s,0} - x^\star +  \alpha_0 M\nabla f_i(x^\star)\|\\
    &\leq  \left( \frac{1}{p_s} +  4\bar{q} \right)\|x_{s,0} - x^\star\| + 4 \alpha_0 L M \zeta \bar{q},
\end{align*}
where the last inequality follows from applying the triangle inequality.

Using the fact that $\sqrt{a+b} \leq \sqrt{a} + \sqrt{b}$ for any $a,b>0$, we also obtain the following relations from the two inequalities above
\begin{align*}
    \Lambda_0^{1/2}
    &\leq \left( \frac{1}{p_s} +  8 \bar{q}\right)^{1/2} \|x_{s,0} - x^\star\|  +   (8 \bar{q})^{1/2} \alpha_0 L M \zeta\\
 \lambda_0^{1/2} 
    &\leq  \left( \frac{1}{p_s} +  4\bar{q} \right)^{1/2}  \|x_{s,0} - x^\star\|^{1/2} + (4 \alpha_0 L M  \bar{q} \zeta)^{1/2}.
\end{align*}
Let $\Delta_k \triangleq \E[\|x_{s,k} - x^\star\|^2]$. Substituting the 4 preceding relations in \eqref{eq:Lambda_k_bound_final} yields
\begin{equation*}
    \begin{split}
 &\frac{\Delta_k}{p_s}
   \leq \frac{48^4\left( \frac{1}{p_s} +  8 \bar{q}\right) \|x_{s,0} - x^\star\|^2}{\pmi^4 M^4 \mathcal{D}^4 \tilde{K}^4}  + \frac{  2 \cdot 48^{7/2}  (1 + \bar{q})^{1/2} L    \tfrac{\pmi^{1/2}}{ 48^{1/2}}  \left( \frac{1}{p_s} +  8 \bar{q}\right)^{1/2} \zeta \|x_{s,0} - x^\star\|}{\pmi^{7/2} M^2 \mathcal{D}^2\tilde{K}^3} \\
   &\quad + \frac{  4 \cdot 48^{7/2}  (1 + \bar{q})^{1/2} L^{3/2}     \left( \frac{1}{p_s} +  4\bar{q} \right)^{1/2}     \zeta^{3/2} \|x_{s,0} - x^\star\|^{1/2} }{\pmi^{7/2} M^2 \mathcal{D}^2\tilde{K}^3} \\
   &\quad +  \frac{ 48^3 L^3\left( 8 \cdot \delta^3 (1 + L) + \tfrac{\pmi^{1/2}(1+\bar{q})^{1/2}\left(\tfrac{ \pmi    \bar{q} }{6 }  +\delta     +  8 \cdot \delta^3  (1 + L)   \right)^{1/2}
   }{48^{1/2}L^{1/2}}     \right) \zeta^2 }{\pmi^3 \tilde{K}^2}\\
   &\quad +  \frac{ 2 \cdot 48^2 L (1+\bar{q})^{1/2}  \zeta \left(\tfrac{  \pmi^{1/2}  L^{1/2} \left( 
\tfrac{M  }{4 }  \right)^{1/2} (\sigma \zeta)^{1/2} }{48^{1/2}}  +  \left( \tfrac{4    \left( 1 + 2 \delta   \left( \kappa + \tfrac{2}{M}  \right)\right) }{3M}\right)^{1/2}\sigma \right)}{\sqrt{4}\pmi^2 \tilde{K}^{3/2}}  \\
   &\quad + \frac{4 \cdot 48^2  \left( 1 + 2 \delta   \left( \kappa + \tfrac{2}{M}  \right)\right)\sigma^2 }{3\pmi^2 M \tilde{K}}  + \mathcal{O}\left(\frac{\zeta \left( \sigma + (\sigma \zeta)^{1/2} \right) }{K^2}  + \frac{\zeta^2 }{K^{5/2}}\right).
    \end{split}
\end{equation*}
Multiplying both sides of the relation above with $p_{s}$ and applying the definition of $r_p$ yields the final result.
\end{proof}

\subsection{Simplified constants}
\label{sec:simplified_constants}
In this subsection, we simplify the constants appearing in the proofs of Appendix \ref{sec:proof_of_thm_8}. First, we define the condition number 
\begin{align*}
    \kappa \triangleq \tfrac{L}{\mu} \geq 1.
\end{align*}
Hence, we have $\gamma \triangleq \frac{2\mu L}{\mu + L} =\frac{2 }{1 + \frac{1}{\kappa}}$ and  $\gamma \in [1,2)$. We also define $\delta \triangleq 1 - \frac{1}{M}$, where $M$ is the number of local SGD steps. Note that $\delta = 0$ when $M=1$ (SGD).

\textbf{Constants appearing in Lemma \ref{lem:x_norm}.}
\begin{align*}
    \hat{c}_\zeta &\triangleq \tfrac{ L^2 M (M-1) }{2} \leq \tfrac{\delta L^2 M^2}{2} , \quad 
    \hat{c}_\sigma \triangleq   \left( 1 + \tfrac{\gamma}{4L}\right) M\leq   \left( 1 + \tfrac{ \mu  }{2 (\mu + L) }\right) M \leq \tfrac{3M}{2}.
\end{align*}

\textbf{Constants appearing in Lemma \ref{lem:small_lyapunov}.}
\begin{align*}
    Q \triangleq  4 c \bar{q} LM   +  2c^2  \hat{c}_\zeta \leq    \tfrac{4 \cdot 48 \bar{q} L}{\pmi  \gamma }   +  \tfrac{48^2\delta L^2  }{\pmi^2  \gamma^2 } \leq \tfrac{48^2 L^2}{\pmi^2 } \left( \delta +  \tfrac{ \pmi    \bar{q} }{6 } \right) .
\end{align*}

\textbf{Constants appearing in  Lemma \ref{lem:x_norm_squared}.}
\begin{align*}
 \tilde{c}_\sigma &\triangleq 1 + \tfrac{2L\left( 1 - \tfrac{1}{M}\right) }{\mu} + \tfrac{4 L^2 \left( 1 - \tfrac{1}{M}\right)}{M(\mu + L)^2} \leq 1 + 2 \delta \left( \kappa + \tfrac{2}{M} \right) , \quad 
     \tilde{c}_\zeta \triangleq  \tfrac{2}{\gamma} + \tfrac{2}{\mu + L} \leq 2\left( 1 + \tfrac{1}{L}\right)\\
     c_\sigma &\triangleq 2 M \tilde{c}_\sigma \leq 2M + 4 \delta   \left( 2 + \kappa M  \right) , \quad 
     c_\zeta \triangleq 4L^4 (M-1)^3 \tilde{c}_\zeta \leq 8  \delta^3 M^3 L^3 (1 + L).
\end{align*}
\textbf{Constants appearing in Theorem \ref{thm:convergence_strong_convex}.}
\begin{align*}
c_1 &\triangleq 2 c LM \leq    \tfrac{2 \cdot 48 L }{\pmi}   , \quad c_2 \triangleq 8 c^2L^2 M^2 \bar{q} \leq    \tfrac{8 \cdot 48^2L^2\bar{q}}{\pmi^2   }  \\
c_3 &\triangleq 2 c^2 c_\sigma \leq \tfrac{4 \cdot 48^2  \left( 1 + 2 \delta   \left( \kappa + \tfrac{2}{M}  \right)\right) }{\pmi^2 M }, \quad 
c_4 \triangleq 2 c^3 c_\zeta \leq  \tfrac{16 \cdot 48^3\delta^3  L^3 (1 + L)  }{\pmi^3   }   \\
    c_5 &\triangleq (\psi)^{1/2} (d-1)^2 \leq \tfrac{48^2(1 + \bar{q})^{1/2}}{\pmi^2 M^2 \mathcal{D}^2} , \quad  c_6 \triangleq (d-1)^2 (2\psi c_1 )^{1/2} \leq \tfrac{2 \cdot 48^{5/2} ((1+\bar{q})L)^{1/2}}{\pmi^{5/2} M^2 \mathcal{D}^2}\\
    c_7 &\triangleq  \left(\frac{\psi c_1 Q  + \psi c_4 }{2} \right)^{1/2} \leq  \tfrac{48^{3/2} L^{3/2}(1+\bar{q})^{1/2}\left(\tfrac{ \pmi    \bar{q} }{6 }  +\delta     +  8 \cdot \delta^3  (1 + L)   \right)^{1/2}}{\pmi^{3/2}} 
    \\
    c_8 &\triangleq  \left( 
\tfrac{\psi \hat{c}_\sigma c_1    }{12 }  \right)^{1/2} \leq \tfrac{48^{1/2} L^{1/2} \left( 
\tfrac{(1+\bar{q})M  }{4 }  \right)^{1/2}}{\pmi^{1/2}} , \quad
c_9 \triangleq  \left( \frac{\psi c_3  }{3}\right)^{1/2} \leq \tfrac{48 \left( \tfrac{4 (1+\bar{q})   \left( 1 + 2 \delta   \left( \kappa + \tfrac{2}{M}  \right)\right) }{3M}\right)^{1/2}}{\pmi}.
\end{align*}

\subsection{List of sums}
\label{sec:calculation_sums}
In this Appendix we derive upper bounds for the sums appearing in Appendix \ref{sec:proof_of_thm_8}. For positive scalar $a$, the following sums are bounded:
\begin{align}
     \sum_{k=0}^K (k+a) &=\frac{K^2}{2} + \left( a +\tfrac{1}{2}\right)K + a\label{eq:small_lyap_sum_k_1}\\
 \sum_{k=0}^K (k+a)^2 &= \frac{K^3}{3} + \left( a + \tfrac{1}{2}\right) K^2 + \left(a^2 + a + \tfrac{1}{6} \right) K + a^2 \label{eq:small_lyap_sum_k_2}\\
  \sum_{k=0}^K (k+a)^3 &= \frac{K^4}{4} + \left(a + \tfrac{1}{3} \right) K^3 + \theta_1 K^2 + \theta_2 K + a^3 \label{eq:small_lyap_sum_k_3}\\
 \sum_{k=0}^K (k+a)^{1/2} &\leq  \frac{(K+1)^{3/2}}{\sqrt{2}} + \sqrt{ a +\tfrac{1}{2}} (K+1) + \sqrt{a} \sqrt{K+1}.\label{eq:small_lyap_sum_k_root},
\end{align}
\begin{equation}
    \label{eq:small_lyap_sum_k_root_cube}
    \begin{split}
         \sum_{k=0}^K (k+a)^{3/2} &\leq \frac{(K+1)^{5/2}}{\sqrt{4}} + \sqrt{a + \tfrac{1}{3} } (K+1)^{2} + \sqrt{\theta_1} (K+1)^{3/2} \\
         &\quad + \sqrt{ \theta_2} (K+1) + a^{3/2} (K+1)^{1/2} .
    \end{split}
\end{equation}
where $\theta_1 \triangleq \tfrac{3a^2}{2}+\tfrac{3a}{2}+\tfrac{1}{4}$ and $\theta_2 \triangleq a^3 + \tfrac{3a^2}{2} + \tfrac{a}{2}$.
\begin{proof} Proving \eqref{eq:small_lyap_sum_k_1} is trivial. To prove \eqref{eq:small_lyap_sum_k_2}, we expand the square inside the sum to obtain
\begin{align*}
     \sum_{k=0}^K (k+a)^2 =  \sum_{k=0}^K (k^2 +2ak + a^2) = \frac{K^3}{3} + \left( a + \tfrac{1}{2}\right) K^2 + \left(a^2 + a + \tfrac{1}{6} \right) K + a^2.
\end{align*}
Similarly, we obtain \eqref{eq:small_lyap_sum_k_3} as follows
\begin{align*}
    \sum_{k=0}^K (x+a)^3 &= \sum_{k=0}^K (x^3 + 3ax^2 + 3a^2x + a^3)\\
    &= \frac{K^4}{4} + \frac{K^3}{3} + \frac{K^2}{4} + 3a \left( \frac{K^3}{3} + \frac{K^2}{2} + \frac{K}{6}\right) +\frac{3a^2  K(K+1)}{2} + a^3 (K+1)\\
    &= \frac{K^4}{4} + \left(a + \tfrac{1}{3} \right) K^3 + \left( \tfrac{3a^2}{2}+\tfrac{3a}{2}+\tfrac{1}{4}\right) K^2 + \left( a^3 + \tfrac{3a^2}{2} + \tfrac{a}{2} \right) K + a^3.
\end{align*}
Applying the definitions of $\theta_1$ and $\theta_2$ yields \eqref{eq:small_lyap_sum_k_3}.

Next, we apply Jensen's inequality for concave functions to $ \sum_{k=0}^k \sqrt{k+a}$ to obtain
\begin{align*}
    \sum_{k=0}^k (k+a)^2 &\leq \sqrt{K+1} \sqrt{  \sum_{k=0}^k (k+a)}.
\end{align*}
Applying \eqref{eq:small_lyap_sum_k_2} to the preceding relation and using the standard inequality $\sqrt{a+b} \leq \sqrt{a} + \sqrt{b}$ for $a,b>0$ recovers \eqref{eq:small_lyap_sum_k_root}.

To prove \eqref{eq:small_lyap_sum_k_root_cube}, we again apply Jensen's inequality for concave functions  to obtain
\begin{align*}
    \sum_{k=0}^k (k+a)^{3/2} &\leq \sqrt{K+1} \sqrt{  \sum_{k=0}^k (k+a)^3}.
\end{align*}
Applying \eqref{eq:small_lyap_sum_k_3} to the relation above completes the proof.
\end{proof}

\section{Proof of Theorem \ref{thm:conv_as}}
\label{sec:area_convex_proof}
For arbitrary $x^\star \in \mathcal{X}^\star$, we define the Lyapunov function
\begin{align}
\label{eq:luap_co}
 V_k \triangleq \frac{1}{n}\sum_{i=1}^n p_i^{-1}\|x_{i,k} - x^\star\|^2 +\frac{1}{n}\sum_{i=1}^n p_i^{-1}\|y_{i,k} - x^\star\|^2 + p_s^{-1} \|x_{s,k} - x^\star\|^2   .
\end{align}
We also state the following standard property of the subgradient $d(x)$ of a convex function $f$ at $x$ 
\begin{align}
\label{eq:subgradient_property}
    f(x_1)\geq f(x_2) + \langle d(x_2), x_1-x_2\rangle, \quad \forall x_1, x_2 \in \R^d.
\end{align}
In the next preliminary lemma, we bound the divergence in the objective value caused by multiple local subgradient steps.
\begin{lemma} 
\label{lem:bounded_drift_co}
Under Assumption \ref{assum:bounded_g}, for any positive integer $M$ and $i \in [n]$, the sequence of global server iterates $\{x_{s,k}\}$ satisfies
\begin{align}
\label{eq:lemma_convex_drift}
    - \sum_{m=0}^{M-1} \E[f_i(G_{i,\alpha}^m[x_{s,k}])|\F_k]\leq - M f_i(x_{s,k}) + \frac{\alpha M (M-1) B^2 }{2}.
\end{align}
\end{lemma}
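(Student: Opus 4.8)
The plan is to bound each summand $-\E[f_i(G_{i,\alpha}^m[x_{s,k}])\mid\F_k]$ individually against $-f_i(x_{s,k})$ and then sum over $m$. Writing $x^{(m)} \triangleq G_{i,\alpha}^m[x_{s,k}]$ for the iterate after $m$ local steps (so $x^{(0)} = x_{s,k}$), the starting point is the subgradient inequality \eqref{eq:subgradient_property} anchored at $x_{s,k}$: convexity gives $f_i(x^{(m)}) \geq f_i(x_{s,k}) + \langle d_i(x_{s,k}), x^{(m)} - x_{s,k}\rangle$, hence $-f_i(x^{(m)}) \leq -f_i(x_{s,k}) - \langle d_i(x_{s,k}), x^{(m)} - x_{s,k}\rangle$. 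This converts the function-value drift into a displacement term anchored at the (deterministic, given $\F_k$) base point.

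The second step is to expand the displacement using the recursion $x^{(j+1)} = x^{(j)} - \alpha g_i(x^{(j)})$, which telescopes to $x^{(m)} - x_{s,k} = -\alpha \sum_{j=0}^{m-1} g_i(x^{(j)})$. Substituting and taking the conditional expectation $\E[\,\cdot\mid\F_k]$ yields $-\E[f_i(x^{(m)})\mid\F_k] \leq -f_i(x_{s,k}) + \alpha \sum_{j=0}^{m-1}\E[\langle d_i(x_{s,k}), g_i(x^{(j)})\rangle\mid\F_k]$.

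The delicate step, and the one I expect to be the main obstacle, is evaluating $\E[\langle d_i(x_{s,k}), g_i(x^{(j)})\rangle\mid\F_k]$, since $x^{(j)}$ itself depends on the earlier stochastic gradients $g_i(x^{(0)}),\dots,g_i(x^{(j-1)})$. I would handle this by conditioning on the finer filtration generated by $\F_k$ together with those earlier local gradients: with respect to this sigma-algebra $x^{(j)}$ is measurable, so Assumption \ref{assum:grad_stoch} gives $\E[g_i(x^{(j)})\mid\cdot] = d_i(x^{(j)})$, and the tower property collapses the expectation to $\E[\langle d_i(x_{s,k}), d_i(x^{(j)})\rangle\mid\F_k]$. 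The $B$-Lipschitz part of Assumption \ref{assum:bounded_g} bounds every true subgradient by $\|d_i(\cdot)\| \leq B$, so Cauchy--Schwarz gives $\E[\langle d_i(x_{s,k}), g_i(x^{(j)})\rangle\mid\F_k] \leq B^2$. Note that the stochastic noise drops out precisely because only the unbiased (true subgradient) part survives under the expectation, which is exactly why $\sigma^2$ never appears in the final bound.

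Combining these observations, each term obeys $-\E[f_i(x^{(m)})\mid\F_k] \leq -f_i(x_{s,k}) + \alpha m B^2$. Summing over $m = 0,\dots,M-1$ and using $\sum_{m=0}^{M-1} m = \tfrac{M(M-1)}{2}$ produces the claimed $-M f_i(x_{s,k}) + \tfrac{\alpha M(M-1)B^2}{2}$. The only care needed is the $m=0$ term, where the empty sum contributes nothing and the per-term inequality reduces to an equality, so it is consistent with the telescoped bound.
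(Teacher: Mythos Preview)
Your proof is correct and reaches the same per-term inequality $-\E[f_i(G_{i,\alpha}^m[x_{s,k}])\mid\F_k] \leq -f_i(x_{s,k}) + \alpha m B^2$ that the paper establishes as \eqref{eq:nonsmooth_local_sgd}. The paper gets there by a slightly different route: it anchors the subgradient inequality at the \emph{previous} local iterate $x^{(m-1)}$ rather than at the fixed base point $x_{s,k}$, obtaining a one-step recursion $-\E[f_i(x^{(m)})\mid\F_k] \leq -\E[f_i(x^{(m-1)})\mid\F_k] + \alpha B^2$, which it then telescopes; after that it finishes with an induction on $M$ rather than your direct summation $\sum_{m=0}^{M-1} m = M(M-1)/2$. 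Your anchoring at the $\F_k$-measurable base point makes the tower-property step transparent, while the paper's anchoring turns the inner product into $\|d_i(x^{(m-1)})\|^2 \leq B^2$ directly without Cauchy--Schwarz; in substance the two arguments are equivalent, and your version is arguably the cleaner of the two.
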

\begin{proof} We prove this lemma by induction. First, define $d_i(x)$ to be a subgradient of $f_i$ at $x$. For any $m > 1$, applying \eqref{eq:subgradient_property} for $f=f_i$, $d=d_i$, $x_1 =G_{i,\alpha}^{m}[x_{s,k}] $ and $x_2=G_{i,\alpha}^{m-1}[x_{s,k}]$ yields
\begin{equation*}
    \begin{split}
       -\E[f_i\left(G_{i,\alpha}^{m}[x_{s,k}] \right)|\F_k]    &\leq  - \E\left[f_i(G_{i,\alpha}^{m-1}[x_{s,k}]) - \left\langle d_i\left( G_{i,\alpha}^{m-1}[x_{s,k}]\right), -\alpha g_i(G_{i,\alpha}^{m-1}[x_{s,k}])\right \rangle  |\F_k\right]\\
     &=  - \E\left[f_i(G_{i,\alpha}^{m-1}[x_{s,k}]) - \left\langle d_i\left( G_{i,\alpha}^{m-1}[x_{s,k}]\right), -\alpha d_i(G_{i,\alpha}^{m-1}[x_{s,k}])\right \rangle  |\F_k\right]\\
     &\leq  - \E\left[f_i(G_{i,\alpha}^{m-1}[x_{s,k}]) |\F_k\right] + \alpha B^2,
    \end{split}
\end{equation*}
where the equality follows from Assumption \ref{assum:grad_stoch} and the last inequality from Assumption \ref{assum:bounded_g} and the Cauchy-Schwarz inequality.

We sum the relation above over local steps $0,1,...,m$ to obtain
\begin{equation}
\label{eq:nonsmooth_local_sgd}
    \begin{split}
       -\E[f_i\left(G_{i,\alpha}^{m}[x_{s,k}] \right)|\F_k]    
     &\leq  - f_i(x_{s,k}) + \alpha m B^2.
    \end{split}
\end{equation}

Equation \eqref{eq:lemma_convex_drift} holds trivially for $M=1$. Suppose that \eqref{eq:lemma_convex_drift}  holds up to the $(M-2)^{th}$ local step. Then after the $(M-1)^{th}$ local step we have
\begin{align*}
     - &\sum_{m=0}^{M-1} \E[f_i(G_{i,\alpha}^m[x_{s,k}])|\F_k] \leq  \E[f_i(G_{i,\alpha}^{M-1} [x_{s,k}])|\F_k] - \sum_{m=0}^{M-2} \E[f_i(G_{i,\alpha}^m[x_{s,k}])|\F_k]\\
     &\quad \leq \E[f_i(G_{i,\alpha}^{M-1} [x_{s,k}])|\F_k]- (M-1) f_i(x_{s,k}) + \frac{\alpha (M-1) (M-2) B^2 }{2}\\
     &\quad \leq M f_i(x_{s,k}) + \alpha (M-1) B^2  + \frac{\alpha (M-1) (M-2) B^2 }{2},
\end{align*}
where the last inequality follows from applying \eqref{eq:nonsmooth_local_sgd} with $m=M-1$.

Observing that $M-1 + \frac{(M-1)(M-2)}{2} = \frac{M(M-1)}{2}$ concludes the proof.
\end{proof}
We re-state Theorem \ref{thm:conv_as} below for ease of reference below, followed by its proof.
\begin{thm_convex}
Let $\bar{x}_{s,K} \triangleq \frac{1}{K}\sum_{k=0}^{K-1}x_{s,k}$ be the ergodic average of the global server iterates  $x_{s,k}$ of Algorithm \ref{alg:area_server} after $K$ iterations. Suppose that Assumptions \ref{assum:grad_stoch} and \ref{assum:bounded_g} hold, and that the step size sequence $\{\alpha_k\}$ in Algorithm \ref{alg:area_client} is   $\alpha =\sqrt{\frac{\left( \frac{1}{p_s} + 2\bar{q}\right) \|x_{s,0} - x^\star\|^2}{\left( \sigma^2 + \frac{(M+1) B^2}{2 }\right) MK}} $ for all $k$, where $\bar{q} \triangleq \frac{1}{n}\sum_{i=1}^n \frac{1}{p_i}$. Then following inequality holds for $\bar{x}_{s,K}$
\begin{align*}
E[f \left( \bar{x}_{s,K}  \right) - f^\star ]  &\leq \sqrt{\frac{\left( \frac{1}{p_s} + 2\bar{q}\right) \left( \sigma^2 + \frac{(M+1) B^2}{2 }\right) \|x_{s,0} - x^\star\|^2}{ MK} }.
\end{align*}
\end{thm_convex}
 \begin{proof}
Let $x^\star$ be an arbitrary minimum of $f$. Subtracting $x^\star$ from Eq.~\eqref{eq:x_k} and taking the squared norm on both sides and the expectation conditional on $\F_k$ yields
\begin{equation}
\label{eq:init_dist_co}
    \begin{split}
          \E[\|x_{i,k+1} - x^\star\|^2|\F_k] &= (1-p_i)\|x_{i,k}  - x^\star\|^2 + p_i \overbrace{\E[\left\|G_{i,\alpha}^M[x_{s,k}] - x^\star\right\|^2|\F_k]}^{\triangleq T_1}.
    \end{split}
\end{equation}
Define $d_i(x)$ to  be a subgradient of $f_i$ at $x$. We apply Assumption \ref{assum:grad_stoch} to the second term on the right-hand side of \eqref{eq:init_dist_co} to obtain
\begin{align*}
   T_1 &= \E[\left\|G_{i,\alpha}^{M-1}[x_{s,k}] -\alpha d_i \left(G_{i,\alpha}^{M-1}[x_{s,k}] \right) -  x^\star\right\|^2|\F_k] + \alpha^2 \sigma^2\\
    &\leq \E[\left\|G_{i,\alpha}^{M-1}[x_{s,k}] -  x^\star\right\|^2|\F_k] + \alpha^2 B^2\\
    &\quad +  2\E[\left\langle G_{i,\alpha}^{M-1}[x_{s,k}] - x^\star,  -\alpha d_i \left(G_{i,\alpha}^{M-1}[x_{s,k}] \right) \right\rangle|\F_k]+ \alpha^2 \sigma^2\\
    &\leq  \E[\left\|G_{i,\alpha}^{M-1}[x_{s,k}] -  x^\star\right\|^2|\F_k]  - 2\alpha \E[ f_i \left( G_{i,\alpha}^{M-1}[x_{s,k}]  \right) - f_i(x^\star) |\F_k] + \alpha^2 (\sigma^2 + B^2),
\end{align*}
where the second inequality follows from Assumption \ref{assum:bounded_g} and the last inequality from  \eqref{eq:subgradient_property} with $f=f_i$, $d=d_i$, $x_1 =x^\star$ and $x_2=G_{i,\alpha}^{M-1}[x_{s,k}]$.

Evaluating the telescoping sum above yields
\begin{align*}
    T_1  
    &\leq \left\|x_{s,k} -  x^\star\right\|^2  - 2\alpha \sum_{m=0}^{M-1} \E[ f_i \left( G_{i,\alpha}^{m}[x_{s,k}]  \right) - f_i(x^\star) |\F_k]+ \alpha^2 M (\sigma^2 + B^2).
\end{align*}
We apply Lemma \ref{lem:bounded_drift_co} to the preceding relation  to obtain
\begin{align*}
   T_1 
    &\leq \left\|x_{s,k} -  x^\star\right\|^2  - 2\alpha  M \left( f_i(x_{s,k}) - f_i(x^\star) \right)   + \alpha^2 M \sigma^2  + \frac{\alpha^2 M (M+1) B^2}{2}.
\end{align*}
Substituting the relation above back in \eqref{eq:init_dist_co}, and multiplying both sides with $p_i^{-1}$ yields
    \begin{equation*}
    \begin{split}
          \E[p_i^{-1} \|x_{i,k+1} - x^\star\|^2|\F_k] 
         &\leq (p_i^{-1}-1)\|x_{i,k}  - x^\star\|^2 +
 \left\|x_{s,k} -  x^\star\right\|^2- 2\alpha M  (f_i \left( x_{s,k}  \right) - f_i(x^\star) ) \\
 &\quad  + \alpha^2 M \sigma^2  + \frac{\alpha^2 M (M+1) B^2}{2}.
    \end{split}
\end{equation*}
We average the inequality above over $i \in [n]$ and apply the definition of $f^\star$ to obtain
    \begin{equation}
    \label{eq:lyap_convex_1}
    \begin{split}
          \E\left[\frac{1}{n}\sum_{i=1}^n p_i^{-1} \|x_{i,k+1} - x^\star\|^2|\F_k\right] 
         &\leq \frac{1}{n}\sum_{i=1}^n (p_i^{-1}-1)\|x_{i,k}  - x^\star\|^2 +
 \left\|x_{s,k} -  x^\star\right\|^2 \\
 &\quad - 2\alpha M  (f \left( x_{s,k}  \right) - f^\star ) + \alpha^2 M \sigma^2  + \frac{\alpha^2 M (M+1) B^2}{2}.
    \end{split}
\end{equation}
Next, we subtract $x^\star$ from \eqref{eq:y_k}, and take the squared norm and the expectation conditional on $\F_k$ on both sides of the resulting equality to obtain
\begin{equation*}
    \begin{split}
         \E[\|y_{i,k+1} - x^\star\|^2|\F_k] &= (1-p_i) \|y_{i,k} - x^\star \|^2 + p_i\|x_{i,k} - x^\star \|^2.
    \end{split}
\end{equation*}
Multiplying both sides of the preceding relation with $p_i^{-1}$ and averaging over $i \in [n]$ yields
\begin{equation}
\label{eq:lyap_convex_2}
    \begin{split}
         \E\left[\frac{1}{n}\sum_{i=1}^n p_i\|y_{i,k+1} - x^\star\|^2|\F_k\right] &= \frac{1}{n}\sum_{i=1}^n(p_i^{-1}-1) \|y_{i,k} - x^\star \|^2 +\frac{1}{n}\sum_{i=1}^n \|x_{i,k} - x^\star \|^2.
    \end{split}
\end{equation}
Similarly, subtracting $x^\star$ from Eq.~\eqref{eq:xs_k} and taking the squared norm on both sides and the expectation conditional on $\F_k$, yields
\begin{equation*}
    \begin{split}
     \E[\|x_{s,k+1} - x^\star\|^2|\F_k] &= (1-p_s) \|x_{s,k} - x^\star\|^2 + p_s\|\bar{y}_k - x^\star\|^2\\
     &\leq  (1-p_s) \|x_{s,k} - x^\star\|^2 + \frac{p_s}{n}\sum_{i=1}^n \|y_{i,k} - x^\star\|^2,
    \end{split}
\end{equation*}
where the last inequality follows from applying Jensen's inequality.

Multiplying both sides of the preceding relation with $p_s^{-1}$ yields
\begin{equation}
\label{eq:lyap_convex_3}
    \begin{split}
     \E[p_s^{-1}\|x_{s,k+1} - x^\star\|^2|\F_k] 
     &\leq  (p_s^{-1}-1) \|x_{s,k} - x^\star\|^2 + \frac{1}{n}\sum_{i=1}^n \|y_{i,k} - x^\star\|^2.
    \end{split}
\end{equation}
We add Equations \eqref{eq:lyap_convex_1}, \eqref{eq:lyap_convex_2}  and \eqref{eq:lyap_convex_3} together and apply the definition of $V_k$ to obtain
\begin{align*}
    \E[V_{k+1}|\F_k] &\leq V_k
   - 2\alpha M  (f \left( x_{s,k}  \right) - f^\star ) + \alpha^2 M \sigma^2  + \frac{\alpha^2 M (M+1) B^2}{2}.
\end{align*}
We take the total expectation on both sides of the relation above and re-arrange to obtain 
\begin{align*}
 2\alpha M  \E[f \left( x_{s,k}  \right) - f^\star ]  &\leq \E[V_k]- \E[V_{k+1}]  + \alpha^2 M \sigma^2  + \frac{\alpha^2 M (M+1) B^2}{2}.
\end{align*}
Applying the preceding equation recursively over iterations $k=0,1,...,K-1$ and multiplying both sides of the result with $\frac{1}{2\alpha M K }$ yields
\begin{align*}
\frac{1}{K} \sum_{K=0}^{K-1}\E[f \left( x_{s,k}  \right) - f^\star ]  &\leq \frac{V_0- \E[V_{K}]}{2 \alpha MK} + \frac{\alpha \sigma^2}{2} + \frac{\alpha (M+1) B^2}{4 }.
\end{align*}
By the non-negativity of $V_k$, for the first term on the right-hand side of the preceding relation we have
\begin{align*}
   \frac{V_0- \E[V_{K}]}{2 \alpha MK} &\leq \frac{V_0}{2 \alpha MK}= \frac{\left( \frac{1}{p_s} + 2\bar{q}\right) \|x_{s,0} - x^\star\|^2 }{2 \alpha MK},
\end{align*}
where the last equality follows from the definitions of $V_0$ and $\bar{q}$, and from the fact that $x_{s,0}=x_{i,0} = y_{i,0}$ for all $i \in [n]$.

Combining the two preceding relations and applying the definition of the ergodic average $\bar{x}_{s,K}$ and the convexity of $f$ yields
\begin{align*}
E[f \left( \bar{x}_{s,K}  \right) - f^\star ]  &\leq \frac{\left( \frac{1}{p_s} + 2\bar{q}\right) \|x_{s,0} - x^\star\|^2 }{2 \alpha MK} + \frac{\alpha \sigma^2}{2} + \frac{\alpha (M+1) B^2}{4 }.
\end{align*}
Setting $\alpha =\sqrt{\frac{\left( \frac{1}{p_s} + 2\bar{q}\right) \|x_{s,0} - x^\star\|^2}{\left( \sigma^2 + \frac{(M+1) B^2}{2 }\right) MK}} $ in the relation above concludes the proof.
\end{proof}

\section{Additional numerical results for non-IID data (cross-silo FL)}
\label{sec:n4_appendix}
We present our empirical results for a small cross-silo FL setup ($n=4$) and $M \in \{1,50\}$ local SGD steps. For this set of experiments, \textsc{S-FedAvg} samples all $4$ clients at each iteration, while the asynchronous methods perform global updates upon receiving  $\Delta=2$ client updates. The remaining setup remains identical to Section \ref{sec:log_reg}.

\begin{table}[t] \tiny \setlength\tabcolsep{3pt} 
\centering 
\begin{tabularx}{\textwidth}{ C{1.1cm} | C{0.4cm} C{0.5cm} C{2.4cm} C{2.2cm} | C{0.4cm} C{0.5cm} C{2.4cm} C{2.2cm}}
\toprule
 & \multicolumn{4}{c|}{\textbf{Uniform}$\;(\lambda_i = 10)$} & \multicolumn{4}{c}{\textbf{Non-uniform}$\;(\lambda_i \sim \mathcal{N}(10,5))$}\\ \textbf{Method} & $\alpha$ & $\rho$ & Test acc \% ($t_h$) & Loss ($t_h$) & $\alpha$ & $\rho$ & Test acc \% ($t_h$) & Loss  ($t_h$)\\ 
\midrule \multirow{2}{*}{\textsc{S-FedAvg}}  & $10^0$ & $0.15$ & $89.34$ ($88.90$ - $90.00$) & $0.65$ ($0.64$ - $0.65$)& $10^0$ & $0.34$ & $86.86$ ($67.80$ - $90.24$) & $0.74$ ($0.60$ - $1.37$) \\
\multirow{2}{*}{\textsc{S-FedAvg}}  & $10^1$ & $0.06$ & $84.17$ ($77.35$ - $88.94$) & $0.85$ ($0.60$ - $1.21$)
& $10^1$ & $0.16$ & $84.70$ ($65.53$ - $90.78$) & $1.00$ ($0.53$ - $3.90$) \\
\midrule
\multirow{2}{*}{\textsc{AS-FedAvg}}  & $10^0$ & $0.13$ & $89.72$ ($89.13$ - $90.26$) & $0.61$ ($0.60$ - $0.62$)
& $10^0$ & $0.19$ & $87.39$ ($69.92$ - $90.30$) & $0.70$ ($0.58$ - $1.59$) \\
\multirow{2}{*}{\textsc{AS-FedAvg}}  & $10^1$ & $0.08$ & $84.94$ ($79.78$ - $89.68$) & $1.38$ ($0.99$ - $2.07$)
& $10^1$ & $0.09$ & $81.75$ ($65.08$ - $89.97$) & $1.76$ ($0.90$ - $5.66$) \\
\midrule
\multirow{2}{*}{\textsc{FedBuff}}   & $10^{-1}$ & $>1$ & $60.43$ ($57.18$ - $65.16$) & $1.55$ ($1.43$ - $1.64$) & $10^{-1}$ & $>1$ & $58.53$ ($37.87$ - $64.27$) & $1.67$ ($1.44$ - $2.76$) \\
\multirow{2}{*}{\textsc{FedBuff}}   & $10^0$ & \boldmath{$0.05$} & \boldmath{$90.80$} ($89.42$ - $91.48$) & \boldmath{$0.48$} ($0.46$ - $0.53$) & $10^0$ & \boldmath{$0.08$} & \boldmath{$89.20$} ($82.85$ - $91.33$) & \boldmath{$0.52$} ($0.43$ - $0.70$) \\
\midrule
\multirow{2}{*}{\textsc{AREA}} & $10^0$ & $0.15$ & $89.30$ ($88.87$ - $89.69$) & $0.65$ ($0.64$ - $0.66$) 
& $10^0$ & $0.16$ & $89.14$ ($87.75$ - $90.04$) & $0.65$ ($0.61$ - $0.71$) \\
\multirow{2}{*}{\textsc{AREA}}   & $10^1$ & $0.07$ & $84.66$ ($74.37$ - $90.43$) & $0.96$ ($0.56$ - $2.57$)
 & $10^1$ & \boldmath{$0.08$} & $84.27$ ($67.71$ - $89.69$) & $0.90$ ($0.60$ - $2.04$) \\
\bottomrule 
\end{tabularx}
\caption{Performance for $n=4$ and $M=1$ local SGD step ($t_h=100$)}
\label{tab:M_1_n_4_non_iid}
\end{table}
\textbf{One local SGD step.} In Table \ref{tab:M_1_n_4_non_iid}, \textsc{FedBuff} with $\alpha=1$ is the first method to cross the $80\%$ test accuracy threshold, and attains the lowest final loss and highest final test accuracy out of all methods. A comparison with Table  \ref{tab:M_1_n_128_non_iid}  ($n=128$) suggests that \textsc{FedBuff} benefits from smaller client populations in the $M=1$ regime, whereas \textsc{AREA} scales better to medium and large settings.

\begin{table}[t] \tiny \setlength\tabcolsep{3pt} 
\centering 
\begin{tabularx}{\textwidth}{ C{1.1cm} | C{0.4cm} C{0.5cm} C{2.4cm} C{2.2cm} | C{0.4cm} C{0.5cm} C{2.4cm} C{2.2cm}}
\toprule
 & \multicolumn{4}{c|}{\textbf{Uniform}$\;(\lambda_i = 10)$} & \multicolumn{4}{c}{\textbf{Non-uniform}$\;(\lambda_i \sim \mathcal{N}(10,5))$}\\ \textbf{Method} & $\alpha$ & $\rho$ & Test acc \% ($t_h$) & Loss ($t_h$) & $\alpha$ & $\rho$ & Test acc \% ($t_h$) & Loss  ($t_h$)\\ 
\midrule \multirow{2}{*}{\textsc{S-FedAvg}} & $10^0$ & $0.13$ & \boldmath{$87.59$} ($86.59$ - $88.35$) & \boldmath{$0.48$} ($0.46$ - $0.49$) & $10^0$ & $0.26$ & \boldmath{$86.17$} ($75.78$ - $87.69$) & \boldmath{$0.56$} ($0.46$ - $1.01$) \\
\multirow{2}{*}{\textsc{S-FedAvg}}  & $10^1$ & $0.07$ & $82.95$ ($78.47$ - $85.66$) & $1.27$ ($1.09$ - $1.71$) & $10^1$ & \boldmath{$0.11$} & $82.93$ ($77.05$ - $85.61$) & $1.27$ ($1.07$ - $1.76$) \\
\midrule
\multirow{2}{*}{\textsc{AS-FedAvg}}  & $10^0$ & $0.72$ & $75.36$ ($53.02$ - $88.79$) & $0.80$ ($0.44$ - $1.49$) & $10^0$ & $>1$ & $70.48$ ($49.17$ - $88.10$) & $1.01$ ($0.43$ - $1.74$) \\
\multirow{2}{*}{\textsc{AS-FedAvg}}  & $10^1$ & $>1$ & $64.83$ ($51.04$ - $79.13$) & $3.80$ ($1.50$ - $10.22$)
& $10^1$ & $>1$ & $63.72$ ($42.68$ - $76.98$) & $4.13$ ($2.05$ - $8.45$) \\
\midrule
\multirow{2}{*}{\textsc{FedBuff}}   & $10^{-1}$ & $0.92$ & $80.76$ ($78.67$ - $81.98$) & $0.93$ ($0.91$ - $0.98$) & $10^{-1}$ & $>1$ & $77.12$ ($66.49$ - $81.93$) & $1.04$ ($0.87$ - $1.33$) \\
\multirow{2}{*}{\textsc{FedBuff}}   & $10^0$ & $>1$ & $55.13$ ($27.04$ - $78.08$) & $2.80$ ($0.79$ - $6.25$)
& $10^0$ & $>1$ & $57.48$ ($37.98$ - $84.94$) & $2.08$ ($0.52$ - $3.79$) \\
\midrule
\multirow{2}{*}{\textsc{AREA}}   & $10^0$ & $0.13$ & $87.46$ ($86.12$ - $88.38$) & \boldmath{$0.48$} ($0.46$ - $0.52$)& $10^0$ & $0.21$ & $85.83$ ($53.72$ - $88.55$) & \boldmath{$0.56$} ($0.45$ - $1.65$) \\
\multirow{2}{*}{\textsc{AREA}}  & $10^1$ & \boldmath{$0.06$} & $82.88$ ($76.87$ - $85.19$) & $1.27$ ($1.08$ - $1.63$) & $10^1$ & $0.13$ & $82.65$ ($73.56$ - $84.81$) & $1.28$ ($1.11$ - $1.92$) \\
\bottomrule 
\end{tabularx}
\caption{Performance for $n=4$ and $M=50$ local SGD steps ($t_h=100$)}
\label{tab:M_50_n_4_non_iid}
\end{table}
\textbf{Fifty local steps.}
In Table \ref{tab:M_50_n_4_non_iid}, the introduction of multiple local SGD steps amplifies client drift and stochastic gradient errors and destabilizes \textsc{FedBuff}, especially under non-uniform client update frequencies. \textsc{S-FedAvg} achieves the best performance in this setting, followed by \textsc{AREA}.

\section{Numerical results for IID data}
\label{sec:IID_sim}

Although the case of IID local data is outside the scope of this work, we include an evaluation of AREA and the baselines under this regime in this appendix. As in the experiments for non-IID data, we vary the number of clients $n \in \{4,128\}$ and the number of local SGD steps $M \in \{1,50\}$. All remaining settings are identical to those used in Section \ref{sec:log_reg} for $n=128$ and Appendix \ref{sec:n4_appendix} for $n=4$. In the experiments reported in this appendix \textsc{AREA} does not always outperform the baselines, but consistently remains a close second. More broadly, \textsc{AREA} exhibits versatile and robust performance across a wide range of settings and hyper-parameter choices.

\subsection{Cross-device FL ($n=128$)}
Because a perfectly uniform label split is not possible for MNIST when $n=128$, we constructed a nearly-IID partition as follows: the first 127 clients receive identical label proportions, while the final client is assigned the remaining samples (Fig. \ref{fig:distro_128_iid}).
\begin{figure}[!ht] \centering \includegraphics[width=0.55\linewidth]{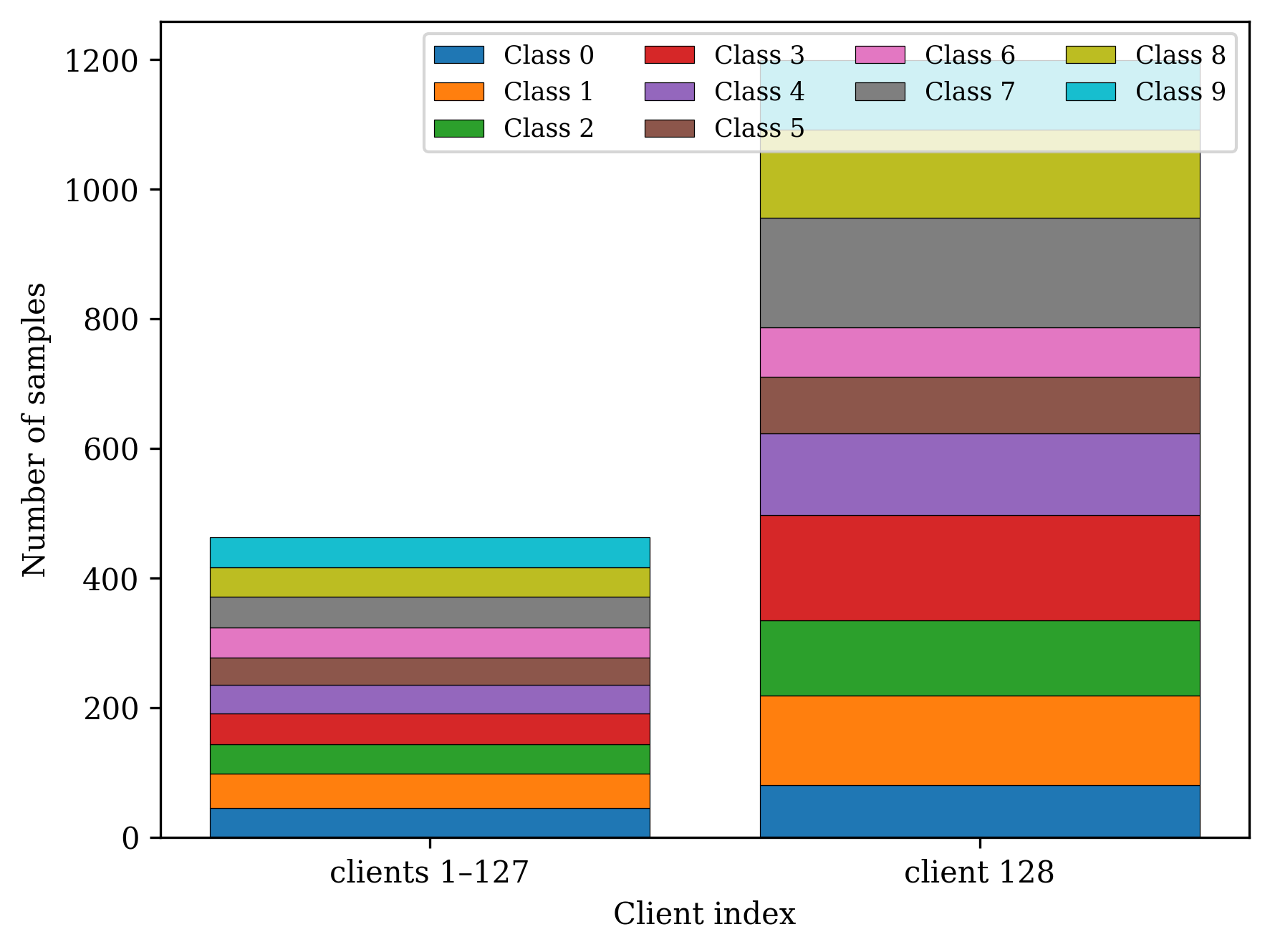} \caption{Nearly-IID label distribution for 128 clients: clients 1–127 have identical proportions; client 128 contains the leftover samples.} \label{fig:distro_128_iid} \end{figure}

\begin{table}[t] \tiny \setlength\tabcolsep{3pt} 
\centering 
\begin{tabularx}{\textwidth}{ C{1.1cm} | C{0.4cm} C{0.5cm} C{2.4cm} C{2.2cm} | C{0.4cm} C{0.5cm} C{2.4cm} C{2.2cm}}
\toprule
 & \multicolumn{4}{c|}{\textbf{Uniform}$\;(\lambda_i = 10)$} & \multicolumn{4}{c}{\textbf{Non-uniform}$\;(\lambda_i \sim \mathcal{N}(10,5))$}\\ \textbf{Method} & $\alpha$ & $\rho$ & Test acc \% ($t_h$) & Loss ($t_h$) & $\alpha$ & $\rho$ & Test acc \% ($t_h$) & Loss  ($t_h$)\\ 
\midrule \multirow{2}{*}{\textsc{S-FedAvg}} & $10^1$ & $>1$ & $62.86$ ($58.96$ - $65.74$) & $1.50$ ($1.40$ - $1.62$)
& $10^1$ & $>1$ & $41.51$ ($22.28$ - $56.14$) & $2.21$ ($1.77$ - $3.07$) \\
\multirow{2}{*}{\textsc{S-FedAvg}}  & $10^2$ & $0.42$ & $85.44$ ($82.94$ - $86.42$) & $0.79$ ($0.76$ - $0.85$)& $10^2$ & $>1$ & $73.16$ ($30.42$ - $84.82$) & $1.24$ ($0.84$ - $3.20$) \\
\midrule
\multirow{2}{*}{\textsc{AS-FedAvg}}  & $10^1$ & $>1$ & $74.71$ ($72.81$ - $76.46$) & $1.14$ ($1.11$ - $1.20$)& $10^1$ & $>1$ & $75.28$ ($68.76$ - $77.54$) & $1.13$ ($1.06$ - $1.31$) \\
\multirow{2}{*}{\textsc{AS-FedAvg}}  & $10^2$ & $0.18$ & $88.80$ ($87.90$ - $89.49$) & $0.67$ ($0.65$ - $0.68$)& $10^2$ & $0.16$ & $88.70$ ($87.33$ - $89.50$) & $0.66$ ($0.64$ - $0.70$) \\
\midrule
\multirow{2}{*}{\textsc{FedBuff}}   & $10^0$ & $0.44$ & $85.64$ ($84.94$ - $86.11$) & $0.80$ ($0.79$ - $0.82$)& $10^0$ & $0.44$ & $85.64$ ($84.90$ - $86.44$) & $0.81$ ($0.79$ - $0.82$) \\
\multirow{2}{*}{\textsc{FedBuff}}    & $10^1$ & \boldmath{$0.04$} & $83.36$ ($68.14$ - $91.03$) & $1.26$ ($0.76$ - $2.80$) & $10^1$ & \boldmath{$0.03$} & $85.31$ ($76.51$ - $89.35$) & $0.99$ ($0.61$ - $2.50$) \\
\midrule
\multirow{2}{*}{\textsc{AREA}}   &  -&- &   -& -
& $10^2$ & $0.53$ & $84.14$ ($80.47$ - $85.80$) & $0.85$ ($0.79$ - $0.98$) \\
\multirow{2}{*}{\textsc{AREA}}  & $10^3$ & $0.05$ & \boldmath{$90.20$} ($88.20$ - $91.53$) & \boldmath{$0.48$} ($0.45$ - $0.52$)& $10^3$ & $0.08$ & \boldmath{$89.95$} ($86.77$ - $91.39$) & \boldmath{$0.54$} ($0.48$ - $0.63$) \\
\bottomrule 
\end{tabularx}
\caption{Performance for $n=128$ and $M=1$ local SGD step ($t_h=15$, IID data)}
\label{tab:M_1_n_128_iid}
\end{table}

\textbf{One local step.} In Table \ref{tab:M_1_n_128_iid}, \textsc{FedBuff} crosses the $80\%$ test accuracy threshold slightly faster than \textsc{AREA}; however, \textsc{AREA} achieves higher final test accuracy and lower final loss in every case. As in Table \ref{tab:M_1_n_128_non_iid}, we observe that  non-uniform update rates have a pronounced negative effect on \textsc{S-FedAvg} while the other methods remain relatively stable across regimes.

\begin{table}[t] \tiny \setlength\tabcolsep{3pt} 
\centering 
\begin{tabularx}{\textwidth}{ C{1.1cm} | C{0.4cm} C{0.5cm} C{2.4cm} C{2.2cm} | C{0.4cm} C{0.5cm} C{2.4cm} C{2.2cm}}
\toprule
 & \multicolumn{4}{c|}{\textbf{Uniform}$\;(\lambda_i = 10)$} & \multicolumn{4}{c}{\textbf{Non-uniform}$\;(\lambda_i \sim \mathcal{N}(10,5))$}\\ \textbf{Method} & $\alpha$ & $\rho$ & Test acc \% ($t_h$) & Loss ($t_h$) & $\alpha$ & $\rho$ & Test acc \% ($t_h$) & Loss  ($t_h$)\\ 
\midrule \multirow{2}{*}{\textsc{S-FedAvg}} & $10^2$ & $0.15$ & $89.60$ ($88.17$ - $90.39$) & $0.57$ ($0.51$ - $0.66$)& $10^2$ & $0.30$ & $87.76$ ($81.34$ - $90.49$) & $0.68$ ($0.52$ - $0.94$) \\
\multirow{2}{*}{\textsc{S-FedAvg}}  & $10^3$ & $0.18$ & $84.57$ ($75.58$ - $87.94$) & $4.01$ ($2.66$ - $13.51$)& $10^3$ & $0.33$ & $85.22$ ($79.39$ - $87.96$) & $3.37$ ($2.80$ - $4.39$) \\
\midrule
\multirow{2}{*}{\textsc{AS-FedAvg}}  & $10^1$ & $0.22$ & $88.05$ ($87.40$ - $88.58$) & $0.71$ ($0.69$ - $0.72$)
& $10^1$ & $0.21$ & $88.01$ ($87.41$ - $88.66$) & $0.70$ ($0.69$ - $0.73$) \\
\multirow{2}{*}{\textsc{AS-FedAvg}}  & $10^2$ & \boldmath{$0.00$} & $90.77$ ($90.04$ - $91.24$) & \boldmath{$0.44$} ($0.43$ - $0.48$)
& $10^2$ & \boldmath{$0.00$} & \boldmath{$90.75$} ($90.32$ - $91.31$) & \boldmath{$0.44$} ($0.42$ - $0.46$) \\
\midrule
\multirow{2}{*}{\textsc{FedBuff}}    & $10^{-1}$ & $>1$ & $43.42$ ($37.16$ - $48.46$) & $2.13$ ($1.96$ - $2.35$)& $10^{-1}$ & $>1$ & $42.39$ ($36.06$ - $48.00$) & $2.19$ ($1.96$ - $2.48$) \\
\multirow{2}{*}{\textsc{FedBuff}}     & $10^0$ & $0.15$ & $90.53$ ($89.48$ - $91.03$) & $0.62$ ($0.59$ - $0.66$)& $10^0$ & $0.14$ & $90.71$ ($90.25$ - $91.04$) & $0.61$ ($0.56$ - $0.69$) \\
\midrule
\multirow{2}{*}{\textsc{AREA}}  & $10^2$ & $0.14$ & \boldmath{$90.87$} ($90.67$ - $91.11$) & $0.51$ ($0.50$ - $0.52$)& $10^2$ & $0.17$ & $90.09$ ($88.98$ - $90.61$) & $0.57$ ($0.54$ - $0.61$) \\
\multirow{2}{*}{\textsc{AREA}} & $10^3$ & $0.04$ & $89.58$ ($89.15$ - $89.77$) & $2.48$ ($2.41$ - $2.55$)& $10^3$ & $0.04$ & $89.47$ ($89.27$ - $89.66$) & $2.42$ ($2.34$ - $2.50$) \\
\bottomrule 
\end{tabularx}
\caption{Performance for $n=128$ and $M=50$ local SGD steps ($t_h=2$, IID data)}
\label{tab:M_50_n_128_iid}
\end{table}

\textbf{Fifty local steps.} In Table \ref{tab:M_50_n_128_iid}, the performance of multiple local SGD steps markedly boosts the performance of \textsc{AS-FedAvg} in the IID setting. \textsc{AREA} remains a close second in this regime. 

\subsection{Cross-silo FL ($n=4$)}
For $n=4$ clients, we obtain a perfectly IID split of the training dataset.

\begin{table}[t] \tiny \setlength\tabcolsep{3pt} 
\centering 
\begin{tabularx}{\textwidth}{ C{1.1cm} | C{0.4cm} C{0.5cm} C{2.4cm} C{2.2cm} | C{0.4cm} C{0.5cm} C{2.4cm} C{2.2cm}}
\toprule
 & \multicolumn{4}{c|}{\textbf{Uniform}$\;(\lambda_i = 10)$} & \multicolumn{4}{c}{\textbf{Non-uniform}$\;(\lambda_i \sim \mathcal{N}(10,5))$}\\ \textbf{Method} & $\alpha$ & $\rho$ & Test acc \% ($t_h$) & Loss ($t_h$) & $\alpha$ & $\rho$ & Test acc \% ($t_h$) & Loss  ($t_h$)\\ 
\midrule \multirow{2}{*}{\textsc{S-FedAvg}} & $10^0$ & $0.15$ & $89.39$ ($88.95$ - $89.81$) & $0.64$ ($0.63$ - $0.65$)& $10^0$ & $0.32$ & $86.87$ ($63.53$ - $89.94$) & $0.74$ ($0.61$ - $1.45$) \\
\multirow{2}{*}{\textsc{S-FedAvg}}  & $10^1$ & $0.05$ & $85.40$ ($75.49$ - $89.57$) & $0.83$ ($0.56$ - $1.75$)  & $10^1$ & $0.51$ & $79.56$ ($15.37$ - $90.78$) & $1.42$ ($0.54$ - $8.56$) \\
\midrule
\multirow{2}{*}{\textsc{AS-FedAvg}}  & $10^0$ & $0.11$ & $90.00$ ($89.65$ - $90.31$) & $0.60$ ($0.59$ - $0.62$)
& $10^0$ & $0.13$ & $89.78$ ($88.17$ - $90.85$) & $0.62$ ($0.56$ - $0.70$) \\
\multirow{2}{*}{\textsc{AS-FedAvg}}  & $10^1$ & $0.05$ & $81.81$ ($64.96$ - $89.18$) & $1.09$ ($0.63$ - $2.79$)
& $10^1$ & $0.05$ & $83.48$ ($55.73$ - $89.90$) & $0.94$ ($0.61$ - $2.84$) \\
\midrule
\multirow{2}{*}{\textsc{FedBuff}}    & $10^{-1}$ & $>1$ & $60.38$ ($55.22$ - $63.03$) & $1.57$ ($1.52$ - $1.71$) & $10^{-1}$ & $>1$ & $60.07$ ($38.34$ - $68.07$) & $1.58$ ($1.33$ - $2.23$) \\
\multirow{2}{*}{\textsc{FedBuff}}     & $10^0$ & \boldmath{$0.04$} & \boldmath{$91.19$} ($90.75$ - $91.69$) & \boldmath{$0.46$} ($0.45$ - $0.48$)& $10^0$ & \boldmath{$0.04$} & \boldmath{$91.26$} ($90.67$ - $91.78$) & \boldmath{$0.46$} ($0.42$ - $0.53$) \\
\midrule
\multirow{2}{*}{\textsc{AREA}}  & $10^0$ & $0.15$ & $89.27$ ($88.87$ - $89.57$) & $0.65$ ($0.64$ - $0.66$) 
& $10^0$ & $0.26$ & $87.88$ ($83.75$ - $90.33$) & $0.71$ ($0.61$ - $0.88$) \\
\multirow{2}{*}{\textsc{AREA}} & $10^1$ & \boldmath{$0.04$} & $85.99$ ($75.94$ - $90.05$) & $0.67$ ($0.49$ - $1.37$)& $10^1$ & $0.06$ & $86.97$ ($81.21$ - $89.95$) & $0.68$ ($0.48$ - $2.04$) \\
\bottomrule 
\end{tabularx}
\caption{Performance for $n=4$ and $M=1$ local SGD step ($t_h=100$, IID data)}
\label{tab:M_1_n_4_iid}
\end{table}

\textbf{One local step.}
In Table \ref{tab:M_1_n_4_iid}, \textsc{FedBuff} consistently outperforms the remaining methods both with respect to convergence speed and final test accuracy and loss. The remaining methods achieve comparable performance.

\begin{table}[t] \tiny \setlength\tabcolsep{3pt} 
\centering 
\begin{tabularx}{\textwidth}{ C{1.1cm} | C{0.4cm} C{0.5cm} C{2.4cm} C{2.2cm} | C{0.4cm} C{0.5cm} C{2.4cm} C{2.2cm}}
\toprule
 & \multicolumn{4}{c|}{\textbf{Uniform}$\;(\lambda_i = 10)$} & \multicolumn{4}{c}{\textbf{Non-uniform}$\;(\lambda_i \sim \mathcal{N}(10,5))$}\\ \textbf{Method} & $\alpha$ & $\rho$ & Test acc \% ($t_h$) & Loss ($t_h$) & $\alpha$ & $\rho$ & Test acc \% ($t_h$) & Loss  ($t_h$)\\ 
\midrule \multirow{2}{*}{\textsc{S-FedAvg}} & $10^0$ & $0.02$ & \boldmath{$91.86$} ($91.66$ - $92.08$) & $0.39$ ($0.38$ - $0.40$) & $10^0$ & $0.11$ & $91.15$ ($87.88$ - $92.12$) & $0.48$ ($0.38$ - $0.73$) \\
\multirow{2}{*}{\textsc{S-FedAvg}}  & $10^1$ & $0.02$ & $89.01$ ($84.38$ - $90.70$) & $0.89$ ($0.77$ - $1.25$)
& $10^1$ & $0.06$ & $88.55$ ($82.65$ - $90.52$) & $0.96$ ($0.80$ - $1.66$) \\
\midrule
\multirow{2}{*}{\textsc{AS-FedAvg}}  & $10^0$ & \boldmath{$0.01$} & $91.81$ ($91.34$ - $92.18$) & \boldmath{$0.37$} ($0.36$ - $0.38$)& $10^0$ & \boldmath{$0.01$} & \boldmath{$91.71$} ($91.21$ - $91.98$) & \boldmath{$0.38$} ($0.36$ - $0.43$) \\
\multirow{2}{*}{\textsc{AS-FedAvg}}  & $10^1$ & \boldmath{$0.01$} & $85.04$ ($76.10$ - $89.73$) & $1.18$ ($0.86$ - $1.88$)& $10^1$ & $0.01$ & $84.65$ ($79.72$ - $90.16$) & $1.18$ ($0.94$ - $1.53$) \\
\midrule
\multirow{2}{*}{\textsc{FedBuff}}     & $10^{-1}$ & $0.47$ & $85.32$ ($84.64$ - $86.11$) & $0.82$ ($0.80$ - $0.85$) & $10^{-1}$ & $0.45$ & $85.65$ ($81.55$ - $87.61$) & $0.81$ ($0.74$ - $0.92$) \\
\multirow{2}{*}{\textsc{FedBuff}}     & $10^0$ & \boldmath{$0.01$} & $86.64$ ($66.85$ - $91.68$) & $0.52$ ($0.36$ - $1.26$) & $10^0$ & $0.01$ & $87.49$ ($75.13$ - $91.84$) & $0.48$ ($0.36$ - $0.85$) \\
\midrule
\multirow{2}{*}{\textsc{AREA}} & $10^0$ & $0.03$ & $91.84$ ($91.58$ - $92.05$)  & $0.39$ ($0.39$ - $0.39$)& $10^0$ & $0.06$ & $91.25$ ($86.54$ - $92.10$) & $0.46$ ($0.38$ - $0.75$) \\
\multirow{2}{*}{\textsc{AREA}}  & $10^1$ & \boldmath{$0.01$} & $89.17$ ($87.25$ - $90.44$) & $0.88$ ($0.78$ - $0.99$)& $10^1$ & $0.02$ & $88.41$ ($79.40$ - $90.59$) & $0.95$ ($0.80$ - $1.59$) \\
\bottomrule 
\end{tabularx}
\caption{Performance for $n=4$ and $M=50$ local SGD steps ($t_h=100$, IID data)}
\label{tab:M_50_n_4_iid}
\end{table}
\textbf{Fifty local steps.} In Table \ref{tab:M_50_n_4_iid} all methods successfully cross the $90\%$ test accuracy threshold with the exception of \textsc{FedBuff}. A similar effect was observed in Table \ref{tab:M_50_n_4_non_iid} ($n=4$, $M=50$, non-IID data), indicating that \textsc{FedBuff} is particularly sensitive to client drift effects in the small cross-silo regime. The remaining methods perform comparably.

% % % % %%%%%%%%%%%%%%%%%%%%%%%%%%%%%%%%%%%%%%%%%%%%%%%%%%%%%%%%%%%%%%%%%%%%%%%%%%%%%%%
% % % % %%%%%%%%%%%%%%%%%%%%%%%%%%%%%%%%%%%%%%%%%%%%%%%%%%%%%%%%%%%%%%%%%%%%%%%%%%%%%%%

% % % % %%%%%%%%%%%%%%%%%%%%%%%%%%%%%%%%%%%%%%%%%%%%%%%%%%%%%%%%%%%%

\end{document}